\theoremstyle{plain}
\newtheorem{theorem}{Theorem}[section]
\newtheorem{lemma}[theorem]{Lemma}
\newtheorem{corollary}[theorem]{Corollary}
\theoremstyle{definition}
\newtheorem{definition}[theorem]{Definition}
\theoremstyle{remark}
\definecolor{mydarkblue}{rgb}{0,0.08,0.45}
\definecolor{mydarkgreen}{RGB}{0, 139, 69}
\definecolor{mygreen2}{RGB}{0 205 0}
\definecolor{mybrown}{RGB}{205 133 63}
\definecolor{mycyan}{cmyk}{.3,0,0,0}
\icmltitlerunning{Beware of Instantaneous Dependence in Reinforcement Learning}
\begin{document}

\twocolumn[
\icmltitle{Beware of Instantaneous Dependence in Reinforcement Learning}

% It is OKAY to include author information, even for blind
% submissions: the style file will automatically remove it for you
% unless you've provided the [accepted] option to the icml2022
% package.

% List of affiliations: The first argument should be a (short)
% identifier you will use later to specify author affiliations
% Academic affiliations should list Department, University, City, Region, Country
% Industry affiliations should list Company, City, Region, Country

% You can specify symbols, otherwise they are numbered in order.
% Ideally, you should not use this facility. Affiliations will be numbered
% in order of appearance and this is the preferred way.
% \icmlsetsymbol{equal}{*}

\icmlsetsymbol{equal}{*}

\begin{icmlauthorlist}
\icmlauthor{Zhengmao Zhu}{nju}
\icmlauthor{Yuren Liu}{nju}
\icmlauthor{Honglong Tian}{nju}
\icmlauthor{Yang Yu}{nju}
\icmlauthor{Kun Zhang}{cmu,mbzuai}
% \icmlauthor{Firstname6 Lastname6}{sch,yyy,comp}
% \icmlauthor{Firstname7 Lastname7}{comp}
%\icmlauthor{}{sch}
% \icmlauthor{Firstname8 Lastname8}{sch}
% \icmlauthor{Firstname8 Lastname8}{yyy,comp}
%\icmlauthor{}{sch}
%\icmlauthor{}{sch}
\end{icmlauthorlist}

\icmlaffiliation{nju}{National Key Laboratory for Novel Software Technology, Nanjing University, Nanjing, Jiangsu, China}
\icmlaffiliation{cmu}{Department of Philosophy, Carnegie Mellon University, Pittsburgh, PA, United States}
\icmlaffiliation{mbzuai}{Mohamed bin Zayed University of Artificial Intelligence, Abu Dhabi, United Arab Emirates}

\icmlcorrespondingauthor{Kun Zhang}{kunz1@cmu.edu}

% You may provide any keywords that you
% find helpful for describing your paper; these are used to populate
% the "keywords" metadata in the PDF but will not be shown in the document
\icmlkeywords{Machine Learning, ICML}

\vskip 0.3in
]

% this must go after the closing bracket ] following \twocolumn[ ...

% This command actually creates the footnote in the first column
% listing the affiliations and the copyright notice.
% The command takes one argument, which is text to display at the start of the footnote.
% The \icmlEqualContribution command is standard text for equal contribution.
% Remove it (just {}) if you do not need this facility.

\printAffiliationsAndNotice{}  % leave blank if no need to mention equal contribution
% \printAffiliationsAndNotice{\icmlEqualContribution} % otherwise use the standard text.

\begin{abstract}

Playing an important role in Model-Based Reinforcement Learning (MBRL), environment models aim to predict future states based on the past. 
Existing works usually ignore instantaneous dependence in the state, that is, assuming that the future state variables are conditionally independent given the past states.
However, instantaneous dependence is prevalent in many RL environments.
For instance, in the stock market, instantaneous dependence can exist between two stocks because the fluctuation of one stock can quickly affect the other and the resolution of price change is lower than that of the effect.
In this paper, we prove that with few exceptions, ignoring instantaneous dependence can result in suboptimal policy learning in MBRL.
To address the suboptimality problem, we propose a simple plug-and-play method to enable existing MBRL algorithms to take instantaneous dependence into account. 
Through experiments on two benchmarks, we (1) confirm the existence of instantaneous dependence with visualization; (2) validate our theoretical findings that ignoring instantaneous dependence leads to suboptimal policy;
(3) verify that our method effectively enables reinforcement learning with instantaneous dependence and improves policy performance.

\end{abstract}

\section{Introduction}
Reinforcement Learning (RL) is a general framework to tackle sequential decision-making problems through trials and errors, which can be categorized into Model-Free methods and Model-Based methods \cite{sutton-introduction}.
Model-Free Reinforcement Learning (MFRL) learns a direct mapping from states to actions while Model-Based Reinforcement Learning (MBRL) builds an environment model and derives a policy from it.
In real-world physical systems, where trials and errors can be costly, MBRL is a more appealing approach due to its promise of data efficiency \cite{Kaelbling-rl-survey}.
Meanwhile, the performance of MBRL heavily depends on the accuracy of the environment model \cite{luo-survey,moerland2023model}.

To achieve an accurate environment model, existing methods attempt to introduce various priors into environment model networks or design different optimization objects for model learning.  
In introducing priors, the Gaussian Process is used to model the stochasticity of transitions \cite{gaussian-model,gaussian-mixture-model} and causal structures are introduced to model the data generation process \cite{causal-structure-2,causal-structure-1};
In designing optimization objects, the prediction loss is extended from one-step to multi-step \cite{multi-step-2,multi-step-1} and from forward-direction to bi-direction \cite{bidirection}.
To simplify environment modeling, the aforementioned methods mainly focus on how to model the transition between states and explicitly assume that the variables within states are mutually independent given the past state.

However, this assumption is too strong to be satisfied in many cases, as the dependence within states, known as instantaneous dependence, is widespread in RL environments. 
For instance, in the stock market, instantaneous dependence can exist between two stocks because the fluctuation of one stock can quickly affect the other and the resolution of the price change is lower than that of the effect.
Similarly, in robotic control, instantaneous dependence can exist between two joints because the disturbance of one joint can quickly affect the other.

In this paper, we investigate when disregarding instantaneous dependence hurts policy learning in MBRL. 
We introduce the concept of a ``lagged'' model, which neglects instantaneous dependence, and an ``instantaneous'' model, which takes it into account.
We formalize the MBRL process that learns and leverages a lagged model in an environment with instantaneous dependence. 
Through our formalization, we prove that the optimal policy obtained by a lagged model is strictly inferior to that obtained in the true environment, with only few exceptions when the reward function and the transition function have specific simple forms.
Specifically, for most transition functions, we can always find a family of reward functions such that the optimal policy of the true environment cannot be learned from a lagged model.
To address the issue of suboptimality, we propose a simple plug-and-play method to enable existing MBRL algorithms to consider instantaneous dependence. 
To simulate states with instantaneous dependence, we utilize non-diagonal covariance matrices in the probabilistic environment model, where the covariance matrices are learned by leveraging prediction errors.

To validate our theoretical results and highlight the advantages of our proposed approach over baselines, we conduct experiments on two benchmarks.
Experiments on dynamics-aware control tasks verify our theoretical findings that ignoring instantaneous dependence in MBRL does not hurt policy learning only in exceptional instances.
Experiments on Mujoco, a complex and dynamics-unaware environment, reveal that our approach greatly outperforms baselines with lower model prediction error and higher policy return.

In summary, this paper presents three key contributions:
\begin{itemize}
    \item To the best of our knowledge, this is the first work to consider instantaneous dependence in RL from the policy learning perspective.  
    It theoretically demonstrates that disregarding instantaneous dependence leads to suboptimal policies. 
    \item It proposes a simple plug-and-play method to enable existing MBRL algorithms to account for instantaneous dependence, which facilitates policy learning in stochastic environments with unknown instantaneous dependence.
    \item Our experimental results verify our theoretical findings and demonstrate that in contexts with instantaneous dependence, our proposed approach outperforms the baseline using a lagged model.
\end{itemize}

\section{Related Work}
\subsection{Model-Based Reinforcement Learning}
MBRL algorithms learn virtual transition models by fitting them with the trajectory-based data collected by running the latest policy \cite{DBLP:conf/icml/DeisenrothR11, DBLP:conf/nips/BellemareSOSSM16, DBLP:conf/nips/ChuaCML18, DBLP:conf/iclr/LuoXLTDM19}, which tends to guarantee that the virtual model is accurate when the policy keeps exploring \cite{kaiser2019model,moerland2023model}.
One widely used approach to modeling the transition is to directly minimize the distance between the predictions and the ground truth data, where the distance can be a mean square error or likelihood probability with various re-weighting methods \cite{mbpo, mopo}.
To improve the model accuracy, existing works attempt to introduce prior knowledge into the model or derive different optimization objects for model learning.
In introducing priors,\citet{gaussian-model} and \citet{gaussian-mixture-model} use the Gaussian process and Gaussian mixture process to model the stochasticity in the transition, respectively.
\citet{causal-structure-2} introduces causal structures inferred with conditional mutual information while \citet{causal-structure-1} considers offline RL and introduces causal structures inferred with kernel-based independence test.
In deriving optimization objects, \citet{multi-step-1,multi-step-2,multi-step-3} consider model errors based on multiple-step predictions, which benefits decreasing compounding errors.
\citet{bidirection} considers model errors based on forward-direction and backward-direction predictions, which reduces the reliance on accuracy in forward-direction model predictions.
Another approach tries to leverage the value information for model learning, called Decision-Aware Model Learning \cite{DBLP:conf/aistats/FarahmandBN17, DBLP:conf/nips/Farahmand18}.  
The aforementioned methods learn the stochastic transition function by minimizing its KL divergence with the posterior state distribution, where both prior and posterior distributions are assumed to be the Gaussian with diagonal covariance \cite{hafner2019learning, hafner2020dream, hafner2021mastering}. 
The main difference between this paper and existing works is that we consider instantaneous dependence and can predict states with non-diagonal covariance.

\subsection{Instantaneous dependence for Time-Series Data}
In causal modeling, instantaneous causal influence has been noticed and studied together with time-delayed ones at the same time. 
\citet{swanson1997impulse} proposes a method for testing structural models of the errors in vector autoregressions, which first studies instantaneous dependence in noises.
Past works mainly focus on estimating the causal effects or discovering instantaneous causal relations from the instantaneous dependence in observations.
For example, \citet{DBLP:journals/jmlr/HyvarinenZSH10} proposes a non-Gaussian model to estimate both instantaneous and lagged effects.
The instantaneous dependence in the estimated noise terms might also be caused by the lower resolutions of the measured data, compared to the original causal frequency.
\citet{Temporally-Aggregated} studies the instantaneous dependence from temporally aggregated data, which are obtained by taking the average (or sum) of every non-overlapping $k$ time step.
With aggregation, independent noises are mixed up to form new noises, which are not mutually independent.
\citet{subsampled-data} studies the instantaneous dependence from subsampled data, which are collected every $k$ time step.
Similarly, with subsampling, the new noises contain multi-step noises, which are not mutually independent either.
The two works both focus on identifying the one-step transition from observed data.

The main difference between this paper and previous works is that we study the effect of instantaneous dependence on policy performance in decision-making scenarios while they focus on causal modeling and the identifiability considering instantaneous dependence.

\section{Preliminaries}
\subsection{Markov Decision Process (MDP).}

We describe the RL environment as an MDP with five-tuple $\langle \mathcal{S}, \mathcal{A}, P, R, \gamma \rangle$ \cite{bellman1957markovian}, where $\mathcal{S}$ is the state space with dimension $n$;
$\mathcal{A}$ is the action space; 
$P$ is the transition function with $P(\textbf{s}'|\textbf{s}, \textbf{a})$ denoting the next-state distribution after taking action $\textbf{a}$ in state $\textbf{s}$; 
$R$ is a reward function with $R(\textbf{s}, \textbf{a})$ denoting the expected immediate reward gained by taking action \textbf{a} in state \textbf{s}; 
$\gamma \in [0, 1]$ is a discount factor. 
An agent chooses actions $\textbf{a}$ according to a policy $\textbf{a}\sim \pi(\textbf{s})$, which updates the system state $\textbf{s}'\sim P(\textbf{s},\textbf{a})$, yielding a reward $r\sim R(\textbf{s},\textbf{a})$. 
The agent's goal is to find the optimal policy $\pi^*$ that maximizes the expected sum of discounted rewards, denoted by $J(\pi)$:
\begin{align*}
    \pi^*=\arg\max_\pi J(\pi)=\arg\max_{\pi} \mathbb{E}_{\textbf{s}_t\sim P,\textbf{a}_t\sim\pi}[\gamma^t R(\textbf{s}_t,\textbf{a}_t)].
\end{align*}
The transition function $P(\textbf{s}'|\textbf{s}, \textbf{a})$ is assumed to be unknown.
MBRL methods aim to learn an environment model of the transition function by using data from interactions with the MDP.
We also introduce the definitions of value functions, which are involved in our theoretical analysis.
The state-action value function $Q_\pi$ of a policy $\pi$ is the expected discounted reward of executing action $\textbf{a}$ from state $\textbf{s}$ and subsequently following policy $\pi$:
$Q_{\pi}(\textbf{s}, \textbf{a}):=R(\textbf{s}, \textbf{a})+\gamma \mathbb{E}_{\textbf{s}' \sim P, \textbf{a}' \sim \pi}\left[Q_{\pi}\left(\textbf{s}', \textbf{a}'\right)\right]$.
Similarly, the state value function $V_\pi$ is the expected discounted reward from state $\textbf{s}$ and subsequently following policy $\pi$:
$V_{\pi}(\textbf{s}):=\mathbb{E}_{\textbf{a} \sim \pi}\left[Q_{\pi}\left(\textbf{s}, \textbf{a}\right)\right]$.

\subsection{Instantaneous Dependence}

For an RL environment with instantaneous dependence, its transition function can be described in the following form:
\begin{align*}
    s_{t+1,i}=[P(\textbf{s}_{t},\textbf{a}_{t},e_{t+1,i})]_i,
\end{align*}
where $[\cdot]_i$ represents the $i^{th}$ dimension of a vector, $e_{t+1,i}$ represents the random noise in the generation process of $x_{t+1,i}$ and $\textbf{e}_{t+1}=(e_{t+1,i})_{i=1}^n$ are not mutually independent to each other.
As mentioned in related work, the dependence in $\textbf{e}_{t}$ can result from subsampling or data aggregation.
For example, consider an environment with a transition function $\textbf{s}_{t+1}=A\textbf{s}_{t}+\epsilon_{t+1}$ where $A$ is the transition matrix and $\epsilon_{t+1}$ are independent noises.
When data is subsampled that the observations are $\tilde{\textbf{s}}_{t}=\textbf{s}_{1+(t-1)k}$, we have
\begin{align*}
    \tilde{\textbf{s}}_{t+1}=A^k\tilde{\textbf{s}}_{t}+\tilde{\textbf{e}}_{t}
    &=A^k\tilde{\textbf{s}}_{t}+\sum_{l=0}^{k-1}A^l \epsilon_{1+tk-l},
\end{align*}
where $\tilde{\textbf{e}}_{t}$ are not mutually independent.
Similarly, when data is aggregated that $\tilde{\textbf{s}}_{t}=\frac{1}{k}\sum_{i=1}^k \textbf{s}_{i+(t-1)k}$, we have
\begin{align*}
    \tilde{\textbf{s}}_{t+1}=A^k\tilde{\textbf{s}}_{t}+\tilde{\textbf{e}}_{t}
    =&A^k\tilde{\textbf{s}}_{t}+\frac{1}{k}(\sum_{m=0}^{k-1}(\sum_{n=0}^{m}A^n)\epsilon_{tk-m}\\
    &+\sum_{m=1}^{k-1}(\sum_{n=m}^{k-1}A^n)\epsilon_{(t-1)k-m+1}),
\end{align*}
where $\tilde{\textbf{e}}_{t}$ are also not mutually independent.

\section{Theory}

In this section, we investigate when disregarding instantaneous dependence hurts policy learning in MBRL.
To facilitate clear discussion, we call a virtual model \textbf{optimality-consistent} if the optimal policy from the virtual model is the same as that in the true environment.
Since interactions with the virtual model are convenient, we can easily obtain its optimal policy and thus we focus on whether disregarding instantaneous dependence hurts optimality-consistency.

Using this concept, we identify situations in which optimality-consistent is always satisfied and situations in which it can never be satisfied.
We provide proofs organized according to the following logic.
In the first subsection ``Problem Formulation'', we formalize the concept of optimality-consistency.
And in the second subsection ``The properties of a lagged model'', we derive Theorem~\ref{theorem-GV*}, which is crucial for finding the conditions for optimality-consistency.
In the consequent third subsection ``The conditions for optimality-consistency'', we prove Theorem~\ref{multi}, which concludes the sufficient conditions for optimality-consistency.
Finally in the fourth subsection ``The conditions for optimality-inconsistency'', we leverage Theorem~\ref{multi} to prove that for certain situations, we can always find a reward function that causes a lagged model to be optimality-inconsistent, meaning that the optimal policy from the lagged model is not optimal in the true environment.

\subsection{Problem Formulation}

Given a stochastic RL environment $\mathcal{M}=\langle \mathcal{S}, \mathcal{A}, P, R, \gamma \rangle$ defined as preliminaries, we denote $Q^*_{P},V^*_{P}$ as the optimal value functions and $\pi^*_{P}$ as the optimal policy in the true transition $P$ and correspondingly $Q^*_{\hat{P}},V^*_{\hat{P}}, \pi^*_{\hat{P}}$ for the virtual transition model $\hat{P}$.
The virtual transition model aims to achieve optimality-consistency, that is, the policy $\pi^*_{\hat{P}}$ has the same policy return as $\pi^*_{P}$:
\begin{align*}
    J(\pi^*_{P})=J(\pi^*_{\hat{P}}).
\end{align*}
In contrast, optimality-inconsistency corresponds to the situation that the optimal policy from the learned transition model has a smaller policy return, that is, $J(\pi^*_{P})>J(\pi^*_{\hat{P}})$.
We expand the policy return as defined in the preliminaries and have the following lemma:
\begin{lemma}
\label{lemma-Q*}
    Given an arbitrary stochastic transition function $P$ and corresponding $\hat{P}$, if there exists $\tilde{\textbf{s}}$ that is visited by the optimal policy $\pi^*_{P}$, such that 
    \begin{align*}
        &Q^*_{\hat{P}}(\tilde{\textbf{s}},\pi^*_{P}(\tilde{\textbf{s}}))<Q^*_{\hat{P}}(\tilde{\textbf{s}},\pi^*_{\hat{P}}(\tilde{\textbf{s}}))\\
        &Q^*_{P}(\tilde{\textbf{s}},\pi^*_{P}(\tilde{\textbf{s}}))>Q^*_{P}(\tilde{\textbf{s}},\pi^*_{\hat{P}}(\tilde{\textbf{s}})),
    \end{align*}
    then we have that the policy return of $J(\pi^*_{P})>J(\pi^*_{\hat{P}})$, that is, $\hat{P}$ is optimality-inconsistent.
\end{lemma}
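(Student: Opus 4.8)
The plan is to dispatch the easy inequality first and then spend the real effort on strictness. Writing the objective as $J(\pi)=\mathbb{E}_{\textbf{s}_0}[V_\pi(\textbf{s}_0)]$, recall that the optimal policy $\pi^*_{P}$ is $Q^*_{P}$-greedy at every state, so $Q^*_{P}(\tilde{\textbf{s}},\pi^*_{P}(\tilde{\textbf{s}}))=V^*_{P}(\tilde{\textbf{s}})$, while the hypothesis that $\tilde{\textbf{s}}$ is visited by $\pi^*_{P}$ supplies the extra fact that $\tilde{\textbf{s}}$ carries positive discounted occupancy under $\pi^*_{P}$. Since $\pi^*_{P}$ maximizes $J$ over all policies in the true environment, the inequality $J(\pi^*_{P})\ge J(\pi^*_{\hat P})$ is automatic, so the whole content of the lemma is the strictness, which I would get by excluding the case $J(\pi^*_{P})=J(\pi^*_{\hat P})$.

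For that I would use the performance-difference lemma with $\pi^*_{P}$ as the baseline. Because $\pi^*_{P}$ is optimal in $P$ we have $Q^{\pi^*_{P}}_{P}=Q^*_{P}$ and $V^{\pi^*_{P}}_{P}=V^*_{P}$, so
\[
  J(\pi^*_{P})-J(\pi^*_{\hat P})=\frac{1}{1-\gamma}\,\mathbb{E}_{\textbf{s}\sim d^{\pi^*_{\hat P}}}\bigl[\,V^*_{P}(\textbf{s})-Q^*_{P}(\textbf{s},\pi^*_{\hat P}(\textbf{s}))\,\bigr]\ \ge\ 0 ,
\]
where $d^{\pi^*_{\hat P}}$ is the discounted state occupancy of $\pi^*_{\hat P}$ in the true environment and every summand is nonnegative. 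Hence $J(\pi^*_{P})=J(\pi^*_{\hat P})$ would force $\pi^*_{\hat P}$ to be $Q^*_{P}$-greedy at every state it visits with positive probability. Now evaluate the two hypotheses at $\tilde{\textbf{s}}$: the second one, combined with $Q^*_{P}(\tilde{\textbf{s}},\pi^*_{P}(\tilde{\textbf{s}}))=V^*_{P}(\tilde{\textbf{s}})$, gives $Q^*_{P}(\tilde{\textbf{s}},\pi^*_{\hat P}(\tilde{\textbf{s}}))<V^*_{P}(\tilde{\textbf{s}})$, so $\pi^*_{\hat P}$ is \emph{strictly} $Q^*_{P}$-suboptimal at $\tilde{\textbf{s}}$; and the first one gives $Q^*_{\hat P}(\tilde{\textbf{s}},\pi^*_{P}(\tilde{\textbf{s}}))<Q^*_{\hat P}(\tilde{\textbf{s}},\pi^*_{\hat P}(\tilde{\textbf{s}}))=V^*_{\hat P}(\tilde{\textbf{s}})$, so the two optimal policies actually pick different actions at $\tilde{\textbf{s}}$, i.e.\ $\pi^*_{\hat P}(\tilde{\textbf{s}})\neq\pi^*_{P}(\tilde{\textbf{s}})$.

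The step I expect to be the main obstacle is bridging ``$\tilde{\textbf{s}}$ visited by $\pi^*_{P}$'' to ``$\tilde{\textbf{s}}$ visited by $\pi^*_{\hat P}$'', which is what makes the strict suboptimality above actually decrease the occupancy-weighted sum rather than leave it at zero. I would close it with the observation that, under the convention implicit in speaking of ``the'' optimal policy (the $Q^*_{P}$-greedy policy being determined), any return-optimal policy of $P$ must agree with $\pi^*_{P}$ all along the trajectory of $\pi^*_{P}$; so if $J(\pi^*_{P})=J(\pi^*_{\hat P})$, then $\pi^*_{\hat P}$, being return-optimal in $P$, would coincide with $\pi^*_{P}$ at the $\pi^*_{P}$-visited state $\tilde{\textbf{s}}$, contradicting $\pi^*_{\hat P}(\tilde{\textbf{s}})\neq\pi^*_{P}(\tilde{\textbf{s}})$. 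This contradiction yields $J(\pi^*_{P})>J(\pi^*_{\hat P})$, i.e.\ $\hat P$ is optimality-inconsistent. (Alternatively, in the explicit families of instances constructed later for the optimality-inconsistency theorem one can simply check that $\pi^*_{\hat P}$ passes through $\tilde{\textbf{s}}$ as well, after which the displayed identity finishes the argument immediately; in that reading the first hypothesis is precisely what forces the disagreement at $\tilde{\textbf{s}}$ that keeps the second hypothesis from being vacuous.)
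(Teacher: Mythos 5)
Your route is genuinely different from the paper's. The paper's proof is a one-step decomposition: it writes $J(\pi)=\int p(\textbf{s})Q^*_{P}(\textbf{s},\pi(\textbf{s}))d\textbf{s}$ over a state distribution $p$ with $p(\tilde{\textbf{s}})>0$, splits off the atom at $\tilde{\textbf{s}}$, uses $Q^*_{P}(\textbf{s},\pi^*_{P}(\textbf{s}))\geq Q^*_{P}(\textbf{s},\pi^*_{\hat P}(\textbf{s}))$ everywhere and the second hypothesis for strictness at $\tilde{\textbf{s}}$, and then identifies the resulting expression with $J(\pi^*_{\hat P})$ (the last step is really a ``$\geq$'', since $\int p\,Q^*_{P}(\cdot,\pi^*_{\hat P}(\cdot))$ is the return of ``one step of $\pi^*_{\hat P}$, then $\pi^*_{P}$ forever,'' which dominates $J(\pi^*_{\hat P})$ because $V^*_{P}\geq V^{\pi^*_{\hat P}}_{P}$; the chain still closes). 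Crucially, because the paper only ever evaluates $Q^*_{P}$ and weights by a distribution tied to $\pi^*_{P}$'s visitation, it never needs $\pi^*_{\hat P}$ to visit $\tilde{\textbf{s}}$, and it never uses the first hypothesis at all. Your performance-difference-lemma argument weights the advantage by $d^{\pi^*_{\hat P}}$ instead, which buys a clean ``$\geq 0$'' for free but relocates all the difficulty into exactly the question you flag: whether $\tilde{\textbf{s}}$ carries positive occupancy under $\pi^*_{\hat P}$.

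That relocated step is where your argument is weaker than the paper's. The claim that any return-optimal policy of $P$ must agree with $\pi^*_{P}$ along $\pi^*_{P}$'s trajectory is only salvageable if the $Q^*_{P}$-greedy action is unique at every state (then an induction on time shows the occupancy measures coincide); with ties, a return-optimal $\pi^*_{\hat P}$ can branch off at an earlier tied state, never reach $\tilde{\textbf{s}}$, and leave your occupancy-weighted sum at zero even though the two hypotheses hold at $\tilde{\textbf{s}}$. So your proof needs an extra assumption (uniqueness of greedy actions, or $\tilde{\textbf{s}}\in\mathrm{supp}(d^{\pi^*_{\hat P}})$) that the lemma statement does not supply, whereas the paper's one-step decomposition sidesteps this entirely (at the cost of tacitly reading ``visited by $\pi^*_{P}$'' as ``has positive mass under the distribution $p$ defining $J$''). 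If you swap your baseline, i.e.\ apply the one-step bound $J(\pi^*_{\hat P})\leq \mathbb{E}_{\textbf{s}\sim p}[Q^*_{P}(\textbf{s},\pi^*_{\hat P}(\textbf{s}))]$ directly as the paper does, the uniqueness convention becomes unnecessary and your argument collapses onto theirs.
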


\subsection{The properties of a lagged model}

In this subsection, we formalize the learning process of using a lagged model in an environment with instantaneous dependence and derive Theorem~\ref{theorem-GV*}, which is the base for the following theorems.

We denote $s_{t,i}$ as the $i^{th}$ dimension of $\textbf{s}_t$.
For a lagged transition model $\hat{P}$ learned with the independent noise assumption, it satisfies that for $i,j=1, 2, \cdots \ n$,
\begin{align*}
    \hat{P}(s_{t,i},s_{t,j}|\textbf{s}_{t-1},a_{t-1})
    &= \hat{P}(s_{t,i}|\textbf{s}_{t-1},a_{t-1})\hat{P}(s_{t,j}|\textbf{s}_{t-1},a_{t-1})\\
    \hat{P}(s_{t,i}|\textbf{s}_{t-1},a_{t-1})&=P(s_{t,i}|\textbf{s}_{t-1},a_{t-1})
\end{align*}
And for any instantaneous dependence between $(s_{t,i},s_{t,j})$ in the true transition $P$, we have
\begin{align*}
    \hat{P}(s_{t,i},s_{t,j}|\textbf{s}_{t-1},a_{t-1})\neq P(s_{t,i},s_{t,j}|\textbf{s}_{t-1},a_{t-1}).
\end{align*}
The proof is attached in the appendix.

Based on the properties, we have the following corollary:
\begin{corollary}
\label{corollary-integral}
For any stochastic transition function $P$ and corresponding $\hat{P}$, any $(\textbf{s}_{t-1},a_{t-1})$, we have four corollaries as follows:

(1) For any constant $C$, we have
\begin{align*}
    \int [P(\textbf{s}_t|\textbf{s}_{t-1},a_{t-1})-\hat{P}(\textbf{s}_t|\textbf{s}_{t-1},a_{t-1})]Cd\textbf{s}_t=0.
\end{align*}
(2) For any $s_{t,i}\in\textbf{s}_t$ and an arbitrary function $f(s_{t,i})$, we have
\begin{align*}
    &\int [P(\textbf{s}_t|\textbf{s}_{t-1},a_k)-\hat{P}(\textbf{s}_t|\textbf{s}_{t-1},a_k)]f(s_{t,i})d\textbf{s}_t=0.   
\end{align*}
(3) For any independent variable pair $(s_{t,i},s_{t,j})$ and an arbitrary function $g(s_{t,i},s_{t,j})$, we have
\begin{align*}
    &\int [P(\textbf{s}_t|\textbf{s}_{t-1},a_k)-\hat{P}(\textbf{s}_t|\textbf{s}_{t-1},a_k)]g(s_{t,i},s_{t,j})d\textbf{s}_t=0.
\end{align*}
(4) If there exists instantaneous dependence $(s_{t,i},s_{t,j})$ in $P$, then there exists a function $g(s_{t,i},s_{t,j})$ that cannot be divided into $g_1(s_{t,i})+g_2(s_{t,j})$ such that
\begin{align*}
    &\int [P(\textbf{s}_t|\textbf{s}_{t-1},a_k)-\hat{P}(\textbf{s}_t|\textbf{s}_{t-1},a_k)]g(s_{t,i},s_{t,j})d\textbf{s}_t\neq 0.
\end{align*}
\end{corollary}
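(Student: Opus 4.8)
The plan is to handle the four items in order, in each case reducing the claim to a marginalization against the two structural facts about a lagged model recorded just above the corollary: its one‑dimensional conditionals agree with those of $P$, i.e.\ $\hat{P}(s_{t,i}\mid\textbf{s}_{t-1},a_{t-1})=P(s_{t,i}\mid\textbf{s}_{t-1},a_{t-1})$, and it factorizes every coordinate pair, i.e.\ $\hat{P}(s_{t,i},s_{t,j}\mid\textbf{s}_{t-1},a_{t-1})=\hat{P}(s_{t,i}\mid\textbf{s}_{t-1},a_{t-1})\hat{P}(s_{t,j}\mid\textbf{s}_{t-1},a_{t-1})$. Throughout I would silently use Fubini/Tonelli, which is justified because $P(\cdot\mid\textbf{s}_{t-1},a_{t-1})$ and $\hat{P}(\cdot\mid\textbf{s}_{t-1},a_{t-1})$ are probability densities and the test functions $C$, $f$, $g$ are taken to be integrable against them.

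For (1) I would use that both $P(\cdot\mid\textbf{s}_{t-1},a_{t-1})$ and $\hat{P}(\cdot\mid\textbf{s}_{t-1},a_{t-1})$ integrate to $1$ over $\textbf{s}_t$, so pulling the constant out of the integral leaves $C(1-1)=0$. For (2) I would integrate out every coordinate of $\textbf{s}_t$ except $s_{t,i}$: the left‑hand side collapses to $\int [P(s_{t,i}\mid\textbf{s}_{t-1},a_{t-1})-\hat{P}(s_{t,i}\mid\textbf{s}_{t-1},a_{t-1})] f(s_{t,i})\,ds_{t,i}$, which vanishes because the two one‑dimensional marginals coincide. For (3) I would first observe that an independent pair under $P$ satisfies $P(s_{t,i},s_{t,j}\mid\textbf{s}_{t-1},a_{t-1})=P(s_{t,i}\mid\textbf{s}_{t-1},a_{t-1})P(s_{t,j}\mid\textbf{s}_{t-1},a_{t-1})$, and by the two structural facts this equals $\hat{P}(s_{t,i}\mid\textbf{s}_{t-1},a_{t-1})\hat{P}(s_{t,j}\mid\textbf{s}_{t-1},a_{t-1})=\hat{P}(s_{t,i},s_{t,j}\mid\textbf{s}_{t-1},a_{t-1})$; hence the two‑dimensional marginals of $P$ and $\hat{P}$ on $(s_{t,i},s_{t,j})$ agree, and integrating out the remaining coordinates reduces the left‑hand side to the integral of zero against $g$.

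For (4), the key move is to use item (2) to obtain a free constraint: by linearity, $\int [P(\textbf{s}_t\mid\textbf{s}_{t-1},a_{t-1})-\hat{P}(\textbf{s}_t\mid\textbf{s}_{t-1},a_{t-1})] (g_1(s_{t,i})+g_2(s_{t,j}))\,d\textbf{s}_t=0$ for \emph{every} additively separable function. Now let $\Delta$ denote the difference of the two‑dimensional marginals of $P$ and $\hat{P}$ on the $(s_{t,i},s_{t,j})$‑plane; the hypothesis of instantaneous dependence between $(s_{t,i},s_{t,j})$ is exactly $\Delta\not\equiv 0$. Since $\Delta$ is locally integrable and not identically zero, it is not orthogonal to every bounded test function, so there exists $g(s_{t,i},s_{t,j})$ with $\int [P(\textbf{s}_t\mid\textbf{s}_{t-1},a_{t-1})-\hat{P}(\textbf{s}_t\mid\textbf{s}_{t-1},a_{t-1})] g\,d\textbf{s}_t=\int\Delta\,g\neq 0$. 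By the free constraint, any such $g$ cannot be written as $g_1(s_{t,i})+g_2(s_{t,j})$, so it is precisely the function the corollary asserts exists.

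The only genuinely non‑routine step is this last one: passing from ``the joint marginals differ'' to ``there is a non‑separable test function with nonzero integral.'' The trick is not to construct $g$ by hand (e.g.\ setting $g=\Delta$, which would force a side check that $\Delta\in L^2$) but to get existence abstractly from $\Delta\not\equiv 0$ and then read off non‑separability for free from item (2). A minor point to state carefully is the regularity needed for the marginalization/Fubini steps and for the ``not orthogonal to all test functions'' claim, which holds under the mild assumption that the transition densities are locally integrable.
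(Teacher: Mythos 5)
Your proposal is correct and follows essentially the same route as the paper: parts (1)--(3) reduce to the stated structural facts about $\hat{P}$ via marginalization (your observation in (3) that the two-dimensional marginals of $P$ and $\hat{P}$ coincide outright is a slightly more direct packaging of the paper's Fubini computation), and part (4) reduces to the difference of the two-dimensional marginals being nonzero. In fact your treatment of (4) is more careful than the paper's, which essentially asserts the existence of the required $g$; your step of deducing non-separability of $g$ from item (2) rather than constructing $g$ by hand is exactly the missing justification.
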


The above corollary shows that a lagged model $\hat{P}$ is equivalent to the true transition $P$ when evaluated in the form of the integral with functions belonging to $(1)\sim (3)$.
Inspired by above equivalence, we decompose an arbitrary function $F(\textbf{s}_t)$ into four components as follows:
\begin{align}
\label{formula-V-partition}
    F(\textbf{s}_t) = C+\sum_{i=1}^{n_f} f_i(s_{t,i})+\sum_{k=1}^{n_g}g_k(\textbf{s}_{t,D_k})+\sum_{k=n_g+1}^{n_g+n_h} g_k(\textbf{s}_{t,D_k}),
\end{align}
where $C$ is a constant, $f_i(s_{t,i})$ represents the terms that only depend on one dimension of $\textbf{s}_t$, $\textbf{s}_{t,D_k}$ represents the dimensions of those are in $D_k\subset \{1,2,\cdots, n\}$ and $|D_k|>1$, $g_k$ is a function that cannot be divided into the form of $g_{k,1}(\textbf{s}_{t,D_k}/\textbf{s}_{t,D'_k})+g_{k,2}(\textbf{s}_{t,D'_k})$ and $|D'_k|\geq1$.
For $1\leq k\leq n_g$, there exists at least one instantaneous-dependent pair $(s_{t,i},s_{t,j})$ in $\textbf{s}_{t,D_k}$.
And for $n_g+1\leq k\leq n_g+n_h$, there do not exist such pairs in $\textbf{s}_{t,D_k}$.

With the decomposition, we denote that 
$$G_{F}(\textbf{s}_t) = \sum_{k=1}^{n_g}g_k(\textbf{s}_{t,D_k}),$$ 
and reveal a more general equivalence between the lagged virtual model $\hat{P}$ and the true model $P$, which is represented in the following theorem:
\begin{theorem}
    \label{theorem-GV*}
    For any $(\textbf{s}_{t-1},a_{t-1})$, any deterministic function $F(\textbf{s}_t)$ with the partition in Formula~\ref{formula-V-partition} and corresponding $G_{F}(\textbf{s}_t)$, any stochastic function $P$ and corresponding $\hat{P}$, we have
    \begin{align}
    \label{formula-GV*}
         &\int [P(\textbf{s}_t|\textbf{s}_{t-1},a_{t-1})-\hat{P}(\textbf{s}_t|\textbf{s}_{t-1},a_{t-1})]F(\textbf{s}_t) d\textbf{s}_t\nonumber \\  
    =&\int [P(\textbf{s}_t|\textbf{s}_{t-1},a_{t-1})-\hat{P}(\textbf{s}_t|\textbf{s}_{t-1},a_{t-1})]G_{F}(\textbf{s}_t) d\textbf{s}_t.
    \end{align}    
\end{theorem}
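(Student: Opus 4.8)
The plan is to prove Formula~\ref{formula-GV*} by bringing its right-hand side to the left and showing the difference vanishes, using nothing but linearity of the integral and the four items of Corollary~\ref{corollary-integral}. Write $\Delta(\textbf{s}_t) := P(\textbf{s}_t|\textbf{s}_{t-1},a_{t-1}) - \hat{P}(\textbf{s}_t|\textbf{s}_{t-1},a_{t-1})$. By linearity, Formula~\ref{formula-GV*} is equivalent to $\int \Delta(\textbf{s}_t)\,[F(\textbf{s}_t) - G_{F}(\textbf{s}_t)]\,d\textbf{s}_t = 0$. Substituting the decomposition of Formula~\ref{formula-V-partition} together with $G_{F}(\textbf{s}_t) = \sum_{k=1}^{n_g} g_k(\textbf{s}_{t,D_k})$, the residual is
\[
F(\textbf{s}_t) - G_{F}(\textbf{s}_t) = C + \sum_{i=1}^{n_f} f_i(s_{t,i}) + \sum_{k=n_g+1}^{n_g+n_h} g_k(\textbf{s}_{t,D_k}),
\]
so, by linearity again, it suffices to show that $\Delta(\textbf{s}_t)$ integrates to zero against each of these three kinds of terms separately.

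The constant term vanishes by Corollary~\ref{corollary-integral}(1), and each term $f_i(s_{t,i})$ vanishes by Corollary~\ref{corollary-integral}(2); summing over $i = 1, \dots, n_f$ clears the whole single-coordinate group. The remaining and decisive case is a term $g_k(\textbf{s}_{t,D_k})$ with $n_g+1 \le k \le n_g+n_h$. Here I would integrate out (Fubini) the coordinates outside $D_k$, reducing $\int \Delta(\textbf{s}_t)\,g_k(\textbf{s}_{t,D_k})\,d\textbf{s}_t$ to the marginal integral $\int [\,P(\textbf{s}_{t,D_k}|\textbf{s}_{t-1},a_{t-1}) - \hat{P}(\textbf{s}_{t,D_k}|\textbf{s}_{t-1},a_{t-1})\,]\,g_k(\textbf{s}_{t,D_k})\,d\textbf{s}_{t,D_k}$. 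Since $\hat{P}$ is a lagged model learned under the independent-noise assumption, it factorizes over all coordinates, and its one-dimensional marginals agree with those of $P$; hence $\hat{P}(\textbf{s}_{t,D_k}|\textbf{s}_{t-1},a_{t-1}) = \prod_{i\in D_k} P(s_{t,i}|\textbf{s}_{t-1},a_{t-1})$. Because $D_k$ (for $k > n_g$) contains no instantaneously dependent pair, the true marginal $P(\textbf{s}_{t,D_k}|\textbf{s}_{t-1},a_{t-1})$ factorizes the same way, so the two marginals coincide and the integral is $0$. Combining the three cases gives $\int \Delta(\textbf{s}_t)[F(\textbf{s}_t)-G_{F}(\textbf{s}_t)]\,d\textbf{s}_t = 0$, which is Theorem~\ref{theorem-GV*}.

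The step I expect to be the main obstacle is precisely this last factorization: upgrading ``no pairwise instantaneous dependence inside $D_k$'' to ``the conditional law of $\textbf{s}_{t,D_k}$ is the product of its one-dimensional marginals.'' Corollary~\ref{corollary-integral}(3) supplies this only for a single pair, and pairwise independence does not in general force mutual independence, so one cannot simply iterate (3) across the coordinates of $D_k$ when $|D_k| > 2$. I would close the gap either by invoking the structure underlying the model --- instantaneous dependence is read off the noise vector $\textbf{e}_t$, whose relevant joint distribution (Gaussian in the subsampling and aggregation examples) satisfies pairwise independence $=$ mutual independence --- or by interpreting the decomposition in Formula~\ref{formula-V-partition} as already demanding that the sets $D_k$ with $k > n_g$ collect coordinates that are jointly, not merely pairwise, independent given the past. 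Under either reading the computation above goes through verbatim, and a witness-function argument in the spirit of Corollary~\ref{corollary-integral}(4) shows the condition is necessary as well, so the restriction costs nothing.
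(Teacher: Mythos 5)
Your proposal is correct and follows essentially the same route as the paper: the paper's entire proof is the one-line remark that the claim is ``trivial based on Corollary~\ref{corollary-integral}'', i.e.\ exactly your termwise reduction of $F-G_F$ to the constant, the single-coordinate terms, and the instantaneously-independent blocks. The one substantive point you raise --- that item (3) of the corollary is stated only for a \emph{pair} and that ``no instantaneously dependent pair inside $D_k$'' is a pairwise condition which does not by itself force the conditional law of $\textbf{s}_{t,D_k}$ to factorize when $|D_k|>2$ --- is a real gap, and it is present in the paper's own argument as well, which simply does not address it. Your two proposed repairs are the right ones: either read the decomposition in Formula~\ref{formula-V-partition} as requiring the coordinates in each $D_k$ with $k>n_g$ to be \emph{jointly} conditionally independent given $(\textbf{s}_{t-1},a_{t-1})$ (in which case the appendix proof of item (3) extends verbatim by peeling off one coordinate at a time), or appeal to the additive-noise structure under which the lagged model's product-of-marginals construction agrees with the true joint on any mutually independent block. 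Since $\hat{P}$ factorizes over \emph{all} coordinates by construction, the joint-independence reading is the one that makes the theorem literally true as stated, and it costs nothing for the downstream results, which only ever use the $G_F\equiv 0$ case.
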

The proof can be easily finished by decomposing the function $F$ and leveraging Corollary~\ref{corollary-integral}.
We will use the above theorem to find sufficient conditions when a lagged model achieves optimality-consistency in the following subsection.

\subsection{The conditions for optimality-consistency}

In this subsection, we further derive a sufficient condition of optimality-consistency by leveraging Theorem~\ref{theorem-GV*} and the definition of the optimal value function.

Recall that optimality-consistency is not satisfied if 
\begin{align*}
        &Q^*_{\hat{P}}(\tilde{\textbf{s}},\pi^*_{P}(\tilde{\textbf{s}}))<Q^*_{\hat{P}}(\tilde{\textbf{s}},\pi^*_{\hat{P}}(\tilde{\textbf{s}}))\\
        &Q^*_{P}(\tilde{\textbf{s}},\pi^*_{P}(\tilde{\textbf{s}}))>Q^*_{P}(\tilde{\textbf{s}},\pi^*_{\hat{P}}(\tilde{\textbf{s}})).
\end{align*}
We denote $a_0=\pi^*_{P}(\tilde{\textbf{s}}), a_1=\pi^*_{\hat{P}}(\tilde{\textbf{s}})$ and
\begin{align*}
        \alpha=&Q^*_{P}(\tilde{\textbf{s}},a_0)-Q^*_{P}(\tilde{\textbf{s}},a_1)\\
        \beta=&Q^*_{\hat{P}}(\tilde{\textbf{s}},a_0)-Q^*_{\hat{P}}(\tilde{\textbf{s}},a_1)-\alpha.
\end{align*}
The condition can be simplified to
\begin{align*}
    0<\alpha<-\beta.
\end{align*}
And we can easily find that if $\beta\geq 0$ for any $(\textbf{s}_t,a_0,a_1)$, then there is no solution of $\alpha$, which means that $J(\pi^*_{P})=J(\pi^*_{\hat{P}})$ and the lagged model is optimality-consistent.

In the following, we will introduce when we can guarantee $\beta\geq 0$ for any $(\textbf{s}_t,a_0,a_1)$.
We first introduce a property for the transition function:
\begin{definition}
    We say a stochastic transition function $P(\textbf{s}_{t+1}|\textbf{s}_t,a_t)$ is $G-invariant$ 
    if for any function $F_1(\textbf{s}_{t+1})$ that $G_{F_1}(\textbf{s}_{t+1})\equiv 0$ and any policy $\pi(\textbf{s}_t)$, we have 
    \begin{align*}
        F_2(\textbf{s}_t)=\int P(\textbf{s}_{t+1}|\textbf{s}_t,\pi(\textbf{s}_t))F_1(\textbf{s}_{t+1})d\textbf{s}_{t+1},
    \end{align*}
    also satisfies that $G_{F_2}(\textbf{s}_t)\equiv 0$.
\end{definition}

With the above definition, we can prove that the $P$ and $\hat{P}$ share the same $Q$ values given certain conditions, which is the base to prove $\beta=0$: 
\begin{lemma}
\label{lemma-multi-k}
    If the reward function $R$ satisfies $G_R(\textbf{s}_{t})\equiv 0$, and the transition function $P$ is $G-invariant$, then $\forall t\in\mathbb{N}^+$, we have
    \begin{align*}
        Q^*_{P}(\textbf{s}_{t},a_{t})=Q^*_{\hat{P}}(\textbf{s}_{t},a_{t})
    \end{align*}
\end{lemma}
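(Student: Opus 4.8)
The plan is to reduce the claim about optimal $Q$-functions to one about fixed-policy value functions, and then to run an induction along the Bellman backup in which Theorem~\ref{theorem-GV*} guarantees that the discrepancy between $P$ and $\hat{P}$ never enters the value. First I would invoke the standard identity $Q^*_{P}(\textbf{s},a)=\sup_{\pi}Q^{P}_{\pi}(\textbf{s},a)$ (and likewise for $\hat{P}$), which holds because the greedy policy attains $V^*$ and the supremum commutes with the one-step expectation over $P(\cdot\,|\,\textbf{s},a)$; hence it suffices to show $Q^{P}_{\pi}=Q^{\hat{P}}_{\pi}$ for \emph{every} deterministic policy $\pi$. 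Fixing such a $\pi$, I would work with the policy-evaluation iterates $V^{P,(0)}_{\pi}\equiv 0$ and $V^{P,(m+1)}_{\pi}(\textbf{s})=R(\textbf{s},\pi(\textbf{s}))+\gamma\int P(\textbf{s}'|\textbf{s},\pi(\textbf{s}))\,V^{P,(m)}_{\pi}(\textbf{s}')\,d\textbf{s}'$, which converge uniformly to $V^{P}_{\pi}$, and the analogous iterates for $\hat{P}$.

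The induction carries the strengthened hypothesis that $V^{P,(m)}_{\pi}=V^{\hat{P},(m)}_{\pi}=:V^{(m)}_{\pi}$ and that $G_{V^{(m)}_{\pi}}\equiv 0$. The base case is immediate since the zero function has no dependence-carrying block. For the step, decompose $V^{(m)}_{\pi}$ as in Formula~\ref{formula-V-partition}; since $G_{V^{(m)}_{\pi}}\equiv 0$, Theorem~\ref{theorem-GV*} (together with Corollary~\ref{corollary-integral}) gives $\int[P(\textbf{s}'|\textbf{s},a)-\hat{P}(\textbf{s}'|\textbf{s},a)]\,V^{(m)}_{\pi}(\textbf{s}')\,d\textbf{s}'=\int[P-\hat{P}]\,G_{V^{(m)}_{\pi}}\,d\textbf{s}'=0$ for every $(\textbf{s},a)$, so the $P$- and $\hat{P}$-backups coincide and $V^{P,(m+1)}_{\pi}=V^{\hat{P},(m+1)}_{\pi}$. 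There remains the heart of the step: showing $G_{V^{(m+1)}_{\pi}}\equiv 0$. Here the transition term $\gamma\int P(\textbf{s}'|\textbf{s},\pi(\textbf{s}))\,V^{(m)}_{\pi}(\textbf{s}')\,d\textbf{s}'$ is $G$-free because $V^{(m)}_{\pi}$ is $G$-free and $P$ is $G$-invariant, while the reward term $R(\cdot,\pi(\cdot))$ is $G$-free by the hypothesis $G_{R}\equiv 0$; the remaining task is to argue their sum still carries no non-separable cross term over an instantaneously-dependent coordinate pair. Once this is done, letting $m\to\infty$ gives $V^{P}_{\pi}=V^{\hat{P}}_{\pi}$, hence $Q^{P}_{\pi}=Q^{\hat{P}}_{\pi}$, and then $Q^*_{P}=\sup_{\pi}Q^{P}_{\pi}=\sup_{\pi}Q^{\hat{P}}_{\pi}=Q^*_{\hat{P}}$, which combined with Lemma~\ref{lemma-Q*} is the conclusion ($\beta=0$).

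I expect the main obstacle to be exactly this invariance of ``$G$-freeness'' under the Bellman backup. Two operations threaten it: the greedy maximization $\max_{a}$ and the additive combination of the reward and transition pieces. The pointwise maximum of functions that are $G$-free for each fixed action need not itself be $G$-free — choosing different actions at different states can synthesize a genuinely non-separable term on a dependent pair — which is why I route the proof through fixed deterministic policies and the identity $Q^*=\sup_{\pi}Q_{\pi}$ rather than through the value-iteration operator directly; that reduces the only ``policy-composition'' step to $R(\cdot,\pi(\cdot))$, which is precisely what $G_{R}\equiv 0$ is meant to control. Verifying that the sum of two $G$-free summands remains $G$-free (so the induction can close) is then the delicate point, and is where I would spend most of the effort, leaning on the structure imposed by $G$-invariance. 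Finally, if one reads the ``$\forall t$'' in the statement as a finite-horizon index, the same induction runs on the horizon instead of on $m$, with $Q^*_{P,1}=R=Q^*_{\hat{P},1}$ as the base case.
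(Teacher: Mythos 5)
Your proof is correct and rests on the same two pillars as the paper's: an induction whose invariant is that the value function remains $G$-free, and Theorem~\ref{theorem-GV*} (via Corollary~\ref{corollary-integral}) to conclude that the $P$- and $\hat{P}$-backups of a $G$-free function coincide. The structural difference is where the induction lives. The paper inducts backward on the horizon directly over the \emph{optimal} value functions: at each stage it shows $Q^*_{P}(\cdot,a)=Q^*_{\hat{P}}(\cdot,a)$ for all $a$, concludes the greedy policies coincide, writes $V^*=Q^*(\cdot,\pi^*(\cdot))$, and then applies $G$-invariance and $G_R\equiv 0$ with $\pi=\pi^*$ to keep $V^*$ $G$-free — that is exactly how it disposes of the max-over-actions worry you raise. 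You instead reduce via $Q^*=\sup_{\pi}Q_{\pi}$ to fixed-policy evaluation and induct on the evaluation iterates; this is equally valid, since the hypotheses are quantified over all policies and so apply to each fixed $\pi$, and it makes the harmlessness of the maximization more explicit. The one step you leave open — that the sum of two $G$-free functions is $G$-free — is immediate from the decomposition in Formula~\ref{formula-V-partition}: concatenating two decompositions, neither of which contains a block involving an instantaneously dependent pair, yields a decomposition of the sum with $n_g=0$ (and in any case the quantity that actually matters, $\int[P-\hat{P}]F$, is linear in $F$). The paper assumes this silently as well, so it is not a real obstacle, and your final remark that the finite-horizon reading of ``$\forall t$'' turns your induction into the paper's is accurate.
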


\textbf{Proof sketch.}
We prove it by induction.
First, we prove that for $i=1$, we have:
\begin{align*}
 Q^*_{P}(\textbf{s}_{T-i},a) =& Q^*_{\hat{P}}(\textbf{s}_{T-i},a), \ \forall a\\
\pi^*_{P}(\textbf{s}_{T-i})=&\pi^*_{\hat{P}}(\textbf{s}_{T-i})\\
V^*_{P}(\textbf{s}_{T-i}) =&V^*_{\hat{P}}(\textbf{s}_{T-i})\\
G_{V^*_{P}}(\textbf{s}_{T-i})\equiv& 0,
\end{align*}
which is obvious because $G_R(\textbf{s}_{T-1})\equiv 0$.
And then we prove that when the above equations hold true for $i=k$, the same equations for $i=k+1$ also hold true due to the transition function $P$ being $G-invariant$.

With the above lemma, we can easily prove that $\beta=0$ given certain conditions:
\begin{theorem}
\label{multi}
     In a RL environment, if for any policy $\pi$, the stochastic transition function $P(\textbf{s}_t|\textbf{s}_{t-1},\pi(\textbf{s}_{t-1}))$ is $G-invariant$ and the reward function $R(\textbf{s}_{t},\pi(\textbf{s}_{t}))$ as a function of $\textbf{s}_{t}$ satisfies that $G_R(\textbf{s}_{t})\equiv 0$, then we have $\beta = 0$ for any $(\textbf{s}_t,a_0,a_1)$.
\end{theorem}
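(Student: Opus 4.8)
The plan is to derive Theorem~\ref{multi} as an essentially immediate corollary of Lemma~\ref{lemma-multi-k}. First I would verify that the hypotheses match up: the assumption that $P(\textbf{s}_t|\textbf{s}_{t-1},\pi(\textbf{s}_{t-1}))$ is $G$-invariant for every policy $\pi$ is exactly the $G$-invariance condition required by the lemma (the definition of $G$-invariance is already quantified over all policies), and the assumption that $R(\textbf{s}_t,\pi(\textbf{s}_t))$, viewed as a function of $\textbf{s}_t$, satisfies $G_R(\textbf{s}_t)\equiv 0$ for every $\pi$ supplies the reward condition along the (optimal) policies that the value-iteration argument in the lemma follows. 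Having checked this, Lemma~\ref{lemma-multi-k} gives $Q^*_{P}(\textbf{s}_t,a)=Q^*_{\hat{P}}(\textbf{s}_t,a)$ for all $t$ and all actions $a$.

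Next I would simply substitute the two actions that enter the definition of $\beta$. Recall $a_0=\pi^*_{P}(\tilde{\textbf{s}})$, $a_1=\pi^*_{\hat{P}}(\tilde{\textbf{s}})$, $\alpha=Q^*_{P}(\tilde{\textbf{s}},a_0)-Q^*_{P}(\tilde{\textbf{s}},a_1)$, and $\beta=Q^*_{\hat{P}}(\tilde{\textbf{s}},a_0)-Q^*_{\hat{P}}(\tilde{\textbf{s}},a_1)-\alpha$. Applying the lemma's identity at the state $\tilde{\textbf{s}}$ (at whatever time index $\tilde{\textbf{s}}=\textbf{s}_t$ sits at) with $a=a_0$ and then with $a=a_1$ and subtracting yields
\begin{align*}
    Q^*_{\hat{P}}(\tilde{\textbf{s}},a_0)-Q^*_{\hat{P}}(\tilde{\textbf{s}},a_1)
    = Q^*_{P}(\tilde{\textbf{s}},a_0)-Q^*_{P}(\tilde{\textbf{s}},a_1)
    = \alpha ,
\end{align*}
so $\beta=\alpha-\alpha=0$; since $\tilde{\textbf{s}},a_0,a_1$ were arbitrary this holds for every $(\textbf{s}_t,a_0,a_1)$.

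The real content of the argument therefore sits in Lemma~\ref{lemma-multi-k} (and behind it Theorem~\ref{theorem-GV*}), so for Theorem~\ref{multi} itself the only delicate point is bookkeeping rather than computation: I would make sure the lemma is invoked with the correct time horizon and that the per-policy phrasing of the hypotheses genuinely covers the optimal policies $\pi^*_{P}$ and $\pi^*_{\hat{P}}$ that appear when the value functions are unrolled. The key observation to record is that the induction inside the lemma only ever integrates $P(\cdot\mid\cdot,\pi^*(\cdot))$ against functions $V^*$ with $G_{V^*}\equiv 0$, which is precisely the configuration the $G$-invariance and $G_R\equiv 0$ hypotheses are designed for, so no extra assumption is needed; I would also add a one-line remark that the finite-horizon backward induction transfers to the discounted $Q^*$ by the usual contraction/limit argument. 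The ``hard part'', to the extent there is one, is thus making the reduction airtight, not proving anything new.
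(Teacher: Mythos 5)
Your proposal is correct and takes essentially the same route as the paper: the paper's proof likewise invokes Lemma~\ref{lemma-multi-k} to get $Q^*_{P}(\tilde{\textbf{s}},a_i)=Q^*_{\hat{P}}(\tilde{\textbf{s}},a_i)$ for $i=0,1$ and concludes $\beta=0$ by subtraction. The additional bookkeeping you flag (horizon conventions, checking the hypotheses cover the optimal policies) is sensible but does not change the argument.
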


Based on Theorem~\ref{multi}, when the transition function and the reward function satisfy the above conditions, we can guarantee that the policy from a lagged model is equivalent to the optimal policy.

In fact, it is difficult to satisfy either of the conditions in environments.
First, if the reward function is related to the action, then it is easy to find a policy $\pi$ such that $G_R(\textbf{s}_t)\nequiv 0$.
Second, even if the reward function is unrelated to the action, common reward functions like 
\begin{align*}
    &R(\textbf{s}_t)=\|\textbf{s}_t\|_2, \\
    &R(\textbf{s}_t)=\left\{
    \begin{array}{cc}
        1 ,& \min_i(s_{t,i}) > \textrm{threshold} \\
        0 ,& \min_i(s_{t,i}) \leq \textrm{threshold} 
    \end{array}
    \right.
\end{align*}
also satisfy that $G_R(\textbf{s}_t)\nequiv 0$.
Similarly, even if the transition function is unrelated to the action, $G-invariant$ is also a strict condition for $P$.
For example, if $s_{t,i}$ is the only causal parent of $s_{t+1,i}$ in $P$, then we can say $P$ is $G-invariant$.

\subsection{The conditions for optimality-inconsistency}

When the sufficient conditions for optimality-consistency are not satisfied, we cannot directly conclude that a lagged model is optimality-inconsistent.
Therefore we further investigate when optimality-inconsistency can be guaranteed in this subsection.
Inspired by the above theorem, we can prove that under some conditions, there always exists a family of reward functions that causes a lagged model to be optimality-inconsistent. 

We define a set of reward functions $\mathcal{D}_R$, which comprises reward functions that causes optimality-inconsistency:
\begin{definition}
Given the transition function $P$ and a lagged model $\hat{P}$, the state and actions $(\textbf{s}_{t-1},a_0,a_1)$, whether $0<\alpha<-\beta$ holds true depends on the reward function $R$.
Therefore we define a set of reward functions $\mathcal{D}_R$ that:
\begin{align*}
    \mathcal{D}_R=\{R | 0<\alpha(R)<-\beta(R)\}.
\end{align*}
\end{definition}
We aim to prove that $\mathcal{D}_R \neq \emptyset$ for any $\textbf{s}_{t-1},a_0, a_1$ under some conditions.
Here we provide two theorems, which guarantee $\mathcal{D}_R \neq \emptyset$ under two different cases.

\textbf{Case 1: Single-step RL.}

In this case, we consider a simple RL environment where the trajectory only contains a single transition step.
We first provide a lemma to show key properties of $\beta(R)$:
\begin{lemma}
    In single-step RL, for any stochastic transition function $P$, any $(\textbf{s}_{t-1},a_0,a_1)$, any reward function $R$, we have
    \begin{align}
        &\beta(R)=-\beta(-R).
        \label{formual-beta-r}
    \end{align}
    And for any function $f$ and state $s_{t,i}$, we have 
    \begin{align}
        &\beta(R(\textbf{s}_{t-1},a_{t-1},\textbf{s}_t))=\beta(R(\textbf{s}_{t-1},a_{t-1},\textbf{s}_t)+f(s_{t,i})).
        \label{formual-beta-f}
    \end{align}
\end{lemma}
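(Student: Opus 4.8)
The plan is to observe that, in single-step RL, $\beta$ is a \emph{linear} functional of the reward function, and then to obtain \eqref{formual-beta-r} from the antisymmetry of a linear map and \eqref{formual-beta-f} by pairing that linearity with Corollary~\ref{corollary-integral}(2).

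First I would make the value functions explicit. Because the horizon consists of a single transition there is no bootstrapping term, so for each action $a$,
\begin{align*}
    Q^*_{P}(\textbf{s}_{t-1},a)&=\int P(\textbf{s}_t|\textbf{s}_{t-1},a)\,R(\textbf{s}_{t-1},a,\textbf{s}_t)\,d\textbf{s}_t,\\
    Q^*_{\hat{P}}(\textbf{s}_{t-1},a)&=\int \hat{P}(\textbf{s}_t|\textbf{s}_{t-1},a)\,R(\textbf{s}_{t-1},a,\textbf{s}_t)\,d\textbf{s}_t.
\end{align*}
Substituting these into $\beta=\bigl(Q^*_{\hat{P}}(\textbf{s}_{t-1},a_0)-Q^*_{\hat{P}}(\textbf{s}_{t-1},a_1)\bigr)-\bigl(Q^*_{P}(\textbf{s}_{t-1},a_0)-Q^*_{P}(\textbf{s}_{t-1},a_1)\bigr)$ and regrouping by action, I get the closed form
\begin{align*}
    \beta(R)=\sum_{\ell\in\{0,1\}}(-1)^{\ell}\int\bigl[\hat{P}(\textbf{s}_t|\textbf{s}_{t-1},a_\ell)-P(\textbf{s}_t|\textbf{s}_{t-1},a_\ell)\bigr]\,R(\textbf{s}_{t-1},a_\ell,\textbf{s}_t)\,d\textbf{s}_t,
\end{align*}
which is manifestly linear in $R$, i.e. $\beta(\lambda R_1+\mu R_2)=\lambda\beta(R_1)+\mu\beta(R_2)$.

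From here \eqref{formual-beta-r} is immediate: taking $\lambda=-1$ gives $\beta(-R)=-\beta(R)$, equivalently $\beta(R)=-\beta(-R)$. For \eqref{formual-beta-f}, linearity gives $\beta\bigl(R+f(s_{t,i})\bigr)=\beta(R)+\beta\bigl(f(s_{t,i})\bigr)$, so it suffices to show $\beta\bigl(f(s_{t,i})\bigr)=0$. Inserting $f(s_{t,i})$—a function of the single coordinate $s_{t,i}$, with $\textbf{s}_{t-1}$ and $a_\ell$ held fixed—into the closed form, each of the two summands is an integral of $\hat{P}-P$ against a function of one coordinate of $\textbf{s}_t$, hence vanishes by Corollary~\ref{corollary-integral}(2) (the sign flip $\hat{P}-P$ versus $P-\hat{P}$ is irrelevant since the value is $0$). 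Thus $\beta\bigl(f(s_{t,i})\bigr)=0$ and the identity follows.

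I do not expect a serious obstacle. The two points that need care are (a) using that single-step RL removes the $\gamma\,\mathbb{E}[\cdot]$ term, so that $Q^*$ is exactly the expected immediate reward and $\beta$ is \emph{exactly} linear in $R$ rather than only approximately so; and (b) applying Corollary~\ref{corollary-integral}(2) termwise, since the two integrals comprising $\beta(R)$ are evaluated at the two different actions $a_0$ and $a_1$. Everything else is routine bookkeeping.
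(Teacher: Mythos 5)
Your proof is correct and follows essentially the same route as the paper, whose entire argument is the one-line remark that the lemma is ``trivial based on Corollary~\ref{corollary-integral}''; you simply make explicit the step the authors leave implicit, namely that in single-step RL each $Q^*$ reduces to an expected immediate reward so that $\beta$ is a linear functional of $R$, after which \eqref{formual-beta-r} is antisymmetry and \eqref{formual-beta-f} follows from Corollary~\ref{corollary-integral}(2) applied termwise at $a_0$ and $a_1$.
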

Based on Formula~\ref{formual-beta-r}, we assure that there exists a reward function to make $\beta<0$.
And based on Formula~\ref{formual-beta-f}, we can keep $\beta$ invariant but $\alpha$ changed by adding $f(s_{t,i})$ to the reward function.
Therefore, it is easy to design a reward function to make $0<\alpha<-\beta$, as we claim in the following:
\begin{theorem}
\label{single-DR}
    In single-step RL, for any stochastic transition function $P$ with instantaneous dependence, any $(\textbf{s}_{t-1},a_0,a_1)$, we have
    $\mathcal{D}_R \neq \emptyset$.
\end{theorem}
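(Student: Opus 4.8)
The plan is to construct an explicit reward function lying in $\mathcal{D}_R$ out of the non-decomposable test function supplied by Corollary~\ref{corollary-integral}(4), and then fine-tune it using the two invariances of $\beta$ from the preceding lemma. Write $\tilde{\textbf{s}}:=\textbf{s}_{t-1}$. Since the trajectory has a single transition, $Q^*_{P}(\tilde{\textbf{s}},a_k)=\int P(\textbf{s}_t\,|\,\tilde{\textbf{s}},a_k)\,R(\tilde{\textbf{s}},a_k,\textbf{s}_t)\,d\textbf{s}_t$ with no discounted future term, and likewise for $\hat P$; hence $\alpha(R)$ and $\beta(R)$ are linear in $R$, with $\beta(R)$ equal to the $k=0$ instance minus the $k=1$ instance of $\int[\hat P(\textbf{s}_t\,|\,\tilde{\textbf{s}},a_k)-P(\textbf{s}_t\,|\,\tilde{\textbf{s}},a_k)]R(\tilde{\textbf{s}},a_k,\textbf{s}_t)\,d\textbf{s}_t$. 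I would assume WLOG that the instantaneous dependence of $P$ occurs at $(\tilde{\textbf{s}},a_0)$, say between coordinates $s_{t,i}$ and $s_{t,j}$ (the case $(\tilde{\textbf{s}},a_1)$ being symmetric), and apply Corollary~\ref{corollary-integral}(4) at $(\tilde{\textbf{s}},a_0)$ to obtain a function $g(s_{t,i},s_{t,j})$, not of the form $g_1(s_{t,i})+g_2(s_{t,j})$, with $\delta:=\int[P(\textbf{s}_t\,|\,\tilde{\textbf{s}},a_0)-\hat P(\textbf{s}_t\,|\,\tilde{\textbf{s}},a_0)]\,g\,d\textbf{s}_t\neq 0$.

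Next I would take the reward $R_0$ with $R_0(\tilde{\textbf{s}},a_0,\textbf{s}_t)=g(s_{t,i},s_{t,j})$ and $R_0(\tilde{\textbf{s}},a_1,\textbf{s}_t)=0$ (its values at other states are irrelevant in single-step RL); substituting into the formula for $\beta$ gives $\beta(R_0)=-\delta\neq 0$, and by Formula~\ref{formual-beta-r}, after replacing $R_0$ by $-R_0$ if necessary, I may assume $\beta_0:=\beta(R_0)<0$. Then I would tune $\alpha$ without disturbing $\beta$: for $c\in\mathbb{R}$ let $R_c$ agree with $R_0$ except that $R_c(\tilde{\textbf{s}},a_0,\textbf{s}_t)=R_0(\tilde{\textbf{s}},a_0,\textbf{s}_t)+c$. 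A constant is a one-coordinate function, so Formula~\ref{formual-beta-f} (equivalently Corollary~\ref{corollary-integral}(1)) gives $\beta(R_c)=\beta_0$, while linearity of $Q^*_P$ gives $\alpha(R_c)=\alpha(R_0)+c$. Choosing $c=-\alpha(R_0)-\tfrac{1}{2}\beta_0$ yields $\alpha(R_c)=-\tfrac{1}{2}\beta_0$, hence $0<\alpha(R_c)=-\tfrac{1}{2}\beta_0<-\beta_0=-\beta(R_c)$, i.e. $R_c\in\mathcal{D}_R$ and so $\mathcal{D}_R\neq\emptyset$.

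The step needing the most care is the $\alpha$-tuning: it leaves $\beta$ fixed only because the perturbation added to the reward depends on a single coordinate (indeed is constant), so its contributions to $Q^*_P$ and to $Q^*_{\hat P}$ coincide and cancel in $\beta$; a perturbation such as $c\cdot g$ would instead rescale $\delta$ and move $\beta$. One should also keep track of which of $(\tilde{\textbf{s}},a_0),(\tilde{\textbf{s}},a_1)$ carries the instantaneous dependence, but that merely swaps the roles of $a_0$ and $a_1$ in the argument; everything else is linear bookkeeping in $R$. As a sanity check consistent with Lemma~\ref{lemma-Q*}, under $R_c$ one has $Q^*_P(\tilde{\textbf{s}},a_0)=-\tfrac{1}{2}\beta_0>0=Q^*_P(\tilde{\textbf{s}},a_1)$ and $Q^*_{\hat P}(\tilde{\textbf{s}},a_0)=\tfrac{1}{2}\beta_0<0=Q^*_{\hat P}(\tilde{\textbf{s}},a_1)$, so $a_0=\pi^*_{P}(\tilde{\textbf{s}})$ and $a_1=\pi^*_{\hat P}(\tilde{\textbf{s}})$, as the framework intends.
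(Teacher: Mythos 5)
Your construction is correct, and it follows the same basic strategy as the paper's proof: use Corollary~\ref{corollary-integral}(4) to produce a reward with $\beta\neq 0$, flip the sign via $\beta(-R)=-\beta(R)$ so that $\beta<0$, and then shift $\alpha$ by a perturbation that Corollary~\ref{corollary-integral} guarantees leaves $\beta$ fixed. Two differences are worth noting. First, the paper argues by contradiction from an arbitrary $R_0\notin\mathcal{D}_R$ and tunes $\alpha$ by adding $x\,f(s_{t,i})$, which requires the auxiliary fact that $P(\textbf{s}_t\,|\,\textbf{s}_{t-1},a_0)\nequiv P(\textbf{s}_t\,|\,\textbf{s}_{t-1},a_1)$ so that the tuning rate $K=\int[P(\cdot\,|\,a_0)-P(\cdot\,|\,a_1)]f\,d\textbf{s}_t$ is nonzero; your action\nobreakdash-dependent constant shift (adding $c$ only on the $a_0$ branch) achieves the same $\alpha$-tuning with tuning rate exactly $1$ and so dispenses with that extra hypothesis --- a genuine simplification. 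Second, your sign bookkeeping is the correct one: the condition $0<\alpha<-\beta$ forces $\beta<0$, which is what you arrange, whereas the paper's write-up says ``assume $\beta(R_0)>0$ WLOG,'' under which the interval $(-\alpha(R_0)/K,\,-(\alpha(R_0)+\beta(R_0))/K)$ it selects $x$ from would be empty; your version repairs that slip. The one caveat you share with the paper is the implicit assumption that the instantaneous dependence of $P$ manifests at one of the two designated state--action pairs $(\textbf{s}_{t-1},a_0)$ or $(\textbf{s}_{t-1},a_1)$ (otherwise $\beta\equiv 0$ and the claim fails); you at least state this assumption explicitly, which the paper does not.
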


\textbf{Case 2: The transition function $P$ is $G-invariant$.}

In this case, we extend environments from single-step to multi-step while imposing a condition that the transition function $P$ is $G-invariant$.
With this condition, we strictly prove that $\mathcal{D}_R \neq \emptyset$:
\begin{theorem}
\label{multi-DR}
    For any stochastic and $G-variant$ transition function $P$ with instantaneous dependence, any $(\textbf{s}_{t-1},a_0,a_1)$, we have
    $\mathcal{D}_R \neq \emptyset$.
\end{theorem}
Similarly, we can add $f(s_{t,i})$ to $R(\textbf{s}_{t-1},a_t,\textbf{s}_t)$ with which $\beta$ is invariant and $\alpha$ changes.
For situations where the transition function $P$ is not $G-invariant$, we provide empirical evidence in the experiment section.

\section{Algorithm}

In this section, we present an MBRL algorithm designed for environments with instantaneous dependence.
Our algorithm builds on MBPO \cite{mbpo}, which is a Dyna-style approach that learns policy with real data and simulated data.
For a stochastic RL environment, existing environmental models learn the mean and variance of the next time-step state for each dimension and make predictions by sampling independent random variables, e.g., standard normal random variables \cite{}.
These predictions assume that the next time-step state variables are conditionally independent given the past state.
To incorporate instantaneous dependence in model prediction, we introduce non-diagonal covariance matrix predictions. 
Specifically, we denote the correlation coefficient matrix as $\Gamma$ and the resulting model rollout process is presented in Algorithm~\ref{algo-model}.

\begin{algorithm}[tb]
   \caption{Model Rollout With Instantaneous dependence}
   \label{algo-model}
\begin{algorithmic}
   \STATE {\bfseries Input:} empty dataset $\mathcal{D}$; start state $\textbf{s}_t$; rollout step $k$; rollout policy $\pi$; rollout environmental model $p_\theta$; Correlation coefficient matrix $\Gamma$.
   \FOR {$\textrm{i}=0$ to $k$}
   \STATE {Sample action $a_{t+i}\sim \pi(\textbf{s}_{t+i})$}
   \STATE {Sample noise $\textbf{e}$ from multivariate normal distribution: $\textbf{e}\sim \mathcal{N}_{Multi}(\bm{0},\Gamma)$}
   \STATE {Get mean and variance for prediction: $\mu,\Sigma=p_\theta(\textbf{s}_{t+i},a_{t+i})$}
   \STATE {Make prediction $\textbf{s}_{t+i+1}=\bm{\mu}+\textbf{e}^T\Sigma^{\frac{1}{2}}$}
    \STATE Record data: $\mathcal{D}\gets \mathcal{D}\cup\{(\textbf{s}_{t+i},\textbf{a}_{t+i},\textbf{s}_{t+i+1})\}$
\ENDFOR
\STATE {Return dataset $\mathcal{D}$}
\end{algorithmic}
\end{algorithm}

Based on Algorithm~\ref{algo-model}, we design an MBRL algorithm directly computing instantaneous dependence by prediction errors as shown in Algorithm~\ref{algo-all}.

\begin{algorithm}[tb]
   \caption{Model-Based Policy Optimization Learning With Instantaneous dependence}
   \label{algo-all}
\begin{algorithmic}
   \STATE {\bfseries Input:} Initialize policy $\pi_\phi$, predictive model $p_\theta$, environment dataset $\mathcal{D}_{env}$, model dataset $\mathcal{D}_{model}$, Initial unit correlation coefficient matrix $\Gamma$.
   \FOR {$N$ epochs}
   \STATE {Train model $p_\theta$ on $\mathcal{D}_{env}$ via maximum likelihood}
   \FOR {$E$ steps}
   \STATE {Take action in environment according to $\pi_\phi$; add to $\mathcal{D}_{env}$}
   \STATE {Compute prediction errors based on unit correlation matrix $\Gamma_0=\textbf{I}$, $\mathcal{D}_{env}$ and $p_\theta$:}
   \STATE {$\textbf{e}_t = \textbf{s}_{t+1}-\hat{\textbf{s}}_{t+1}, \hat{\textbf{s}}_{t+1}\sim (\Gamma_0,p_\theta)$}
   \STATE {Update $\Gamma$ based on prediction error $\textbf{e}_t$}
   \FOR {$M$ model rollouts}
   \STATE {Sample $s_t$ uniformly from $\mathcal{D}_{env}$}
   \STATE {Perform $k$-step rollout through Algorithm~\ref{algo-model}; add to $\mathcal{D}_{model}$}
   \ENDFOR
   \FOR {$G$ gradient updates}
    \STATE {Update policy parameters on model data: $\phi\gets\phi-\lambda_\pi \nabla_\phi J_\pi(\phi,\mathcal{D}_{model})$}

\ENDFOR
\ENDFOR
\ENDFOR
\end{algorithmic}
\end{algorithm}

\section{Experiments}

In this section, we aim to validate our theory with three parts:
(1) Validate the suboptimality of the policy learned from a lagged model by visualizing its difference from the optimal policy in the true environment.
(2) Validate our derived conditions for optimality-inconsistency by estimating policy returns under different environment settings.
(3) Validate that our proposed method successfully enables MBRL algorithms to consider instantaneous dependence and improves policy performance by estimating model prediction errors and policy returns.
In addition, we also prove that our algorithm is insensitive to our mentioned hyperparameters in the sensitivity study.

\textbf{Evaluation Metrics.}
(1) Likelihood probability. We compute the likelihood probabilities of the real data in a lagged model and in our model. 
Higher likelihood probabilities indicate that the real data is more likely to be sampled from the corresponding model, which suggests lower model prediction errors.
(2) Policy return. We compute the average returns in the true environment for the optimal policies from a lagged model and our model. 
Higher policy returns indicate that the corresponding model is better for policy learning and closer to the true environment.

\textbf{Baselines.}
We choose MBPO, a widely-used MBRL algorithm with asymptotic performance rivaling the best model-free algorithms, as the baseline.
In the following, we use ``INS'' to represent our proposed method that learns an instantaneous model while ``LAG'' represents the baseline that learns a lagged model in MBRL.

\textbf{Environment.}
\textbf{(1) 1-D Driving}.
In this environment, the state is composed of the position and velocity of a car, $(p_t,v_t)$, and the action $a_t$ represents the acceleration of the car. 
The goal is to approximate the origin point $(0,0)$ from random start positions.
We design this simple environment to validate the first part above, i.e., visualizing the differences between the two policies.
\textbf{(2) CartPole}.
CartPole \cite{gym} is a classic environment where a pole is attached by an unactuated joint to a cart that moves along a frictionless track.
This environment provides convenience for setting different reward functions and other parameters, allowing us to validate the second part above.
\textbf{(3) MuJoCo}. \begin{wrapfigure}{r}{0.5\linewidth}
    \centering
    \includegraphics[width=0.9\linewidth]{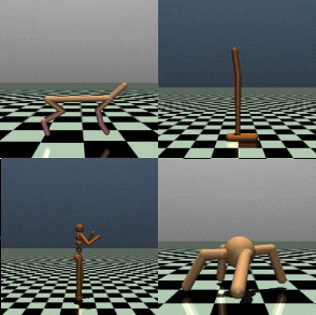}
    \caption{The MuJoCo environments.}
    \label{fig-mujoco-env}
\end{wrapfigure}
MuJoCo \citep{todorov2012mujoco} is a popular benchmark for evaluating MBRL algorithms, which is essentially a physical simulation system with states composed of positions, angles, and velocities.
The transition function in MuJoCo is set to be deterministic and in most cases, the reward is $1$ if the robot satisfies some poses and $0$ otherwise.
To transform it into a stochastic environment with instantaneous dependence, we add dependent noises for the angle and the velocity of each joint and reward items related to dependent state variables.
We validate the third part in this environment.

\subsection{Visualization of optimality-inconsistency}

To visualize the optimality-inconsistency, we plot the optimal policies from a lagged model and the true environment in the 1-D Driving environment.
We first provide a detailed description of this environment, in which the theoretical optimal policies can be easily obtained.
And then we perform simulations to validate the theoretical policies and plot them in figures, allowing us to easily identify the optimality-inconsistency from the two different figures.

In this environment, a car moves in a 1-D space with observations $\textbf{s}_t=(v_t,p_t)$ and actions $a_{t}$ where $p_t$ represents the position, $v_t$ represents the velocity and $a_{t}\subset\{A_0,A_1\}$ represents two different acceleration processes.
The transition function is that 
\begin{align*}
v_t&=v_{t-1}+(-\frac{p_{t-1}}{|p_{t-1}|})\Delta v(a_{t-1})+g(a_{t-1})\epsilon_v \\
    p_t&=p_{t-1}+v_t\Delta t +\epsilon_p,
\end{align*}
where $\Delta v(a_{t-1})$ represents the velocity change and $g(a_{t-1})$ represents the noise magnitude under acceleration $a_{t-1}$.
$(-\frac{p_{t-1}}{|p_{t-1}|})$ is the coefficient of $\Delta v(a_{t-1})$ to make sure that actions always help the position to approximate the origin point.  
We set the max length of a trajectory as $200$, and the start states $(p,v)$ are uniformly sampled from $[-2,2]^2$ and $\gamma=1$.
Here we add $-1$ penalty for each step and stop the game when the car reaches the area that $\sqrt{p^2+v^2}<0.1$.
Here we fill details of task parameters as follows:

\begin{table}[ht]
\begin{center}
\begin{small}
\begin{sc}
    \begin{tabular}{lccccc}
    \toprule
        name&$(\Delta v(A_i))_{i=0}^1$ & $\Delta t$ & $\sigma_v$ & $\frac{g(A_i)}{\Delta v(A_i)}$ & $V^*(p,v)$ \\
        \midrule
        value&$(0.1, 1)$ & $1$ & $1$ & $0.1$ & $-(|p|+|v|)^2$\\
\bottomrule
    \end{tabular}

    \caption{The parameters of 1-D Driving.}
    \label{table-visual-param}
    \end{sc}
\end{small}
\end{center}
\end{table}

\begin{table}[ht]
\begin{center}
\begin{small}
\begin{sc}
    \begin{tabular}{lccc}
    \toprule
         &Policy return & distance & length \\
        \midrule
        INS&$\bm{-57\pm67}$&$\bm{0.067\pm0.023}$ & $\bm{76.63\pm 66.51}$ \\
        \midrule
        LAG&$-143\pm128$& $0.099\pm0.221$ & $162.10\pm 128.20$\\
\bottomrule
    \end{tabular}
    \caption{The comparison of policy returns for using a lagged model and an instantaneous model.
    ``DISTANCE'' represents the distance between the car and the origin point when the game ends.
    ``LENGTH'' represents the steps taken to reach the target area.}
    \label{table-visual-score}
    \end{sc}
\end{small}
\end{center}
\vspace{-2mm}
\end{table}

\begin{figure}
    \centering
    \includegraphics[width=0.9\linewidth]{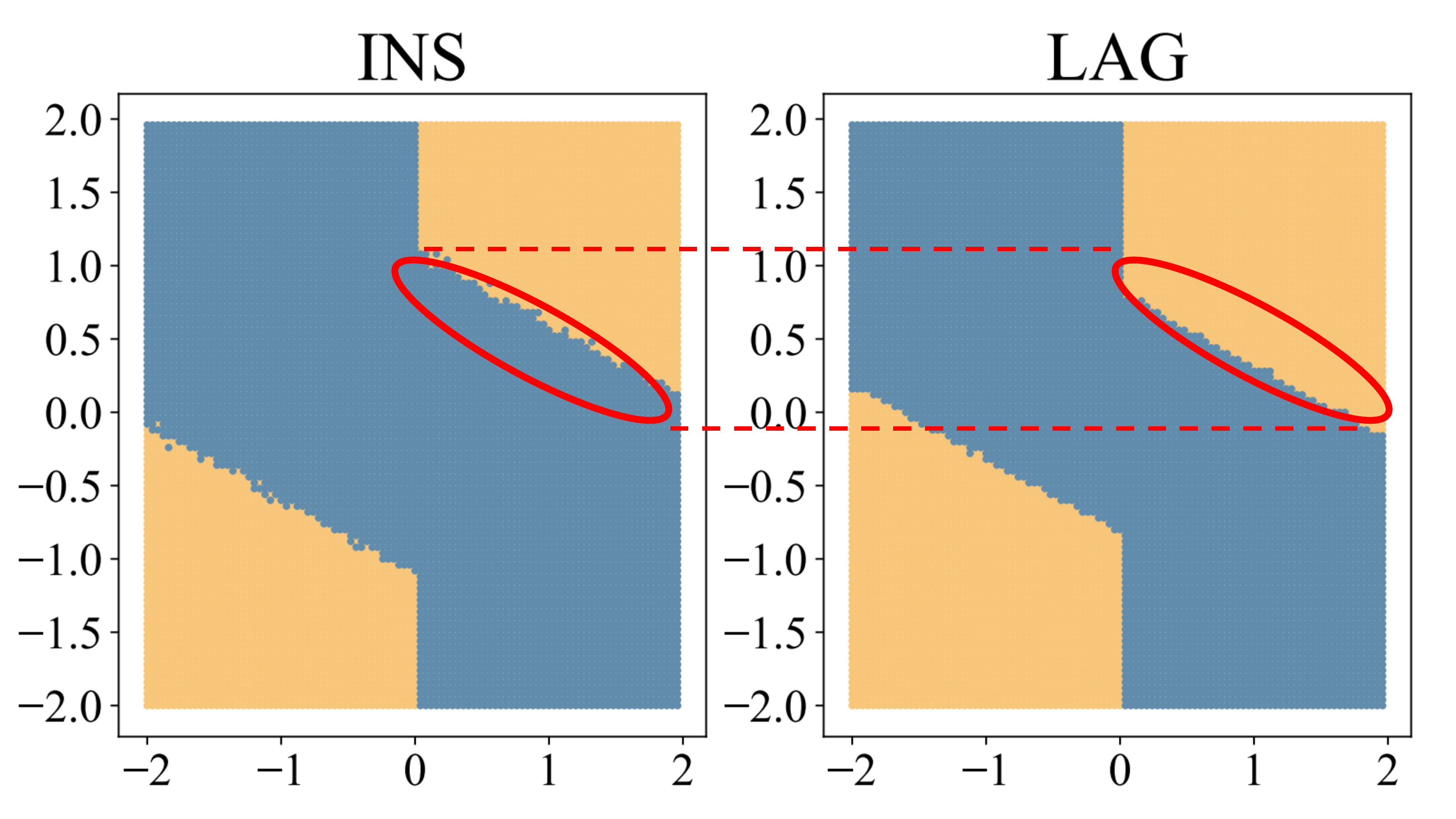}
    \caption{The visualization of the two policies, which is best viewed in color. 
    The coordinates represent the two dimensions in the state, i.e., $(p,v)$.
    Blue represents that the optimal policy chooses action $a_0$ at this state and yellow represents choosing $a_1$.
    \textbf{Left:} The optimal policy from the environment with instantaneous dependence. 
    \textbf{Right:} The optimal policy from a lagged model.
    We circle the difference between the two images.
    }
    \label{fig-visual}
    \vspace{-2mm}
\end{figure}

\begin{figure}[th!]
    \centering
    \includegraphics[width=0.9\linewidth]{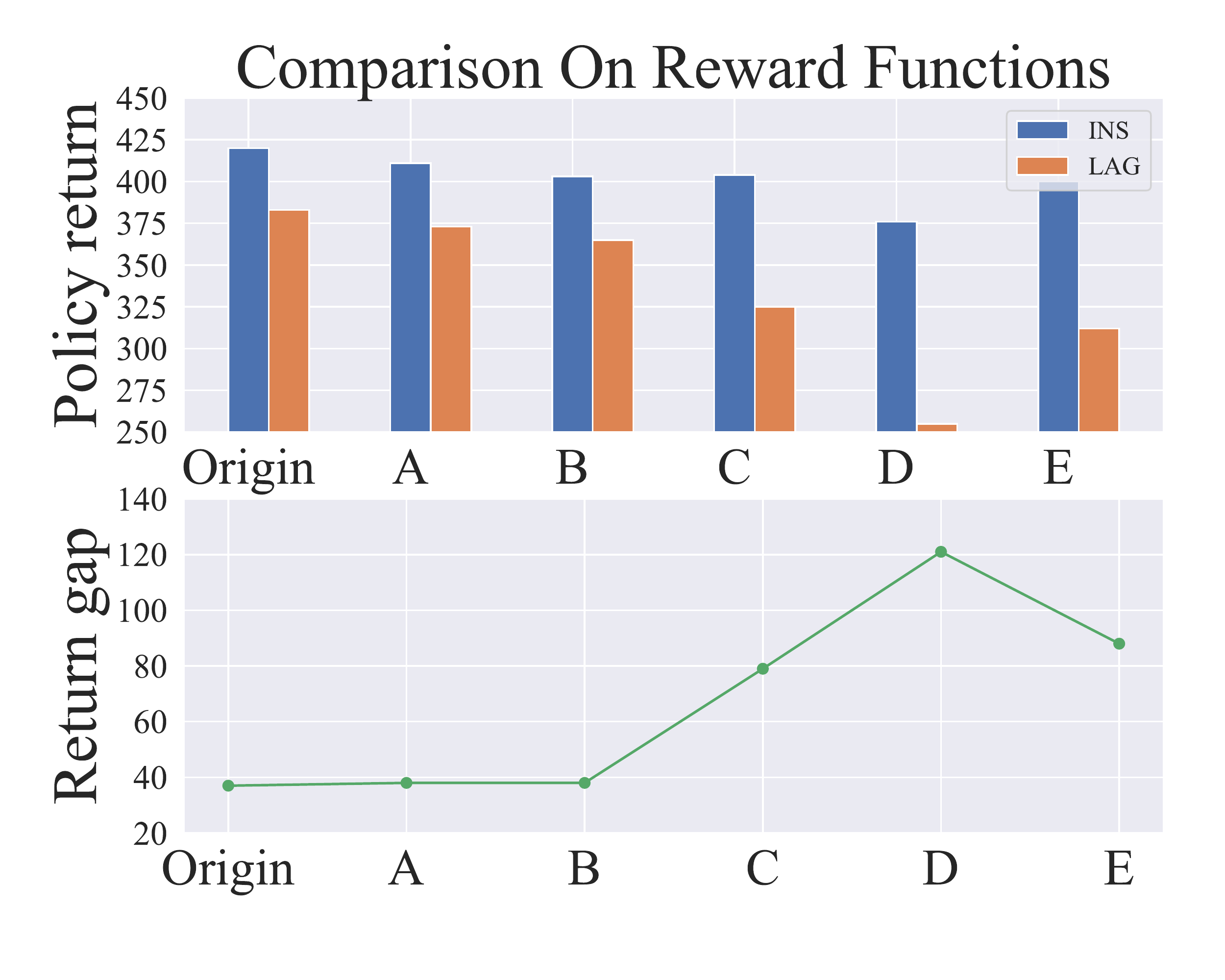}
    \caption{The comparison of INS model and LAG model on various reward functions.
    \textbf{Top:} The policy returns of the two models with the origin reward and five different rewards.
    \textbf{Bottom:} The gap of policy returns of the two models with different rewards.}
    \label{fig-reward-bar}
     \vspace{-2mm}
\end{figure}

\begin{figure*}[ht!]
\centering
\subfigure{
        \centering
        \includegraphics[width=0.22\linewidth]{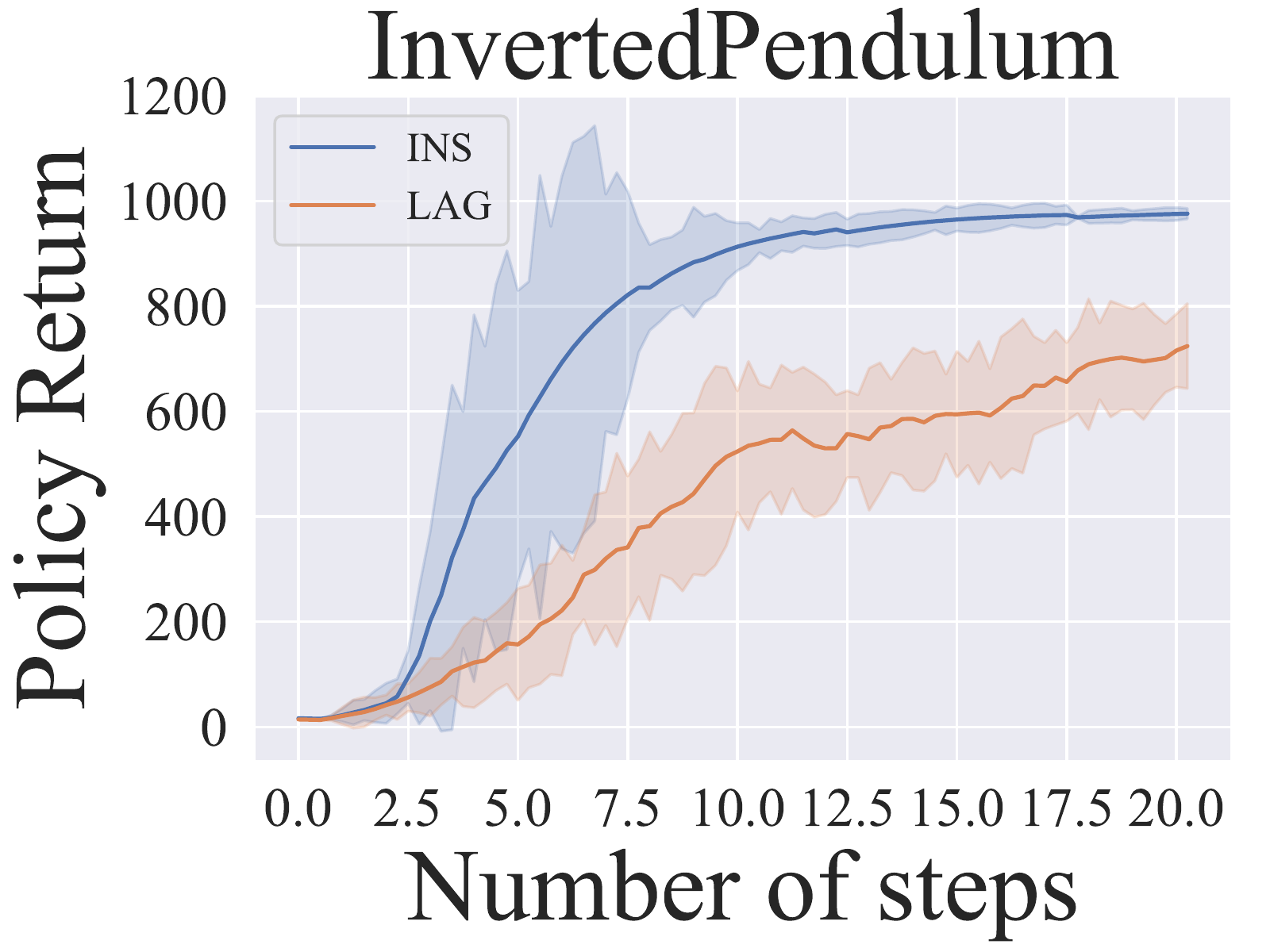} 
        \label{fig-return-inv}}
\subfigure{
        \centering
        \includegraphics[width=0.22\linewidth]{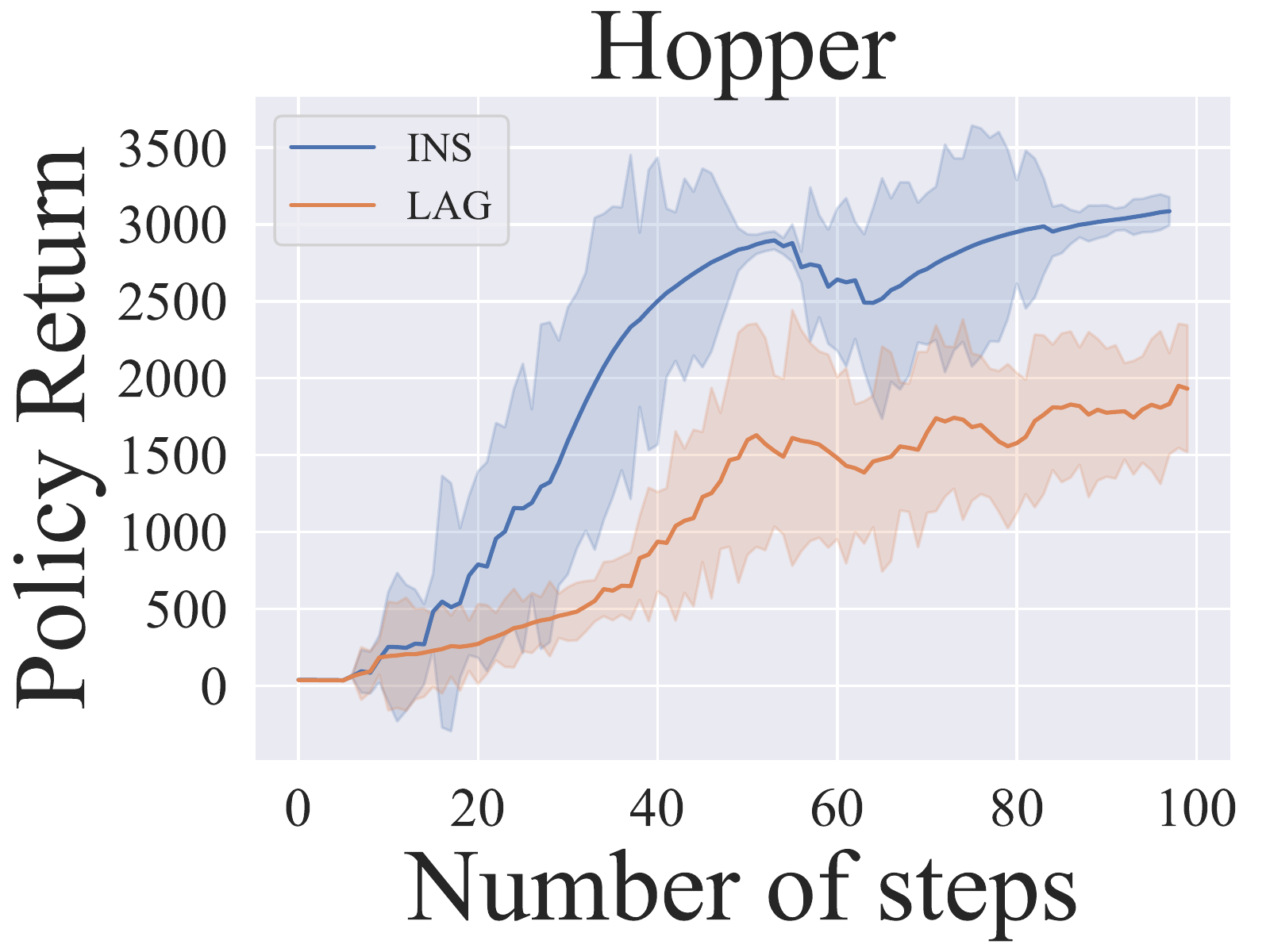} 
        \label{fig-return-inv}}
\subfigure{
        \centering
        \includegraphics[width=0.22\linewidth]{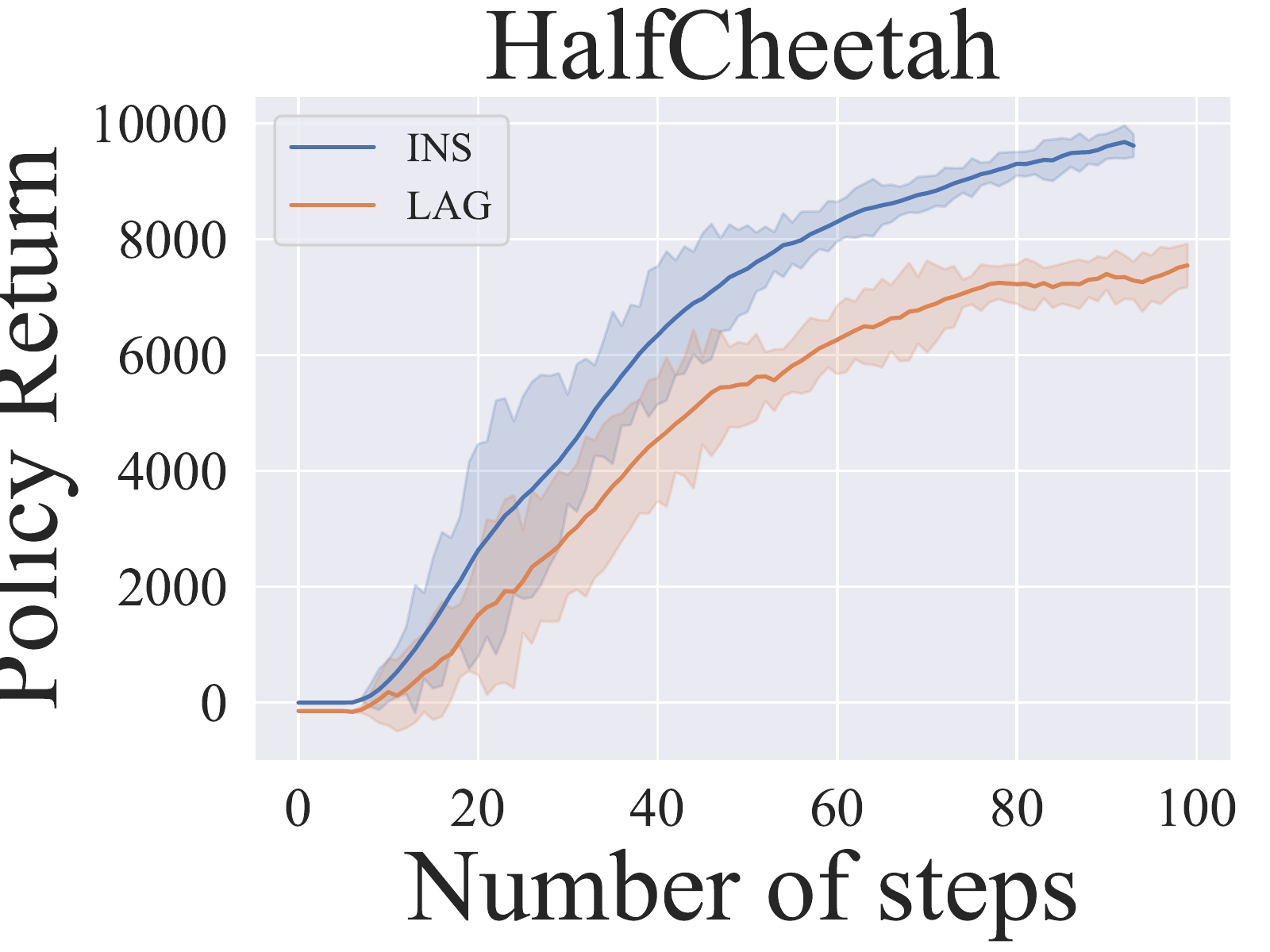} 
        \label{fig-return-inv}}
\subfigure{
        \centering
        \includegraphics[width=0.22\linewidth]{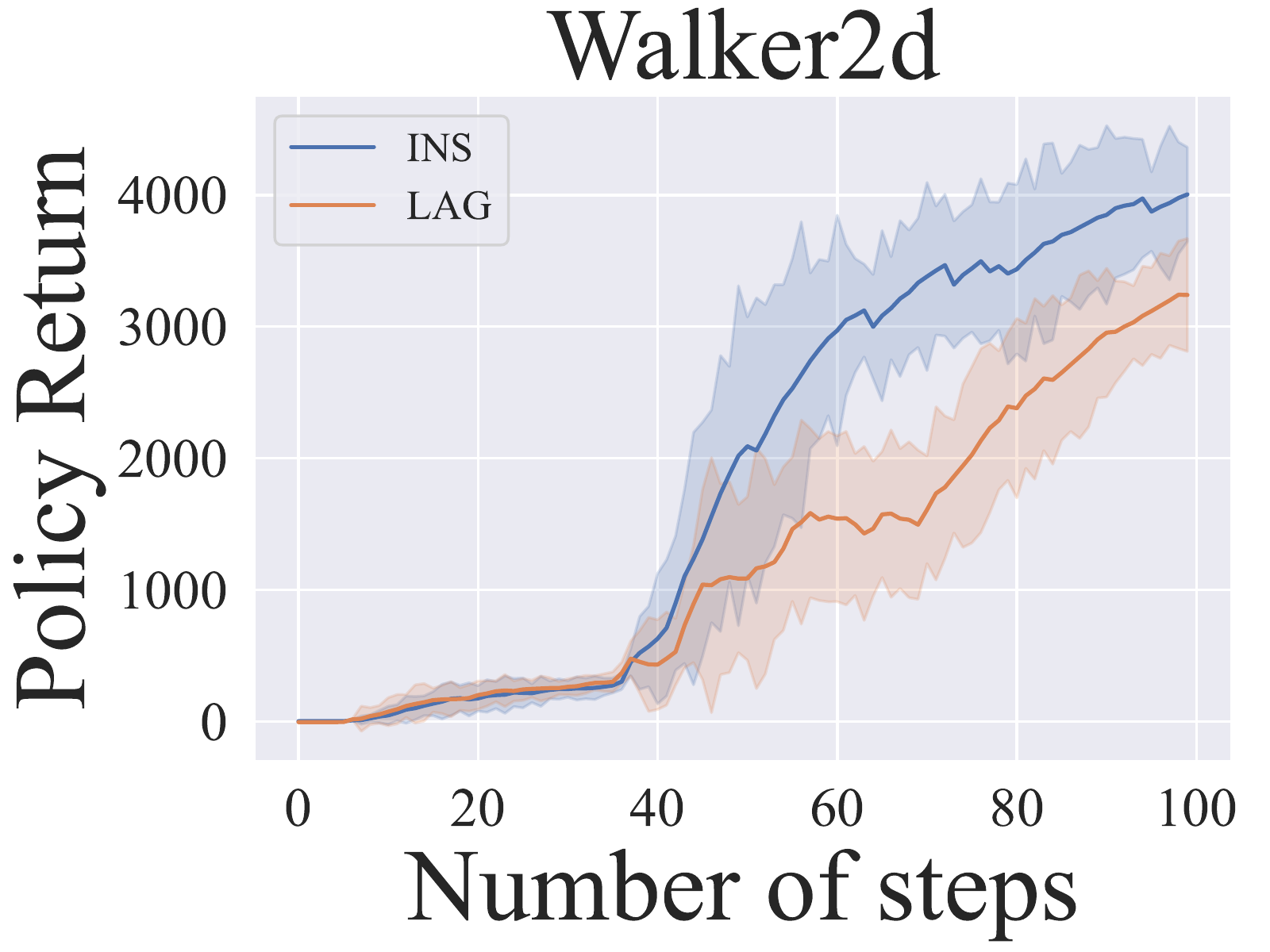} 
        \label{fig-return-inv}}

\subfigure{
        \centering
        \includegraphics[width=0.22\linewidth]{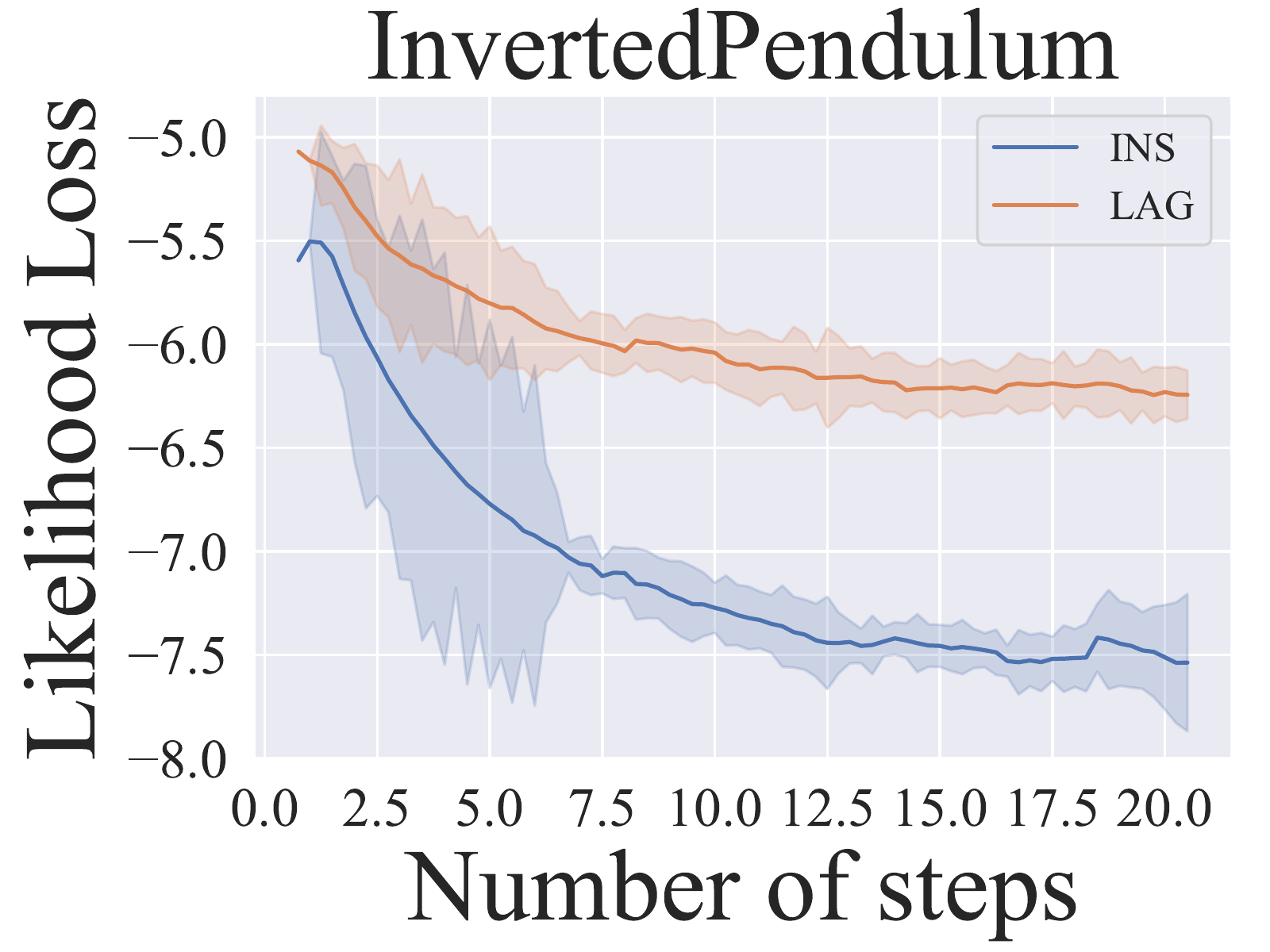} 
        \label{fig-return-inv}}
\subfigure{
        \centering
        \includegraphics[width=0.22\linewidth]{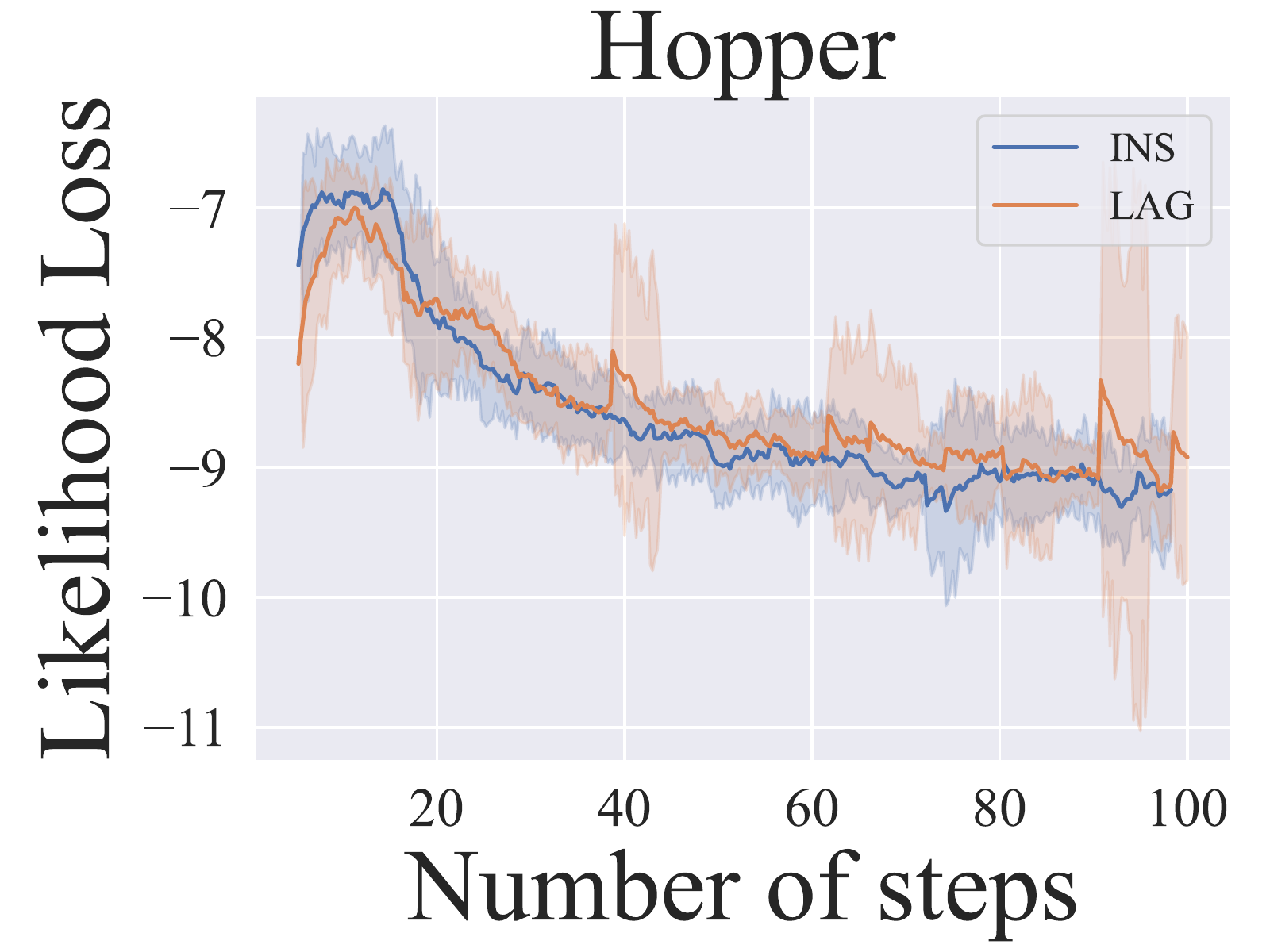} 
        \label{fig-return-inv}}
\subfigure{
        \centering
        \includegraphics[width=0.22\linewidth]{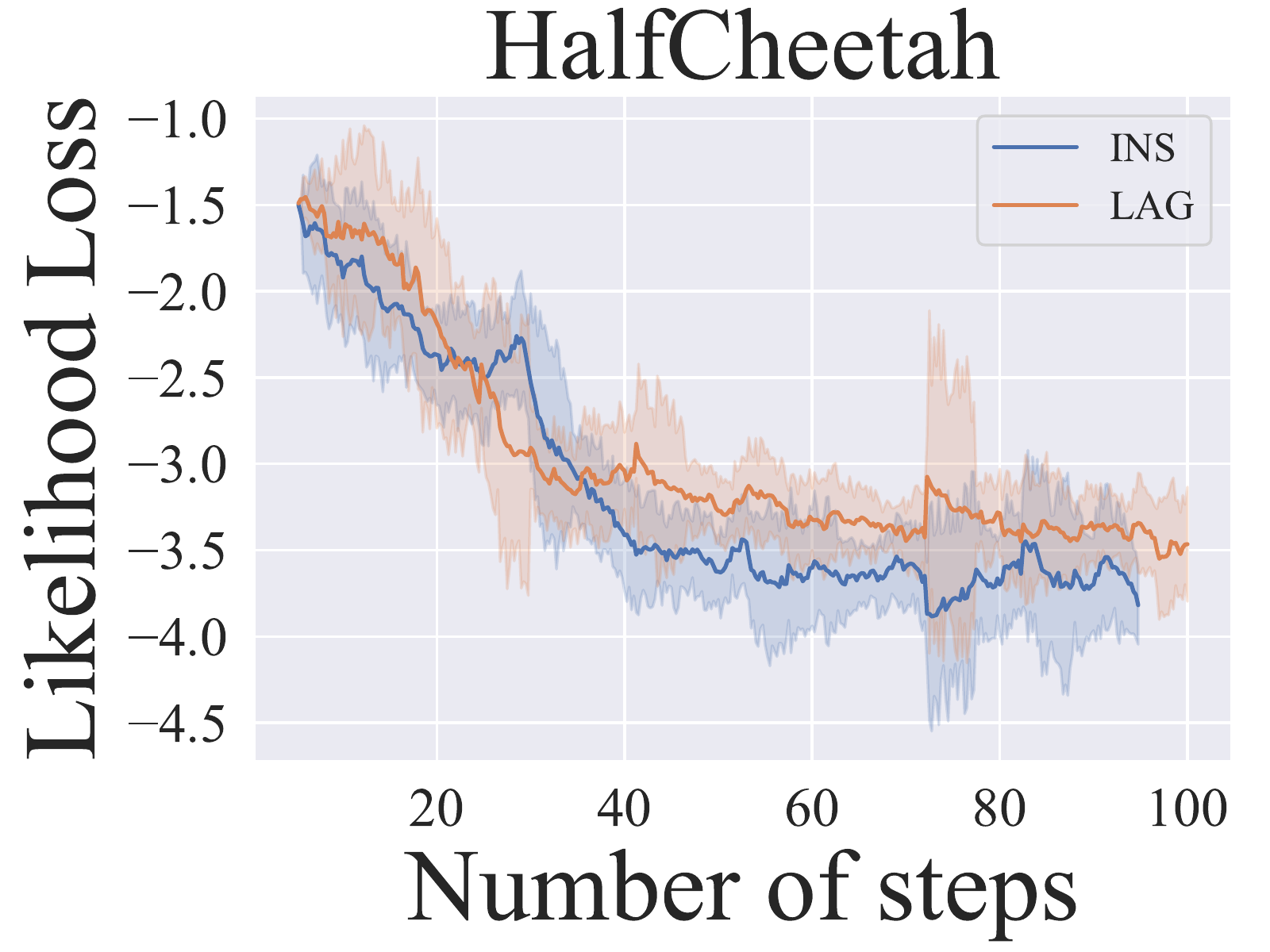} 
        \label{fig-return-inv}}
\subfigure{
        \centering
        \includegraphics[width=0.22\linewidth]{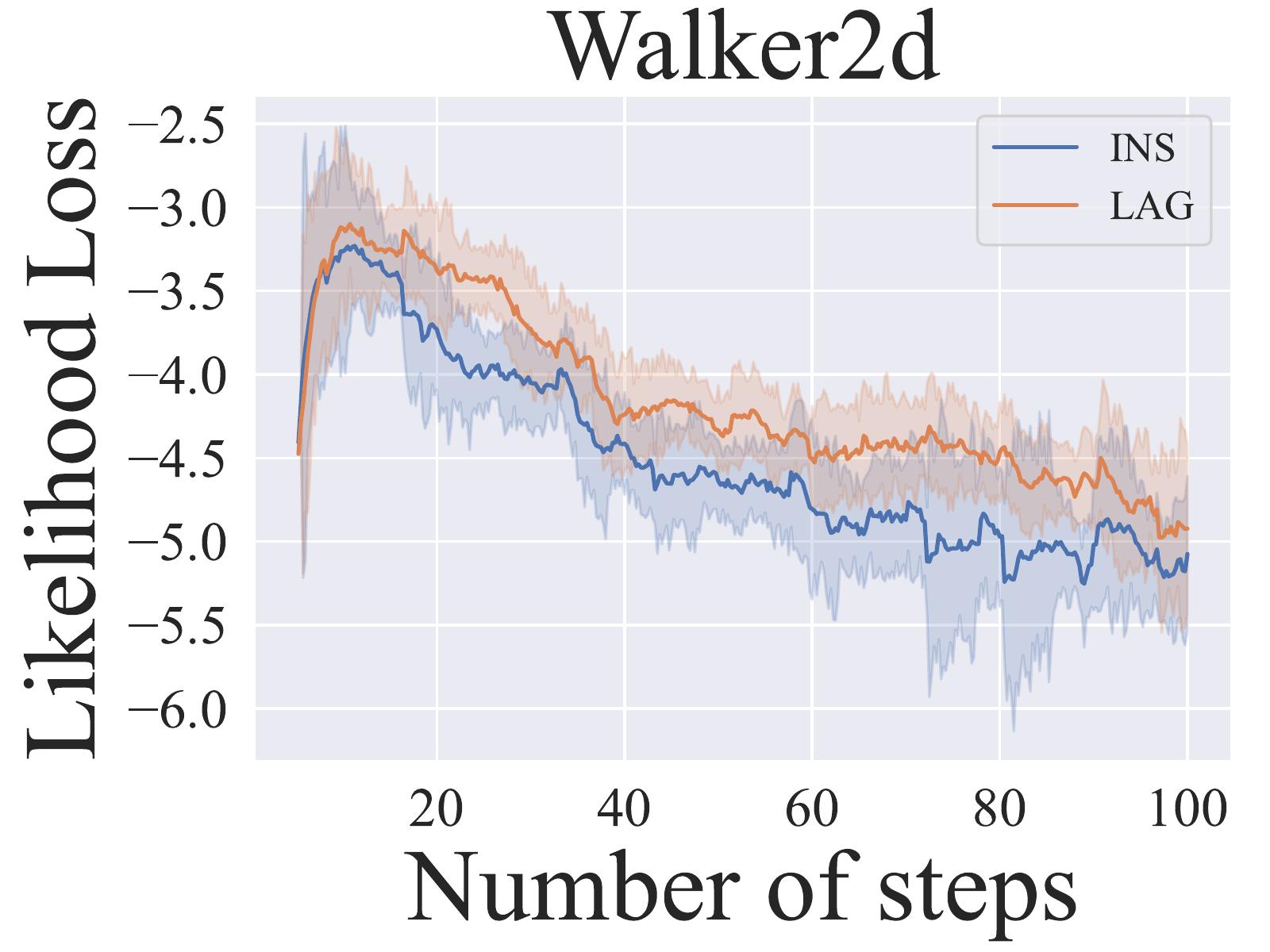} 
        \label{fig-return-inv}}
    \caption{The Comparisons of INS model and LAG model on four MuJoCo environments. \textbf{Top:} The comparison on policy returns. \textbf{Bottom:} The comparison on likelihood loss.}
    \label{fig-mujoco-return-likeli}
    \vspace{-2mm}
\end{figure*}

\begin{figure*}[th!]
\centering
\subfigure[]{
        \centering
        \includegraphics[width=0.3\linewidth]{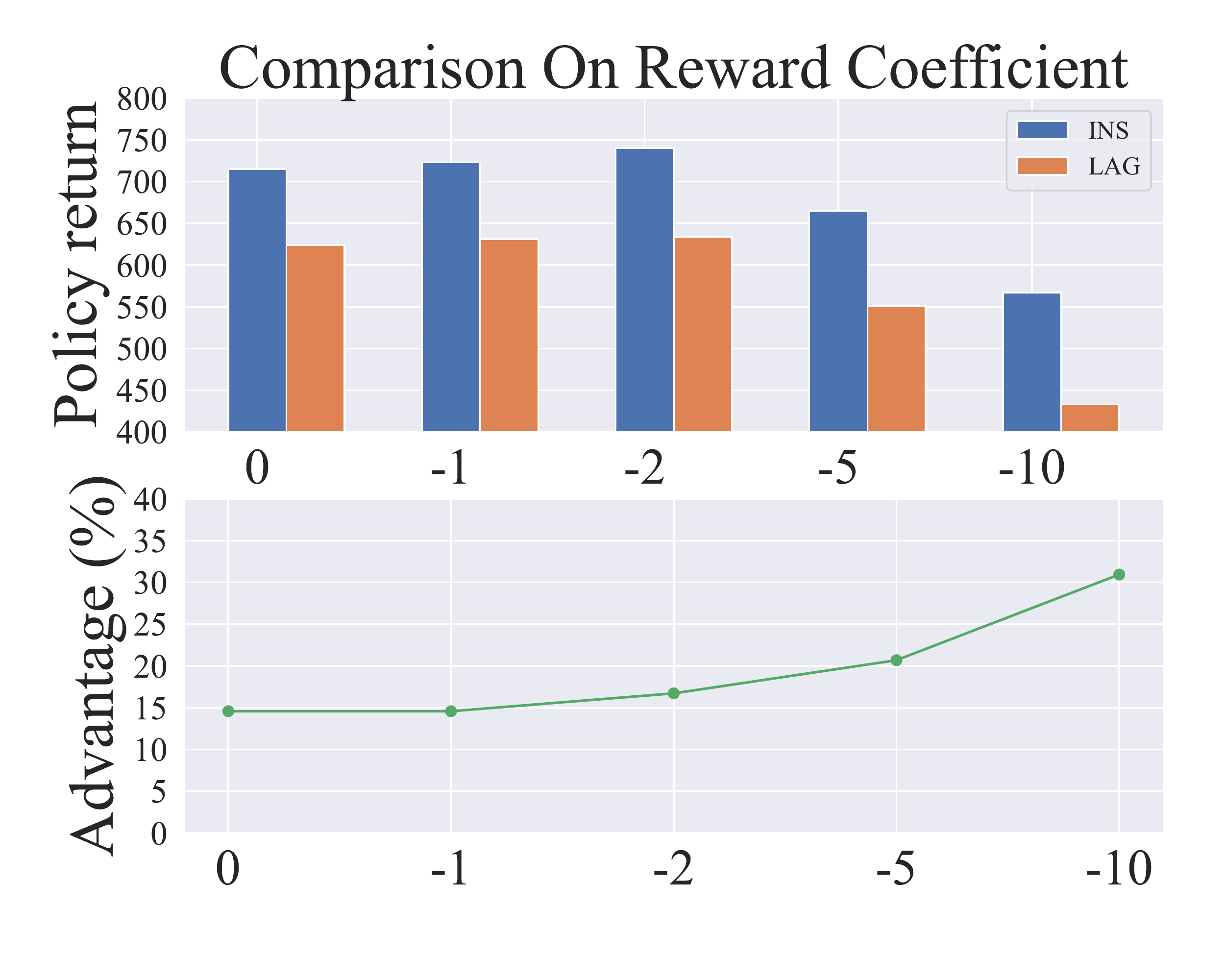} 
        \label{fig-invpen-reward-coef}}
\subfigure[]{
        \centering
        \includegraphics[width=0.3\linewidth]{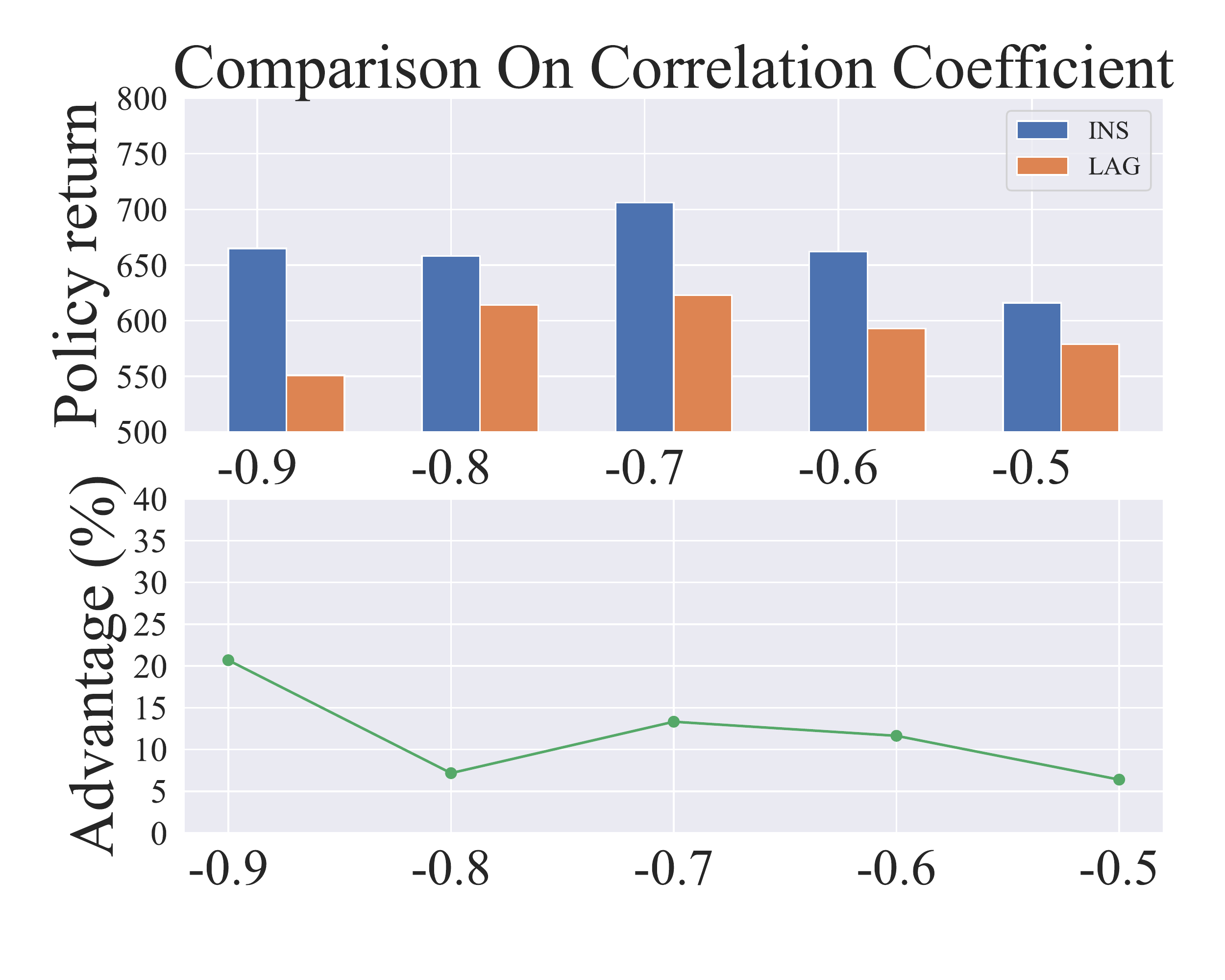} 
        \label{fig-invpen-covar-coef}}
\subfigure[]{
        \centering
        \includegraphics[width=0.3\linewidth]{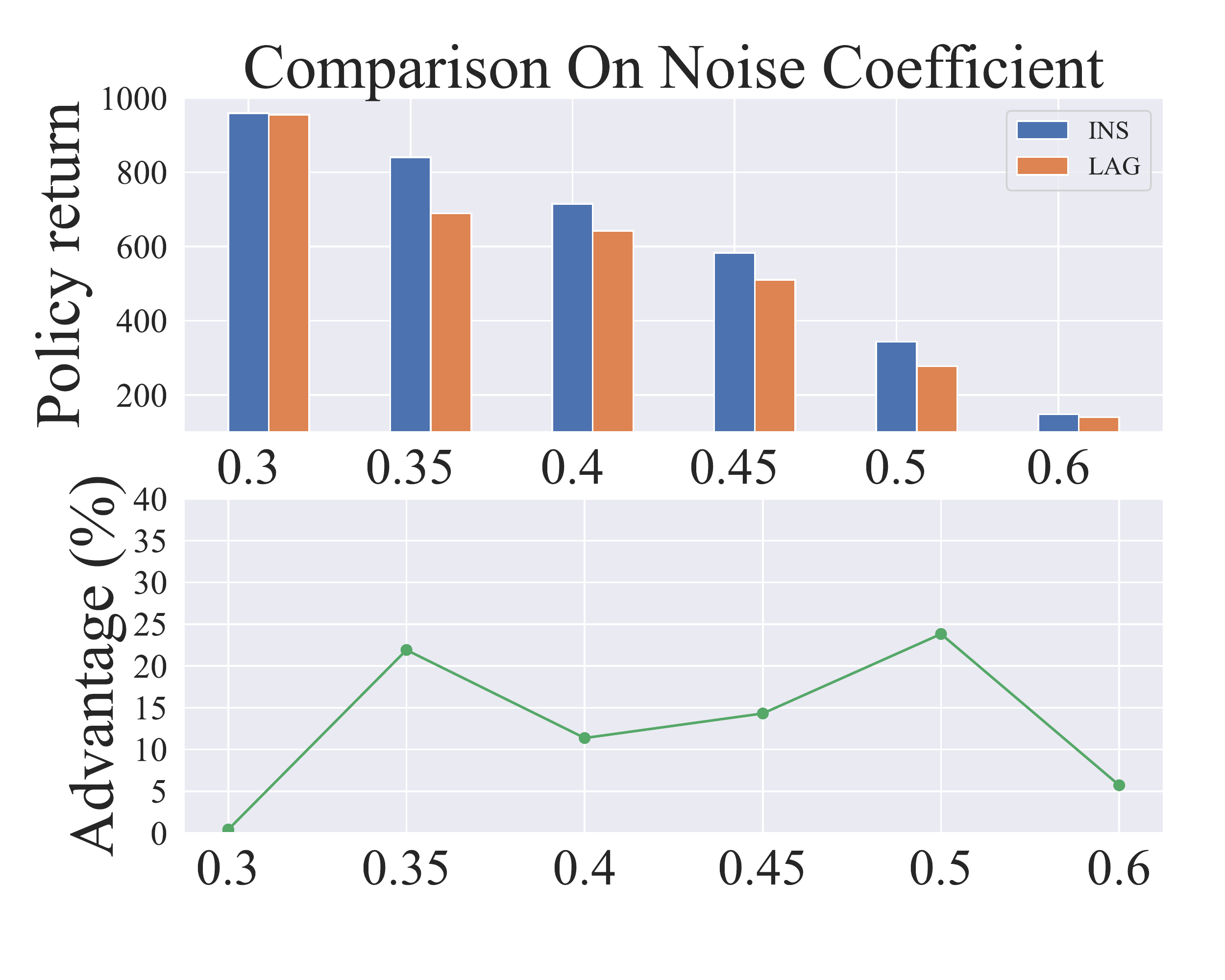} 
        \label{fig-invpen-noise-coef}}
     \vspace{-2mm}
\caption{The comparison of INS model and LAG model on varying coefficients.
\textbf{(a):} The policy returns of the two models with reward coefficients in $\{0,-1,-2,-5,-10\}$.
\textbf{(b): } The policy returns of the two models with correlation coefficients in $\{-0.9,-0.8,-0.7,-0.6,-0.5\}$.
\textbf{(c): }The policy returns of the two models with noise coefficients in $\{0.3,0.35,0.4,0.45,0.5,0.6\}$.
The lower part of each sub-figure is the policy return advantage of the INS model over the LAG model, where the value is computed by (INS - LAG)/LAG.}
\end{figure*}

We learn the optimal policy $\pi^*_{\hat{P}}$ from a lagged model and $\pi^*_{P}$ from the true environment, and visualize the policies in Figure~\ref{fig-visual}.
As circled in red in the figure, the two policies differ in that region, validating the optimality-inconsistency of the lagged model.
We also provide additional visualizations for different parameters of $\Delta t, \Delta v(A_i), \sigma_v, g(A_i)$, where the shape of the region changes with parameters while all the shapes match our theoretical results. 
Those visualization results can be found in the appendix.
We also evaluate the lagged model and the instantaneous model on policy returns for this task and the results in Table~\ref{table-visual-score} also confirm the optimality-inconsistency of the lagged model from the perspective of policy performance.

\subsection{Validation for the conditions of optimality-inconsistency}

To validate our derived conditions of optimality-inconsistency under general transition functions, we modify the CartPole environment by adding various reward items into the original reward function as shown in Table~\ref{table-reward-design}.
\begin{table}[h!]
\begin{tabular}{|c|c|}
\toprule
A &  $-s^2_{t,i}$. \\
\midrule
B & \makecell{$-(s_{t,i}+s_{t,j})^2$,\\ where noises of $s_{t,i},s_{t,j}$ are independent.}\\
\midrule
C &  \makecell{$-(s_{t,i}+s_{t,j})^2$,\\ where noises of $s_{t,i},s_{t,j}$ are dependent.}\\
\midrule
D &  \makecell{$-[(s_{t,i}+s_{t,j})^2+(s_{t,k}+s_{t,m})^2]$, \\where noises of $(s_{t,i},s_{t,j})$ and $(s_{t,k},s_{t,m})$ \\are both dependent.}\\
\midrule
E & \makecell{$-(s_{t,i}+s_{t,j})^2$, \\where noises of $s_{t,i},s_{t,j}$ are dependent \\and the correlation coefficient is $0.9$ \\(In other cases we set the coefficient as $0.5$).}\\
\bottomrule
\end{tabular}
\caption{The design of different reward functions.}
\label{table-reward-design}
\vspace{-2mm}
\end{table}

For rewards $A$ and $B$, we have $G_R\equiv 0$ while for $C$, $D$ and $E$, we have $G_R\nequiv 0$.
Here we make normalization for the above reward items so that the ranges of those items are all bounded in $[-1,0]$.
And these extra rewards will not influence the goal of CartPole, which is to stand up, because at the state of standing up, the extra rewards all equal $0$.

Results in Figure~\ref{fig-reward-bar} show that with reward items $A$ and $B$, the performance gap between using the two environment models is close to that with the original reward function while with reward items $C$, $D$, and $E$, the gap is significantly greater than that with the original reward function.
The difference in the performance gap supports our conclusion that under reward functions satisfying $G_R\nequiv 0$ and general transition functions, a lagged environment model is optimality-inconsistent.

\subsection{Evaluation of proposed method}

To validate that our proposed method successfully enables MBRL algorithms to consider instantaneous dependence, we evaluate our algorithm and the baseline in MuJoCo for four tasks (Inverted Pendulum, Hopper, HalfCheetah, Walker2d).
Since the four tasks have more complex transition functions than the CartPole, experiment results in this subsection can also support our claim in the last subsection.

We first introduce the details of our modification to MuJoCo.
We add dependent noises as follows to make it a stochastic RL environment for evaluation:
\begin{align*}
    \tilde{\textbf{s}}_{t+1} &= P(\textbf{s}_t,\textbf{a}_t),\\
    \textbf{s}_{t+1}&= \tilde{\textbf{s}}_{t+1} + \textbf{e}_t,
\end{align*}
where $P$ is the default transition function, $\textbf{e}\sim \mathcal{N}(\bm{0},\Sigma)$ is added dependent noises.
The noises are proportional to states, that is,
\begin{align*}
    Var(e_{t,i})=\Sigma_{i,i}=r_{noise}(\tilde{\textbf{s}}_{t+1,i} - \textbf{s}_{t,i}),
\end{align*}
where $r_{noise}$ is a coefficient to adjust the magnitude of noises.
We use proportional noises instead of noises with fixed variance 
because the difference between neighboring states ($f(\textbf{s}_t)-\textbf{s}_t$) distributes in a wide range,
in which using noises with fixed variance does not match the reality and often lead to terrible optimal policies. 
In addition, we set the correlation coefficient of dependent noises as $0.9$ or $-0.9$.

We also add extra reward items in MuJoCo according to our theoretical findings.
For dependent variable pair $(s_{t,i},s_{t,j})$, we add extra reward items in the form of $(s_{t,i}+s_{t,j})^2$ with normalization to keep the reward value in the range of $[0,1]$:
\begin{align*}
    \tilde{r}_t &= R(\textbf{s}_t,a_t),\\
    r_t &= \tilde{r}_t + r_{reward}\frac{1}{N_d}\sum^{N_d} \textrm{Norm}((s_{t,i}+s_{t,j})^2).
\end{align*}
where $N_d$ is the number of dependent variable pairs, $\textrm{Norm}(\cdot)\in[0,1]$ and $r_{reward}$ is a coefficient to adjust the weight of extra rewards.

In Figure~\ref{fig-mujoco-return-likeli}, we record likelihood losses and policy returns for our algorithm and the baseline in four tasks with proper noise coefficients and reward coefficients. 
The likelihood loss is the negative log of the likelihood probability that the real data is sampled from the learned environmental model. 
Specifically, for a learned model $\hat{P}(\bm{\mu},\Sigma)$ and collected data $\textbf{x}$ from the real environment, the likelihood loss is:
\begin{align*}
    &\textrm{Likelihood Loss}(\hat{P}(\bm{\mu},\Sigma),\textbf{x}) \\
    =& -2\log[\frac{1}{\sqrt{(2\pi)^n|\Sigma|}}e^{-\frac{1}{2}(\textbf{x}-\bm{\mu})^T\Sigma^{-1}(\textbf{x}-\bm{\mu})}]\\
    =& \log(|\Sigma|)+(\textbf{x}-\textbf{u})^T\Sigma^{-1}(\textbf{x}-\textbf{u})+n\log(2\pi)
\end{align*}
The results show that our algorithm achieves smaller likelihood loss and higher policy returns, which indicates that our model makes predictions with instantaneous dependence, and it also benefits policy learning.

\subsection{Hyper-parameters Sensitivity Study}

In the previous subsection, we evaluate our algorithm with fixed hyper-parameters, for example, we set the noise coefficient to $0.4$, the reward coefficient to $-5$, and the correlation coefficient to $-0.9$ in the Inverted Pendulum task.
In this subsection, we conduct experiments with varying hyper-parameters to demonstrate that our algorithm robustly outperforms the baseline.
We test one hyper-parameter at a time while keeping other hyper-parameters at default values.

The comparison in Figure~\ref{fig-invpen-reward-coef} demonstrates that our algorithm outperforms the baseline consistently with varying reward coefficients, and the advantages increase with higher coefficients.
The policy returns first increase at first and then decrease with increasing coefficient because the added rewards in the Inverted Pendulum help keep the pole standing, but too high reward coefficients make it difficult for the agent to obtain positive rewards and learn effectively.

The comparison in Figure~\ref{fig-invpen-covar-coef} shows that our algorithm outperforms the baseline with varying correlation coefficients.
And roughly speaking, the advantages increase with higher absolute values of the coefficient.

The comparison in Figure~\ref{fig-invpen-noise-coef} demonstrates that our algorithm outperforms the baseline with proper noise coefficients.
Both algorithms reach the upper bound score of the task with extremely small noises. 
And both algorithms fail to achieve good performance with extremely large noises.

\section{Conclusion}

To the best of our knowledge, this paper is the first to consider instantaneous dependence in MBRL, whereas existing works assume conditional independence for state variables given the past states.
To answer the questions that whether and when we need to consider instantaneous dependence, we consider the optimality-consistency of an environment model and prove that the optimality-consistency can only be guaranteed under specific and rare conditions.
We further derive sufficient conditions for optimality-inconsistency in certain situations.
 Additionally, we propose a simple method to enable existing algorithms to consider instantaneous dependence.
Through experiments, we (1) verify the optimality-inconsistency of a lagged model with visualization, (2) empirically validate derived conditions for optimality-inconsistency in certain situations, and (3) validate our proposed method successfully makes more accurate predictions and thus improves policy performance.
This paper takes a meaningful step in studying MBRL with instantaneous dependence.
An interesting future direction would be to investigate how to improve the efficiency and accuracy of learning instantaneous dependence in different RL settings, such as Offline RL.

\bibliography{icml2023}

\begin{thebibliography}{31}
\providecommand{\natexlab}[1]{#1}
\providecommand{\url}[1]{\texttt{#1}}
\expandafter\ifx\csname urlstyle\endcsname\relax
  \providecommand{\doi}[1]{doi: #1}\else
  \providecommand{\doi}{doi: \begingroup \urlstyle{rm}\Url}\fi

\bibitem[Asadi et~al.(2018{\natexlab{a}})Asadi, Cater, Misra, and
  Littman]{multi-step-2}
Asadi, K., Cater, E., Misra, D., and Littman, M.~L.
\newblock Towards a simple approach to multi-step model-based reinforcement
  learning.
\newblock \emph{CoRR}, abs/1811.00128, 2018{\natexlab{a}}.
\newblock URL \url{http://arxiv.org/abs/1811.00128}.

\bibitem[Asadi et~al.(2018{\natexlab{b}})Asadi, Cater, Misra, and
  Littman]{multi-step-3}
Asadi, K., Cater, E., Misra, D., and Littman, M.~L.
\newblock Towards a simple approach to multi-step model-based reinforcement
  learning.
\newblock \emph{CoRR}, abs/1811.00128, 2018{\natexlab{b}}.
\newblock URL \url{http://arxiv.org/abs/1811.00128}.

\bibitem[Bellemare et~al.(2016)Bellemare, Srinivasan, Ostrovski, Schaul,
  Saxton, and Munos]{DBLP:conf/nips/BellemareSOSSM16}
Bellemare, M.~G., Srinivasan, S., Ostrovski, G., Schaul, T., Saxton, D., and
  Munos, R.
\newblock Unifying count-based exploration and intrinsic motivation.
\newblock In \emph{Advances in Neural Information Processing Systems 29}, pp.\
  1471--1479, Barcelona, Spain, 2016.

\bibitem[Bellman(1957)]{bellman1957markovian}
Bellman, R.
\newblock A markovian decision process.
\newblock \emph{Journal of mathematics and mechanics}, 6\penalty0 (5):\penalty0
  679--684, 1957.

\bibitem[Brockman et~al.(2016)Brockman, Cheung, Pettersson, Schneider,
  Schulman, Tang, and Zaremba]{gym}
Brockman, G., Cheung, V., Pettersson, L., Schneider, J., Schulman, J., Tang,
  J., and Zaremba, W.
\newblock Openai gym.
\newblock \emph{arXiv preprint arXiv:1606.01540}, 2016.

\bibitem[Chua et~al.(2018)Chua, Calandra, McAllister, and
  Levine]{DBLP:conf/nips/ChuaCML18}
Chua, K., Calandra, R., McAllister, R., and Levine, S.
\newblock Deep reinforcement learning in a handful of trials using
  probabilistic dynamics models.
\newblock In \emph{Advances in Neural Information Processing Systems 31}, pp.\
  4759--4770, Montr{\'{e}}al, Canada, 2018.

\bibitem[Deisenroth \& Rasmussen(2011{\natexlab{a}})Deisenroth and
  Rasmussen]{DBLP:conf/icml/DeisenrothR11}
Deisenroth, M.~P. and Rasmussen, C.~E.
\newblock {PILCO:} {A} model-based and data-efficient approach to policy
  search.
\newblock In \emph{Proceedings of the 28th International Conference on Machine
  Learning}, pp.\  465--472, Bellevue, Washington, 2011{\natexlab{a}}.

\bibitem[Deisenroth \& Rasmussen(2011{\natexlab{b}})Deisenroth and
  Rasmussen]{gaussian-model}
Deisenroth, M.~P. and Rasmussen, C.~E.
\newblock {PILCO:} {A} model-based and data-efficient approach to policy
  search.
\newblock In Getoor, L. and Scheffer, T. (eds.), \emph{Proceedings of the 28th
  International Conference on Machine Learning, {ICML} 2011, Bellevue,
  Washington, USA, June 28 - July 2, 2011}, pp.\  465--472. Omnipress,
  2011{\natexlab{b}}.
\newblock URL \url{https://icml.cc/2011/papers/323\_icmlpaper.pdf}.

\bibitem[Farahmand(2018)]{DBLP:conf/nips/Farahmand18}
Farahmand, A.
\newblock Iterative value-aware model learning.
\newblock In \emph{Advances in Neural Information Processing Systems 31}, pp.\
  9090--9101, Montr{\'{e}}al, Canada, 2018.

\bibitem[Farahmand et~al.(2017)Farahmand, Barreto, and
  Nikovski]{DBLP:conf/aistats/FarahmandBN17}
Farahmand, A.~M., Barreto, A., and Nikovski, D.
\newblock Value-aware loss function for model-based reinforcement learning.
\newblock In \emph{Proceedings of the 20th International Conference on
  Artificial Intelligence and Statistics}, volume~54, pp.\  1486--1494, Fort
  Lauderdale, Florida, 2017.

\bibitem[Gong et~al.(2015)Gong, Zhang, Schoelkopf, Tao, and
  Geiger]{subsampled-data}
Gong, M., Zhang, K., Schoelkopf, B., Tao, D., and Geiger, P.
\newblock Discovering temporal causal relations from subsampled data.
\newblock In \emph{Proceedings of the 32nd International Conference on Machine
  Learning}, pp.\  1898--1906, Lille, France, 2015.

\bibitem[Gong et~al.(2017)Gong, Zhang, Sch{\"{o}}lkopf, Glymour, and
  Tao]{Temporally-Aggregated}
Gong, M., Zhang, K., Sch{\"{o}}lkopf, B., Glymour, C., and Tao, D.
\newblock Causal discovery from temporally aggregated time series.
\newblock In \emph{Proceedings of the 33rd Conference on Uncertainty in
  Artificial Intelligence}, Sydney, Australia, 2017.

\bibitem[Hafner et~al.(2019)Hafner, Lillicrap, Fischer, Villegas, Ha, Lee, and
  Davidson]{hafner2019learning}
Hafner, D., Lillicrap, T.~P., Fischer, I., Villegas, R., Ha, D., Lee, H., and
  Davidson, J.
\newblock Learning latent dynamics for planning from pixels.
\newblock In \emph{Proceedings of the 36th International Conference on Machine
  Learning}, pp.\  2555--2565, Long Beach, California, 2019.

\bibitem[Hafner et~al.(2020)Hafner, Lillicrap, Ba, and
  Norouzi]{hafner2020dream}
Hafner, D., Lillicrap, T.~P., Ba, J., and Norouzi, M.
\newblock Dream to control: Learning behaviors by latent imagination.
\newblock In \emph{Proceedings of the 8th International Conference on Learning
  Representations}, Addis Ababa, Ethiopia, 2020.

\bibitem[Hafner et~al.(2021)Hafner, Lillicrap, Norouzi, and
  Ba]{hafner2021mastering}
Hafner, D., Lillicrap, T.~P., Norouzi, M., and Ba, J.
\newblock Mastering atari with discrete world models.
\newblock In \emph{Proceedings of the 9th International Conference on Learning
  Representations}, Virtual, Austria, 2021.

\bibitem[Hyv{\"{a}}rinen et~al.(2010)Hyv{\"{a}}rinen, Zhang, Shimizu, and
  Hoyer]{DBLP:journals/jmlr/HyvarinenZSH10}
Hyv{\"{a}}rinen, A., Zhang, K., Shimizu, S., and Hoyer, P.~O.
\newblock Estimation of a structural vector autoregression model using
  non-gaussianity.
\newblock \emph{Journal of Machine Learning Research}, 11:\penalty0 1709--1731,
  2010.

\bibitem[Janner et~al.(2019)Janner, Fu, Zhang, and Levine]{mbpo}
Janner, M., Fu, J., Zhang, M., and Levine, S.
\newblock When to trust your model: Model-based policy optimization.
\newblock In \emph{Advances in Neural Information Processing Systems 32}, pp.\
  12498--12509, Vancouver, Canada, 2019.

\bibitem[Jayaraman et~al.(2019)Jayaraman, Ebert, Efros, and
  Levine]{multi-step-1}
Jayaraman, D., Ebert, F., Efros, A.~A., and Levine, S.
\newblock Time-agnostic prediction: Predicting predictable video frames.
\newblock In \emph{7th International Conference on Learning Representations,
  {ICLR} 2019, New Orleans, LA, USA, May 6-9, 2019}. OpenReview.net, 2019.
\newblock URL \url{https://openreview.net/forum?id=SyzVb3CcFX}.

\bibitem[Kaelbling et~al.(1996)Kaelbling, Littman, and
  Moore]{Kaelbling-rl-survey}
Kaelbling, L.~P., Littman, M.~L., and Moore, A.~W.
\newblock Reinforcement learning: {A} survey.
\newblock \emph{J. Artif. Intell. Res.}, 4:\penalty0 237--285, 1996.
\newblock \doi{10.1613/jair.301}.
\newblock URL \url{https://doi.org/10.1613/jair.301}.

\bibitem[Kaiser et~al.(2020)Kaiser, Babaeizadeh, Milos, Osinski, Campbell,
  Czechowski, Erhan, Finn, Kozakowski, Levine, Mohiuddin, Sepassi, Tucker, and
  Michalewski]{kaiser2019model}
Kaiser, L., Babaeizadeh, M., Milos, P., Osinski, B., Campbell, R.~H.,
  Czechowski, K., Erhan, D., Finn, C., Kozakowski, P., Levine, S., Mohiuddin,
  A., Sepassi, R., Tucker, G., and Michalewski, H.
\newblock Model based reinforcement learning for atari.
\newblock In \emph{Proceedings of the 8th International Conference on Learning
  Representations}, Addis Ababa, Ethiopia, 2020.

\bibitem[Khansari{-}Zadeh \& Billard(2011)Khansari{-}Zadeh and
  Billard]{gaussian-mixture-model}
Khansari{-}Zadeh, S.~M. and Billard, A.
\newblock Learning stable nonlinear dynamical systems with gaussian mixture
  models.
\newblock \emph{{IEEE} Trans. Robotics}, 27\penalty0 (5):\penalty0 943--957,
  2011.
\newblock \doi{10.1109/TRO.2011.2159412}.
\newblock URL \url{https://doi.org/10.1109/TRO.2011.2159412}.

\bibitem[Lai et~al.(2020)Lai, Shen, Zhang, and Yu]{bidirection}
Lai, H., Shen, J., Zhang, W., and Yu, Y.
\newblock Bidirectional model-based policy optimization.
\newblock In \emph{Proceedings of the 37th International Conference on Machine
  Learning, {ICML} 2020, 13-18 July 2020, Virtual Event}, volume 119 of
  \emph{Proceedings of Machine Learning Research}, pp.\  5618--5627. {PMLR},
  2020.
\newblock URL \url{http://proceedings.mlr.press/v119/lai20b.html}.

\bibitem[Luo et~al.(2022)Luo, Xu, Lai, Chen, Zhang, and Yu]{luo-survey}
Luo, F., Xu, T., Lai, H., Chen, X., Zhang, W., and Yu, Y.
\newblock A survey on model-based reinforcement learning.
\newblock \emph{CoRR}, abs/2206.09328, 2022.
\newblock \doi{10.48550/arXiv.2206.09328}.
\newblock URL \url{https://doi.org/10.48550/arXiv.2206.09328}.

\bibitem[Luo et~al.(2019)Luo, Xu, Li, Tian, Darrell, and
  Ma]{DBLP:conf/iclr/LuoXLTDM19}
Luo, Y., Xu, H., Li, Y., Tian, Y., Darrell, T., and Ma, T.
\newblock Algorithmic framework for model-based deep reinforcement learning
  with theoretical guarantees.
\newblock In \emph{Proceedings of the 7th International Conference on Learning
  Representations}, New Orleans, LA, 2019.

\bibitem[Moerland et~al.(2023)Moerland, Broekens, Plaat, Jonker,
  et~al.]{moerland2023model}
Moerland, T.~M., Broekens, J., Plaat, A., Jonker, C.~M., et~al.
\newblock Model-based reinforcement learning: A survey.
\newblock \emph{Foundations and Trends{\textregistered} in Machine Learning},
  16\penalty0 (1):\penalty0 1--118, 2023.

\bibitem[Sutton \& Barto(1998)Sutton and Barto]{sutton-introduction}
Sutton, R.~S. and Barto, A.~G.
\newblock Reinforcement learning: An introduction.
\newblock \emph{{IEEE} Trans. Neural Networks}, 9\penalty0 (5):\penalty0
  1054--1054, 1998.
\newblock \doi{10.1109/TNN.1998.712192}.
\newblock URL \url{https://doi.org/10.1109/TNN.1998.712192}.

\bibitem[Swanson \& Granger(1997)Swanson and Granger]{swanson1997impulse}
Swanson, N.~R. and Granger, C.~W.
\newblock Impulse response functions based on a causal approach to residual
  orthogonalization in vector autoregressions.
\newblock \emph{Journal of the American Statistical Association}, 92\penalty0
  (437):\penalty0 357--367, 1997.

\bibitem[Todorov et~al.(2012)Todorov, Erez, and Tassa]{todorov2012mujoco}
Todorov, E., Erez, T., and Tassa, Y.
\newblock Mujoco: A physics engine for model-based control.
\newblock In \emph{2012 IEEE/RSJ International Conference on Intelligent Robots
  and Systems}, pp.\  5026--5033, 2012.

\bibitem[Wang et~al.(2022)Wang, Xiao, Xu, Zhu, and Stone]{causal-structure-2}
Wang, Z., Xiao, X., Xu, Z., Zhu, Y., and Stone, P.
\newblock Causal dynamics learning for task-independent state abstraction.
\newblock In Chaudhuri, K., Jegelka, S., Song, L., Szepesv{\'{a}}ri, C., Niu,
  G., and Sabato, S. (eds.), \emph{International Conference on Machine
  Learning, {ICML} 2022, 17-23 July 2022, Baltimore, Maryland, {USA}}, volume
  162 of \emph{Proceedings of Machine Learning Research}, pp.\  23151--23180.
  {PMLR}, 2022.
\newblock URL \url{https://proceedings.mlr.press/v162/wang22ae.html}.

\bibitem[Yu et~al.(2020)Yu, Thomas, Yu, Ermon, Zou, Levine, Finn, and Ma]{mopo}
Yu, T., Thomas, G., Yu, L., Ermon, S., Zou, J.~Y., Levine, S., Finn, C., and
  Ma, T.
\newblock {MOPO}: Model-based offline policy optimization.
\newblock In \emph{Advances in Neural Information Processing Systems 33}, pp.\
  14129--14142, Virtual, 2020.

\bibitem[Zhu et~al.(2022)Zhu, Chen, Tian, Zhang, and Yu]{causal-structure-1}
Zhu, Z.-M., Chen, X.-H., Tian, H.-L., Zhang, K., and Yu, Y.
\newblock Offline reinforcement learning with causal structured world models,
  2022.
\newblock URL \url{https://arxiv.org/abs/2206.01474}.

\end{thebibliography}
\bibliographystyle{icml2023}

\newpage
\clearpage
\par
\newpage
\appendix
\onecolumn

\section{Theory}

\begin{lemma}
    Given an arbitrary stochastic transition function $P$ and corresponding $\hat{P}$, if there exists $\tilde{\textbf{s}}$ that is visited by the optimal policy $\pi^*_{P}$, such that 
    \begin{align*}
        &Q^*_{\hat{P}}(\tilde{\textbf{s}},\pi^*_{P}(\tilde{\textbf{s}}))<Q^*_{\hat{P}}(\tilde{\textbf{s}},\pi^*_{\hat{P}}(\tilde{\textbf{s}}))\\
        &Q^*_{P}(\tilde{\textbf{s}},\pi^*_{P}(\tilde{\textbf{s}}))>Q^*_{P}(\tilde{\textbf{s}},\pi^*_{\hat{P}}(\tilde{\textbf{s}})),
    \end{align*}
    we have that the policy return of $J(\pi^*_{P})>J(\pi^*_{\hat{P}})$, that is, $\pi^*_{\hat{P}}$ strictly underperforms the optimal policy $\pi^*_{P}$.
\end{lemma}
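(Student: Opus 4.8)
The plan is to argue by contradiction, converting the two pointwise $Q$-value inequalities at $\tilde{\textbf{s}}$ into a statement about policy return. Since $\pi^*_P$ is by definition the return-maximizing policy of the true MDP, we automatically have $J(\pi^*_P)\ge J(\pi^*_{\hat{P}})$, so the only thing to prove is strictness, i.e.\ that $\pi^*_{\hat{P}}$ is \emph{not} an optimal policy of the true MDP. Suppose, toward a contradiction, that $J(\pi^*_{\hat{P}})=J(\pi^*_P)$; then $\pi^*_{\hat{P}}$ is an optimal policy in $P$, so $V_{\pi^*_{\hat{P}}}=V^*_P$ on the support of the initial-state distribution, and by the standard characterization of optimal policies $\pi^*_{\hat{P}}$ must act greedily with respect to $Q^*_P$ at every state it reaches with positive probability.

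I would then use the hypotheses at $\tilde{\textbf{s}}$, writing $a_0=\pi^*_P(\tilde{\textbf{s}})$ and $a_1=\pi^*_{\hat{P}}(\tilde{\textbf{s}})$ as in the problem formulation. The first inequality $Q^*_{\hat{P}}(\tilde{\textbf{s}},a_0)<Q^*_{\hat{P}}(\tilde{\textbf{s}},a_1)$ forces $a_0\neq a_1$ and records that $\pi^*_{\hat{P}}$ strictly prefers $a_1$ at $\tilde{\textbf{s}}$ under the lagged model, so the two policies genuinely disagree there. The second inequality $Q^*_P(\tilde{\textbf{s}},a_0)>Q^*_P(\tilde{\textbf{s}},a_1)$, combined with optimality of $\pi^*_P$ (which gives $Q^*_P(\tilde{\textbf{s}},a_0)=V^*_P(\tilde{\textbf{s}})$), yields
\begin{align*}
Q^*_P\bigl(\tilde{\textbf{s}},\pi^*_{\hat{P}}(\tilde{\textbf{s}})\bigr)=Q^*_P(\tilde{\textbf{s}},a_1)<Q^*_P(\tilde{\textbf{s}},a_0)=V^*_P(\tilde{\textbf{s}}),
\end{align*}
i.e.\ the action $\pi^*_{\hat{P}}$ selects at $\tilde{\textbf{s}}$ is \emph{strictly} suboptimal in the true MDP. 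Hence $\pi^*_{\hat{P}}$ is not greedy with respect to $Q^*_P$ at $\tilde{\textbf{s}}$, and since $\tilde{\textbf{s}}$ lies on a trajectory realized by $\pi^*_P$ this contradicts the greedy-on-visited-states property of an optimal policy; quantitatively, a performance-difference argument writes $J(\pi^*_{\hat{P}})-J(\pi^*_P)$ as an expectation over the states visited by $\pi^*_{\hat{P}}$ of the non-positive advantages $Q^*_P(\textbf{s},\pi^*_{\hat{P}}(\textbf{s}))-V^*_P(\textbf{s})$, which becomes strictly negative once $\tilde{\textbf{s}}$ carries positive visitation weight. Either way we obtain $J(\pi^*_{\hat{P}})<J(\pi^*_P)$, contradicting the assumed equality.

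The step I expect to be the main obstacle is bridging the hypothesis ``$\tilde{\textbf{s}}$ is visited by $\pi^*_P$'' to ``$\tilde{\textbf{s}}$ is visited by $\pi^*_{\hat{P}}$'': the strict local gap at $\tilde{\textbf{s}}$ only affects $J(\pi^*_{\hat{P}})$ if $\tilde{\textbf{s}}$ is actually encountered with positive probability when $\pi^*_{\hat{P}}$ is run in the true environment. I would close this either by observing that, under the contradiction hypothesis, $\pi^*_{\hat{P}}$ is optimal and therefore agrees in value with $\pi^*_P$ wherever it goes, so that if it missed $\tilde{\textbf{s}}$ entirely one could descend to the earliest state on $\pi^*_P$'s route to $\tilde{\textbf{s}}$ at which the two policies' reachable supports separate and repeat the argument there; or by invoking the surrounding modelling assumption that the behaviour/initial distribution has support broad enough that every reachable state (in particular $\tilde{\textbf{s}}$) is visited. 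Modulo this bookkeeping, the core is one line: optimal policies are greedy along their own trajectories, the second hypothesis says $\pi^*_{\hat{P}}$ is not, hence $\pi^*_{\hat{P}}$ cannot be optimal in the true MDP.
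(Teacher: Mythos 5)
Your contradiction argument is sound in substance and rests on the same two pillars as the paper's proof: $\pi^*_{P}$ is greedy with respect to $Q^*_{P}$, which already gives $J(\pi^*_{P})\geq J(\pi^*_{\hat{P}})$, and the strict hypothesis at $\tilde{\textbf{s}}$ is what must upgrade this to a strict inequality. The paper argues directly rather than by contradiction: it writes $J(\pi)=\int p(\textbf{s})\,Q^*_{P}(\textbf{s},\pi(\textbf{s}))\,d\textbf{s}$ with a single weighting $p$, splits off the $\tilde{\textbf{s}}$ term, bounds the remaining integral using greediness of $\pi^*_{P}$, and then applies $Q^*_{P}(\tilde{\textbf{s}},\pi^*_{P}(\tilde{\textbf{s}}))>Q^*_{P}(\tilde{\textbf{s}},\pi^*_{\hat{P}}(\tilde{\textbf{s}}))$ to the split-off term; the chain closes because $Q^*_{P}(\textbf{s},\pi^*_{\hat{P}}(\textbf{s}))\geq V_{\pi^*_{\hat{P}}}(\textbf{s})$ pointwise, so the final quantity dominates $J(\pi^*_{\hat{P}})$ (the paper writes this last step as an equality, which is exact only in the single-step case, but the inequality goes in the needed direction). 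Your performance-difference phrasing is essentially the same computation wrapped in a reductio, so the two routes are interchangeable.

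The gap you flag is the real one, and you should know the paper does not close it either: the strict local disagreement at $\tilde{\textbf{s}}$ only lowers $J(\pi^*_{\hat{P}})$ if $\tilde{\textbf{s}}$ carries positive weight under the distribution that the lagged policy's return is computed against. The paper silently uses the same $p(\tilde{\textbf{s}})>0$ for both policies, i.e.\ it reads ``visited by $\pi^*_{P}$'' as ``in the support of a common (initial-state) distribution,'' which is precisely your second remedy. Your first remedy --- descending to the earliest state at which the two policies' reachable supports separate and repeating the argument there --- does not work as stated: at that branching state you have no hypothesized strict gap in $Q^*_{P}$, and both branches may be exactly optimal in $P$, so no contradiction arises. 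Keep the support assumption explicit and drop the descent argument; with that, your proof is complete and matches the paper's in strength.
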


\begin{proof}
    Trivial.
    \begin{align*}
        J(\pi^*_{P}) =& \int p(\textbf{s})Q^*_{P}(\textbf{s},\pi^*_{P}(\textbf{s})) \\
        = & \int_{\mathcal{S}/\{\tilde{\textbf{s}}\}} p(\textbf{s})Q^*_{P}(\textbf{s},\pi^*_{P}(\textbf{s}))d\textbf{s}\\
        &+p(\tilde{\textbf{s}})Q^*_{P}(\tilde{\textbf{s}},\pi^*_{P}(\tilde{\textbf{s}}))\\
        \geq &\int_{\mathcal{S}/\{\tilde{\textbf{s}}\}} p(\textbf{s})Q^*_{P}(\textbf{s},\pi^*_{\hat{P}}(\textbf{s}))d\textbf{s}\\
        &+p(\tilde{\textbf{s}})Q^*_{P}(\tilde{\textbf{s}},\pi^*_{P}(\tilde{\textbf{s}}))\\
        > &\int_{\mathcal{S}/\{\tilde{\textbf{s}}\}} p(\textbf{s})Q^*_{P}(\textbf{s},\pi^*_{\hat{P}}(\textbf{s}))d\textbf{s}\\
        &+p(\tilde{\textbf{s}})Q^*_{P}(\tilde{\textbf{s}},\pi^*_{\hat{P}}(\tilde{\textbf{s}}))\\
        =&J(\pi^*_{\hat{P}})
    \end{align*}
\end{proof}

\begin{proof}
\begin{align*}
    &P(s_{t,i},s_{t,j}|\textbf{s}_{t-1},a_{t-1})
    =P(s_{t,i}|\textbf{s}_{t-1},a_{t-1})P(s_{t,j}|\textbf{s}_{t-1},a_{t-1},s_{t,i})
\end{align*}
According to the property of lagged model and the definition of instantaneous dependence, we have
\begin{align*}
    \hat{P}(s_{t,i}|\textbf{s}_{t-1},a_{t-1})&=P(s_{t,i}|\textbf{s}_{t-1},a_{t-1})\\
    \hat{P}(s_{t,j}|\textbf{s}_{t-1},a_{t-1})&=P(s_{t,j}|\textbf{s}_{t-1},a_{t-1})\\
    &\neq P(s_{t,j}|\textbf{s}_{t-1},a_{t-1},s_{t,i}).
\end{align*}
Therefore, 
\begin{align*}
    \hat{P}(s_{t,i},s_{t,j}|\textbf{s}_{t-1},a_{t-1})\neq P(s_{t,i},s_{t,j}|\textbf{s}_{t-1},a_{t-1}).
\end{align*}
\end{proof}

\begin{corollary}
\label{corollary-integral-app}
For any stochastic transition function $P$ and corresponding $\hat{P}$, any $(\textbf{s}_{t-1},a_{t-1})$, we have four corollaries as follows:

(1) For any constant $C$, we have
\begin{align*}
    \int [P(\textbf{s}_t|\textbf{s}_{t-1},a_{t-1})-\hat{P}(\textbf{s}_t|\textbf{s}_{t-1},a_{t-1})]Cd\textbf{s}_t=0.
\end{align*}
(2) For any $s_{t,i}\in\textbf{s}_t$ and an arbitrary function $f(s_{t,i})$, we have
\begin{align*}
    &\int [P(\textbf{s}_t|\textbf{s}_{t-1},a_k)-\hat{P}(\textbf{s}_t|\textbf{s}_{t-1},a_k)]f(s_{t,i})d\textbf{s}_t=0.   
\end{align*}
(3) For any independent variable pair $(s_{t,i},s_{t,j})$ and an arbitrary function $g(s_{t,i},s_{t,j})$, we have
\begin{align*}
    &\int [P(\textbf{s}_t|\textbf{s}_{t-1},a_k)-\hat{P}(\textbf{s}_t|\textbf{s}_{t-1},a_k)]g(s_{t,i},s_{t,j})d\textbf{s}_t=0.
\end{align*}
(4) If there exists instantaneous dependence $(s_{t,i},s_{t,j})$ in $P$, then there exists a function $g(s_{t,i},s_{t,j})$ that cannot be divided into $g_1(s_{t,i})+g_2(s_{t,j})$ such that
\begin{align*}
    &\int [P(\textbf{s}_t|\textbf{s}_{t-1},a_k)-\hat{P}(\textbf{s}_t|\textbf{s}_{t-1},a_k)]g(s_{t,i},s_{t,j})d\textbf{s}_t\neq 0.
\end{align*}
\end{corollary}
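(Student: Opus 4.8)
The plan is to reduce all four claims to the two structural properties of a lagged model recorded just above: that $\hat P$ and $P$ have the same one‑variable conditional marginals, $\hat P(s_{t,i}\mid\textbf{s}_{t-1},a_{t-1})=P(s_{t,i}\mid\textbf{s}_{t-1},a_{t-1})$ for every $i$, and that every two‑variable marginal of $\hat P$ factors, $\hat P(s_{t,i},s_{t,j}\mid\cdot)=\hat P(s_{t,i}\mid\cdot)\,\hat P(s_{t,j}\mid\cdot)$. Since $P$ and $\hat P$ are densities, $\int P\,d\textbf{s}_t=\int\hat P\,d\textbf{s}_t=1$, so claim (1) is immediate, its integrand being $C\,[P-\hat P]$.

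For (2) I would integrate out every coordinate except $s_{t,i}$ with Fubini, reducing $\int P(\textbf{s}_t\mid\cdot)f(s_{t,i})\,d\textbf{s}_t$ to $\int f(s_{t,i})\,P(s_{t,i}\mid\cdot)\,ds_{t,i}$, the integral against the one‑variable marginal, and likewise for $\hat P$; equality of the coordinate‑$i$ marginals then makes the two integrals coincide, so their difference is $0$. Claim (3) is the same computation carried out while retaining two coordinates: marginalizing out the other $n-2$ coordinates leaves an integral against the joint marginal $P(s_{t,i},s_{t,j}\mid\cdot)$, resp. $\hat P(s_{t,i},s_{t,j}\mid\cdot)$. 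For $\hat P$ this joint marginal factors as $\hat P(s_{t,i}\mid\cdot)\hat P(s_{t,j}\mid\cdot)$; for $P$, the hypothesis that $(s_{t,i},s_{t,j})$ is an independent pair gives $P(s_{t,i},s_{t,j}\mid\cdot)=P(s_{t,i}\mid\cdot)P(s_{t,j}\mid\cdot)$; combined with marginal agreement, the two joint marginals are identical, so the difference integrates to zero against every $g(s_{t,i},s_{t,j})$.

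For (4) set $\Delta(s_{t,i},s_{t,j}):=P(s_{t,i},s_{t,j}\mid\cdot)-\hat P(s_{t,i},s_{t,j}\mid\cdot)$, the difference of the two joint marginals; by the argument established just above, instantaneous dependence between $s_{t,i}$ and $s_{t,j}$ is precisely the statement $\Delta\not\equiv 0$. As in (2), $\Delta$ has vanishing one‑variable marginals, $\int\Delta(s_{t,i},s_{t,j})\,ds_{t,j}=0$ for all $s_{t,i}$ and symmetrically. I would take the witness $g:=\Delta$ itself. On one hand $\int\Delta\cdot g\,ds_{t,i}\,ds_{t,j}=\int\Delta^2>0$ since $\Delta\not\equiv 0$, and re‑inserting the trivial marginalization over the remaining coordinates gives $\int[P-\hat P]\,g\,d\textbf{s}_t\neq 0$. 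On the other hand $\Delta$ cannot be written as $g_1(s_{t,i})+g_2(s_{t,j})$: if it could, Fubini together with the vanishing marginals would force $\int\Delta\cdot\big(g_1(s_{t,i})+g_2(s_{t,j})\big)=0$, i.e.\ $\int\Delta^2=0$, contradicting $\Delta\not\equiv 0$. Hence $g=\Delta$ is a legitimate non‑separable witness.

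Each step is short; the only real care is in the measure‑theoretic bookkeeping of (4) — justifying Fubini and interpreting ``$\Delta\not\equiv 0\Rightarrow\int\Delta^2>0$'', which needs $\Delta$ to be regular enough (e.g.\ the densities square‑integrable over the relevant marginal, or at least that $\{\Delta\neq 0\}$ has positive measure). I expect this technical point, rather than any conceptual difficulty, to be the main obstacle.
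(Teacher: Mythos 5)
Your proposal is correct, and for parts (1)--(3) it follows essentially the same route as the paper: factor out the constant and use that both densities integrate to one; marginalize down to the relevant coordinate(s) via Fubini; and for (3) observe that under independence the pair marginal of $P$ factors just as that of $\hat P$ does, so the two pair marginals agree (the paper reaches the same conclusion by a slightly more step-by-step substitution through an auxiliary function $\tilde g$, but the content is identical).

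For part (4) you actually go further than the paper. The paper's proof merely reduces the integral to the pair marginals and then asserts that, since $P(s_{t,i},s_{t,j}\mid\cdot)\neq\hat P(s_{t,i},s_{t,j}\mid\cdot)$, some non-separable $g$ makes the integral nonzero --- it neither exhibits such a $g$ nor checks that it cannot be written as $g_1(s_{t,i})+g_2(s_{t,j})$. Your choice of witness $g=\Delta:=P(s_{t,i},s_{t,j}\mid\cdot)-\hat P(s_{t,i},s_{t,j}\mid\cdot)$ fills both gaps: $\int\Delta^2>0$ gives a nonzero integral, and the vanishing one-variable marginals of $\Delta$ (which follow from the marginal-agreement property of the lagged model) force $\int\Delta\cdot(g_1+g_2)=0$ for any separable function, so $\Delta$ itself cannot be separable. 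This is a genuine strengthening of the paper's argument; the only caveats are the mild regularity you already flag (square-integrability of the pair marginals and $\{\Delta\neq 0\}$ having positive measure so that ``$\neq$'' in the paper's dependence assumption really yields $\int\Delta^2>0$), which are harmless for the density models considered here.
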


 \begin{proof}
For any constant $C$,
\begin{align*}
&\int [P(\textbf{s}_t|\textbf{s}_{t-1},a_{t-1})-\hat{P}(\textbf{s}_t|\textbf{s}_{t-1},a_{t-1})]Cd\textbf{s}_t\\
    &=C\int [P(\textbf{s}_t|\textbf{s}_{t-1},a_{t-1})-\hat{P}(\textbf{s}_t|\textbf{s}_{t-1},a_{t-1})]d\textbf{s}_t\\
    &=0
\end{align*}

For any $s_{t,i}\in\textbf{s}_t$ and an arbitrary function $f(s_{t,i})$, we have
\begin{align*}
    &\int [P(\textbf{s}_t|\textbf{s}_{t-1},a_k)-\hat{P}(\textbf{s}_t|\textbf{s}_{t-1},a_k)]f(s_{t,i})d\textbf{s}_t\\
    =&\int [P(s_{t,i}|\textbf{s}_{t-1},a_k)-\hat{P}(s_{t,i}|\textbf{s}_{t-1},a_k)] f(s_{t,i})ds_{t,i}\\
    =&0   
\end{align*}

For any independent variable pair $(s_{t,i},s_{t,j})$ and an arbitrary function $g(s_{t,i},s_{t,j})$, we have
\begin{align*}
    &\int [P(\textbf{s}_t|\textbf{s}_{t-1},a_{t-1})-\hat{P}(\textbf{s}_t|\textbf{s}_{t-1},a_{t-1})]g(s_{t,i},s_{t,j})d\textbf{s}_t\\
    =&\int\int [P(s_{t,i},s_{t,j}|\textbf{s}_{t-1},a_{t-1})-\hat{P}(s_{t,i},s_{t,j}|\textbf{s}_{t-1},a_{t-1})]\\
    &g(s_{t,i},s_{t,j})ds_{t,i}ds_{t,j}\\
    =&\int\int [P(s_{t,i}|\textbf{s}_{t-1},a_{t-1})P(s_{t,j}|\textbf{s}_{t-1},a_{t-1})\\
    &-\hat{P}(s_{t,i}|\textbf{s}_{t-1},a_{t-1})\hat{P}(s_{t,j}|\textbf{s}_{t-1},a_{t-1})]
    g(s_{t,i},s_{t,j})ds_{t,i}ds_{t,j}\\
    =&\int\int P(s_{t,i}|\textbf{s}_{t-1},a_{t-1})P(s_{t,j}|\textbf{s}_{t-1},a_{t-1})g(s_{t,i},s_{t,j})ds_{t,i}ds_{t,j}\\
    &-\int\int \hat{P}(s_{t,i}|\textbf{s}_{t-1},a_{t-1})\hat{P}(s_{t,j}|\textbf{s}_{t-1},a_{t-1})g(s_{t,i},s_{t,j})ds_{t,i}ds_{t,j}\\
    =&\int[\int P(s_{t,i}|\textbf{s}_{t-1},a_{t-1})g(s_{t,i},s_{t,j})ds_{t,i}]P(s_{t,j}|\textbf{s}_{t-1},a_{t-1})ds_{t,j}\\
    &-\int[\int \hat{P}(s_{t,i}|\textbf{s}_{t-1},a_{t-1})g(s_{t,i},s_{t,j})ds_{t,i}]\hat{P}(s_{t,j}|\textbf{s}_{t-1},a_{t-1})ds_{t,j}\\
    =&\int[\int \hat{P}(s_{t,i}|\textbf{s}_{t-1},a_{t-1})g(s_{t,i},s_{t,j})ds_{t,i}]P(s_{t,j}|\textbf{s}_{t-1},a_{t-1})ds_{t,j}\\
    &-\int[\int \hat{P}(s_{t,i}|\textbf{s}_{t-1},a_{t-1})g(s_{t,i},s_{t,j})ds_{t,i}]\hat{P}(s_{t,j}|\textbf{s}_{t-1},a_{t-1})ds_{t,j}\\
    & (\textrm{Since $P(s_{t,i}|\textbf{s}_{t-1},a_{t-1})= \hat{P}(s_{t,i}|\textbf{s}_{t-1},a_{t-1})$}) \\
    =&\int[\tilde{g}(s_{t,j}|\textbf{s}_{t-1},a_{t-1})]P(s_{t,j}|\textbf{s}_{t-1},a_{t-1})ds_{t,j}\\
    &-\int[\tilde{g}(s_{t,j}|\textbf{s}_{t-1},a_{t-1})]\hat{P}(s_{t,j}|\textbf{s}_{t-1},a_{t-1})ds_{t,j}\\
    =&0,
\end{align*}
where $\tilde{g}(s_{t,j}|\textbf{s}_{t-1},a_{t-1})=\int \hat{P}(s_{t,i}|\textbf{s}_{t-1},a_{t-1})g(s_{t,i},s_{t,j})ds_{t,i}$.

If there exists instantaneous dependence $(s_{t,i},s_{t,j})$ in $P$, then there exists a function $g(s_{t,i},s_{t,j})$ that cannot be divided into $g_1(s_{t,i})+g_2(s_{t,j})$ such that
\begin{align*}
    &\int [P(\textbf{s}_t|\textbf{s}_{t-1},a_{t-1})-\hat{P}(\textbf{s}_t|\textbf{s}_{t-1},a_{t-1})]g(s_{t,i},s_{t,j})d\textbf{s}_t\\
    =&\int\int [P(s_{t,i},s_{t,j}|\textbf{s}_{t-1},a_{t-1})-\hat{P}(s_{t,i},s_{t,j}|\textbf{s}_{t-1},a_{t-1})]
    g(s_{t,i},s_{t,j})ds_{t,i}ds_{t,j}\\
    \neq&0.
\end{align*}
\end{proof}

\begin{theorem}
    \label{theorem-GV*-app}
    For any $(\textbf{s}_{t-1},a_{t-1})$, any deterministic function $F(\textbf{s}_t)$ with the partition in Formula~\ref{formula-V-partition} and corresponding $G_{F}(\textbf{s}_t)$, any stochastic function $P$ and corresponding $\hat{P}$, we have
    \begin{align}
    \label{formula-GV*}
         &\int [P(\textbf{s}_t|\textbf{s}_{t-1},a_{t-1})-\hat{P}(\textbf{s}_t|\textbf{s}_{t-1},a_{t-1})]F(\textbf{s}_t) d\textbf{s}_t\nonumber \\  
    =&\int [P(\textbf{s}_t|\textbf{s}_{t-1},a_{t-1})-\hat{P}(\textbf{s}_t|\textbf{s}_{t-1},a_{t-1})]G_{F}(\textbf{s}_t) d\textbf{s}_t.
    \end{align}    
\end{theorem}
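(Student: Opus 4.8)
The plan is to expand $F(\textbf{s}_t)$ using the four-part decomposition in Formula~\ref{formula-V-partition} and then apply Corollary~\ref{corollary-integral-app} term by term. Write $\Delta(\textbf{s}_t) := P(\textbf{s}_t|\textbf{s}_{t-1},a_{t-1}) - \hat{P}(\textbf{s}_t|\textbf{s}_{t-1},a_{t-1})$ for brevity. By linearity of the integral,
\begin{align*}
\int \Delta(\textbf{s}_t) F(\textbf{s}_t)\, d\textbf{s}_t
= \int \Delta(\textbf{s}_t) C\, d\textbf{s}_t
&+ \sum_{i=1}^{n_f} \int \Delta(\textbf{s}_t) f_i(s_{t,i})\, d\textbf{s}_t \\
&+ \sum_{k=1}^{n_g} \int \Delta(\textbf{s}_t) g_k(\textbf{s}_{t,D_k})\, d\textbf{s}_t
+ \sum_{k=n_g+1}^{n_g+n_h} \int \Delta(\textbf{s}_t) g_k(\textbf{s}_{t,D_k})\, d\textbf{s}_t.
\end{align*}

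First I would dispatch the constant term via Corollary~\ref{corollary-integral-app}(1), which gives $0$. Next, each single-dimension term $f_i(s_{t,i})$ vanishes by Corollary~\ref{corollary-integral-app}(2). The remaining work is the two families of multi-dimension terms $g_k(\textbf{s}_{t,D_k})$. For the last family ($n_g+1 \leq k \leq n_g+n_h$), the decomposition guarantees that $\textbf{s}_{t,D_k}$ contains no instantaneous-dependent pair, i.e., all variables indexed by $D_k$ are mutually independent given $(\textbf{s}_{t-1},a_{t-1})$. Here I would need a mild extension of Corollary~\ref{corollary-integral-app}(3) from pairs to arbitrary-size independent tuples — this is a routine induction on $|D_k|$, integrating out one independent variable at a time exactly as in the appendix proof of part (3) (using $P(s_{t,i}|\cdot) = \hat{P}(s_{t,i}|\cdot)$ at each step and the fact that both $P$ and $\hat{P}$ factorize over mutually independent coordinates). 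This shows every term in the last sum is $0$.

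What survives is precisely $\sum_{k=1}^{n_g} \int \Delta(\textbf{s}_t) g_k(\textbf{s}_{t,D_k})\, d\textbf{s}_t = \int \Delta(\textbf{s}_t) G_F(\textbf{s}_t)\, d\textbf{s}_t$ by the definition $G_F(\textbf{s}_t) = \sum_{k=1}^{n_g} g_k(\textbf{s}_{t,D_k})$ and linearity, which is exactly Formula~\ref{formula-GV*}. The only real obstacle is the bookkeeping in the generalization of part~(3) to tuples of size greater than two: one must be careful that the "non-decomposable" structure of $g_k$ does not obstruct integrating out the coordinates that happen to be mutually independent, and that the integration order is legitimate (which holds since $F$, hence each $g_k$, is assumed deterministic and the densities are genuine conditional distributions). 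Everything else is a direct application of the already-established corollaries and linearity, so the proof is short once that lemma-level extension is in place.
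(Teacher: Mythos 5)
Your proposal is correct and follows essentially the same route as the paper, whose entire proof of this theorem is the single line ``It is trivial based on Corollary~\ref{corollary-integral}'' --- i.e., decompose $F$ via Formula~\ref{formula-V-partition} and kill the constant, single-coordinate, and independent multi-coordinate terms using the corollary. Your one substantive addition is the observation that Corollary~\ref{corollary-integral-app}(3) is stated only for pairs and must be extended to independent tuples of size $|D_k|>2$ (by integrating out one coordinate at a time); this is a real gap in the paper's one-line argument that you correctly identify and patch, so your write-up is if anything more complete than the original.
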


\begin{proof}
    It is trivial based on Corollary~\ref{corollary-integral}.
\end{proof}

\begin{lemma}
\label{lemma-multi-k}
    If $G_R(\textbf{s}_{t})\equiv 0$, and $P$ is $G-invariant$, then $\forall t\in\mathbb{N}^+$, we have
    \begin{align*}
        Q^*_{P}(\textbf{s}_{t},a_{t})=Q^*_{\hat{P}}(\textbf{s}_{t},a_{t})
    \end{align*}
\end{lemma}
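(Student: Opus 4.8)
The plan is to prove Lemma~\ref{lemma-multi-k} by backward induction on the time index, exploiting the finite-horizon structure implicit in the proof sketch (a terminal time $T$), and then noting that the argument extends to all $t$. The inductive statement I would carry is the conjunction of four claims at stage $i$: for all $a$, $Q^*_{P}(\textbf{s}_{T-i},a)=Q^*_{\hat P}(\textbf{s}_{T-i},a)$; $\pi^*_{P}(\textbf{s}_{T-i})=\pi^*_{\hat P}(\textbf{s}_{T-i})$; $V^*_{P}(\textbf{s}_{T-i})=V^*_{\hat P}(\textbf{s}_{T-i})$; and $G_{V^*_{P}}(\textbf{s}_{T-i})\equiv 0$. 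The first three follow from the fourth together with the Bellman optimality equations, so the crux is propagating the ``$G$-component vanishes'' property backwards in time.

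For the base case $i=1$ (equivalently, the last decision stage), $Q^*_{P}(\textbf{s}_{T-1},a)=R(\textbf{s}_{T-1},a)$ and $Q^*_{\hat P}(\textbf{s}_{T-1},a)=R(\textbf{s}_{T-1},a)$ since there is no future, so they agree, the argmax policies agree, and $V^*_{P}(\textbf{s}_{T-1})=\max_a R(\textbf{s}_{T-1},a)=R(\textbf{s}_{T-1},\pi^*_{P}(\textbf{s}_{T-1}))$. By hypothesis $G_R(\textbf{s}_{T-1})\equiv 0$ as a function of $\textbf{s}_{T-1}$ for every fixed policy, so $G_{V^*_{P}}(\textbf{s}_{T-1})\equiv 0$. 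For the inductive step, assume the four claims at stage $i=k$. Then
\begin{align*}
Q^*_{P}(\textbf{s}_{T-k-1},a)
&= R(\textbf{s}_{T-k-1},a)+\gamma\!\int\! P(\textbf{s}_{T-k}|\textbf{s}_{T-k-1},a)\,V^*_{P}(\textbf{s}_{T-k})\,d\textbf{s}_{T-k},\\
Q^*_{\hat P}(\textbf{s}_{T-k-1},a)
&= R(\textbf{s}_{T-k-1},a)+\gamma\!\int\! \hat P(\textbf{s}_{T-k}|\textbf{s}_{T-k-1},a)\,V^*_{\hat P}(\textbf{s}_{T-k})\,d\textbf{s}_{T-k}.
\end{align*}
By the inductive hypothesis $V^*_{P}=V^*_{\hat P}$ pointwise, so I may replace $V^*_{\hat P}$ by $V^*_{P}$ in the second line; subtracting the two expressions gives $Q^*_{P}(\textbf{s}_{T-k-1},a)-Q^*_{\hat P}(\textbf{s}_{T-k-1},a)=\gamma\int[P-\hat P]\,V^*_{P}\,d\textbf{s}_{T-k}$. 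Since $G_{V^*_{P}}(\textbf{s}_{T-k})\equiv 0$ by hypothesis, Theorem~\ref{theorem-GV*} (with $F=V^*_{P}$ and $G_F\equiv 0$) forces this integral to vanish, hence $Q^*_{P}=Q^*_{\hat P}$ at stage $k+1$; equal $Q$-functions give equal argmaxes and equal value functions.

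It remains to verify $G_{V^*_{P}}(\textbf{s}_{T-k-1})\equiv 0$, and this is exactly where $G$-invariance of $P$ is used and is the main obstacle of the argument. Writing $V^*_{P}(\textbf{s}_{T-k-1})=R(\textbf{s}_{T-k-1},\pi^*_{P}(\textbf{s}_{T-k-1}))+\gamma\int P(\textbf{s}_{T-k}|\textbf{s}_{T-k-1},\pi^*_{P}(\textbf{s}_{T-k-1}))V^*_{P}(\textbf{s}_{T-k})\,d\textbf{s}_{T-k}$, the first term has vanishing $G$-component by the reward hypothesis $G_R\equiv 0$; for the second term, I apply the definition of $G$-invariance directly with $F_1=V^*_{P}$ (which satisfies $G_{F_1}\equiv 0$ by the inductive hypothesis) and $\pi=\pi^*_{P}$, concluding that $F_2(\textbf{s}_{T-k-1})=\int P(\textbf{s}_{T-k}|\textbf{s}_{T-k-1},\pi^*_{P}(\textbf{s}_{T-k-1}))V^*_{P}(\textbf{s}_{T-k})\,d\textbf{s}_{T-k}$ also has $G_{F_2}\equiv 0$; since $G_{\cdot}$ is linear in its argument (it simply collects the non-separable multivariate pieces of the additive decomposition~\ref{formula-V-partition}), the sum of the two has vanishing $G$-component, closing the induction. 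Finally, because the identity $Q^*_{P}(\textbf{s}_t,a_t)=Q^*_{\hat P}(\textbf{s}_t,a_t)$ holds at every stage $T-i$ and $T$ may be taken arbitrarily large (or the environment is genuinely infinite-horizon with $\gamma<1$, in which case one passes to the limit using the contraction property of the Bellman operator applied to the already-established stagewise agreement), the conclusion holds for all $t\in\mathbb{N}^+$. The one subtlety worth flagging is that the decomposition in Formula~\ref{formula-V-partition} — and hence the operator $G_{\cdot}$ — must be well-defined and unique (up to the separable remainder) for $V^*_{P}$; I would either assume this regularity or note that $V^*_{P}$ inherits it from $R$ and $P$ via the Bellman recursion.
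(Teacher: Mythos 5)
Your proof is correct and follows essentially the same route as the paper's: backward induction from the terminal stage, carrying the four claims (equal $Q^*$, equal argmax, equal $V^*$, and $G_{V^*_{P}}\equiv 0$), using Theorem~\ref{theorem-GV*} to equate the $P$- and $\hat P$-expectations of $V^*_{P}$, and invoking $G$-invariance to propagate the vanishing $G$-component backwards. If anything, you are slightly more explicit than the paper about where $G$-invariance and the linearity of the decomposition enter, and about the regularity needed for $G_{\cdot}$ to be well-defined.
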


\begin{proof}
First, when $t=T-1$, we have
\begin{align*}
    Q^*_{P}(\textbf{s}_{T-1},a) &= R(\textbf{s}_{T-1},a) = Q^*_{\hat{P}}(\textbf{s}_{T-1},a), \ \forall a\\
\end{align*}
Therefore, we can get
\begin{align*}
    \pi^*_{P}(\textbf{s}_{T-1})=&\arg\max_{a}Q^*_{P}(\textbf{s}_{T-1},a)\\
    =&\arg\max_{a}Q^*_{\hat{P}}(\textbf{s}_{T-1},a)\\
    =&\pi^*_{\hat{P}}(\textbf{s}_{T-1})\\
    V^*_{P}(\textbf{s}_{T-1}) 
    =& Q^*_P(\textbf{s}_{T-1}, \pi^*_{P}(\textbf{s}_{T-1}))\\
    =&Q^*_P(\textbf{s}_{T-1}, \pi^*_{\hat{P}}(\textbf{s}_{T-1}))\\
    =&Q^*_{\hat{P}}(\textbf{s}_{T-1}, \pi^*_{\hat{P}}(\textbf{s}_{T-1}))\\
    =&V^*_{\hat{P}}(\textbf{s}_{T-1}).
\end{align*}

Since $G_R(\textbf{s}_{T-1})\equiv 0$, we have $G_{V^*_{P}}(\textbf{s}_{T-1})\equiv 0$.

Suppose that for $t=T-i,i = 1,\cdots, T$, we have
\begin{align*}
 Q^*_{P}(\textbf{s}_{T-i},a) =& Q^*_{\hat{P}}(\textbf{s}_{T-i},a), \ \forall a\\
\pi^*_{P}(\textbf{s}_{T-i})=&\pi^*_{\hat{P}}(\textbf{s}_{T-i})\\
V^*_{P}(\textbf{s}_{T-i}) =&V^*_{\hat{P}}(\textbf{s}_{T-i})\\
G_{V^*_{P}}(\textbf{s}_{T-i})\equiv& 0,
\end{align*}
and then we can get that 
\begin{align*}
    &Q^*_{P} (\textbf{s}_{T-i-1},a) \\
    =& R(\textbf{s}_{T-i-1},a) + \int P(\textbf{s}_{T-i}|\textbf{s}_{T-i-1},a)V^*_{P}(\textbf{s}_{T-i})d\textbf{s}_{T-i}\\
    =& R(\textbf{s}_{T-i-1},a) + \int \hat{P}(\textbf{s}_{T-i}|\textbf{s}_{T-i-1},a)V^*_{P}(\textbf{s}_{T-i})d\textbf{s}_{T-i}\\
    =& R(\textbf{s}_{T-i-1},a) + \int \hat{P}(\textbf{s}_{T-i}|\textbf{s}_{T-i-1},a)V^*_{\hat{P}}(\textbf{s}_{T-i})d\textbf{s}_{T-i}\\
    =&Q^*_{\hat{P}} (\textbf{s}_{T-i-1},a).
\end{align*}
And we also have
\begin{align*}
Q^*_{P}(\textbf{s}_{T-i-1},a) = &Q^*_{\hat{P}}(\textbf{s}_{T-i-1},a), \ \forall a\\
    \pi^*_{P}(\textbf{s}_{T-i-1})=&\pi^*_{\hat{P}}(\textbf{s}_{T-i-1})\\
    V^*_{P}(\textbf{s}_{T-i-1})=&V^*_{\hat{P}}(\textbf{s}_{T-i-1})\\
    G_{V^*_{P}}(\textbf{s}_{T-i-1})\equiv& 0.
\end{align*}
By induction, for $t=1, \cdots, T-1$, we have $Q^*_{P} (\textbf{s}_{t},a_t) = Q^*_{\hat{P}} (\textbf{s}_{t},a_t)$.
\end{proof}

\begin{theorem}
\label{multi-app}
     In a RL environment, if for any policy $\pi$, the stochastic transition function $P(\textbf{s}_t|\textbf{s}_{t-1},\pi(\textbf{s}_{t-1}))$ is $G-invariant$ and the reward function $R(\textbf{s}_{t},\pi(\textbf{s}_{t}))$ as a function of $\textbf{s}_{t}$ satisfies that $G_R(\textbf{s}_{t})\equiv 0$, then we have $\beta = 0$ for any $(\textbf{s}_t,a_0,a_1)$.
\end{theorem}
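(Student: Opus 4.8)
The plan is to derive the statement almost immediately from Lemma~\ref{lemma-multi-k}. Unwinding the definitions of $\alpha$ and $\beta$, we have
\begin{align*}
\beta = \big(Q^*_{\hat P}(\tilde{\textbf{s}},a_0)-Q^*_{P}(\tilde{\textbf{s}},a_0)\big) - \big(Q^*_{\hat P}(\tilde{\textbf{s}},a_1)-Q^*_{P}(\tilde{\textbf{s}},a_1)\big),
\end{align*}
so it suffices to show that $Q^*_{P}$ and $Q^*_{\hat P}$ coincide at every state--action pair; then both parenthesised terms vanish and $\beta=0$ regardless of $\tilde{\textbf{s}}$, $a_0$, $a_1$.

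First I would verify that the hypotheses assumed here are precisely what Lemma~\ref{lemma-multi-k} needs: $G_R(\textbf{s}_t)\equiv 0$ and $G$-invariance of the transition $P(\cdot\mid\textbf{s}_{t-1},\pi(\textbf{s}_{t-1}))$ for \emph{every} policy $\pi$. The ``for any policy'' quantifier is what makes the backward induction in Lemma~\ref{lemma-multi-k} go through, because that induction propagates $G_{V^*_{P}}(\textbf{s}_t)\equiv 0$ through the Bellman recursion
\begin{align*}
V^*_{P}(\textbf{s}_{t}) = R(\textbf{s}_t,\pi^*_{P}(\textbf{s}_t)) + \int P(\textbf{s}_{t+1}\mid \textbf{s}_t,\pi^*_{P}(\textbf{s}_t))\,V^*_{P}(\textbf{s}_{t+1})\,d\textbf{s}_{t+1},
\end{align*}
where the greedy policy $\pi^*_{P}$ (and likewise $\pi^*_{\hat P}$, and the constant policies used when expanding $Q^*$ at a fixed action) is not known a priori and could be an arbitrary deterministic policy. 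With this in hand, Lemma~\ref{lemma-multi-k} yields $Q^*_{P}(\textbf{s}_t,a_t)=Q^*_{\hat P}(\textbf{s}_t,a_t)$ for all $t\in\mathbb{N}^+$ and all $(\textbf{s}_t,a_t)$: at the terminal step this is just $R=R$, and the inductive step is exactly where $G$-invariance of $P$ together with Theorem~\ref{theorem-GV*} is used.

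Finally I would instantiate this identity at $a_t=a_0$ and $a_t=a_1$ for the time index of $\tilde{\textbf{s}}$, so that each term $Q^*_{\hat P}(\tilde{\textbf{s}},a_i)-Q^*_{P}(\tilde{\textbf{s}},a_i)$ is zero, and read off $\beta=0$. The only ``obstacle'' here is conceptual bookkeeping --- confirming that the reduction above is valid and that the quantifier over policies in the $G$-invariance assumption is exactly the one consumed by Lemma~\ref{lemma-multi-k}; once the lemma is invoked there is no computation left.
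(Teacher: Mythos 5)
Your proposal is correct and is essentially identical to the paper's own proof: both rewrite $\beta$ as the difference of the two gaps $Q^*_{\hat P}(\tilde{\textbf{s}},a_i)-Q^*_{P}(\tilde{\textbf{s}},a_i)$ and then invoke Lemma~\ref{lemma-multi-k} to conclude that $Q^*_{P}$ and $Q^*_{\hat P}$ coincide everywhere, so $\beta=0$. The extra bookkeeping you do about the ``for any policy'' quantifier is a sensible sanity check but does not change the argument.
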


\begin{proof}
Based on the Lemma~\ref{lemma-multi-k}, we have 
\begin{align*}
Q^*_{P}(\textbf{s}_{t-1},a_0)&=Q^*_{\hat{P}}(\textbf{s}_{t-1},a_0)\\
Q^*_{P}(\textbf{s}_{t-1},a_1)&=Q^*_{\hat{P}}(\textbf{s}_{t-1},a_1).
\end{align*}
Therefore we have $\beta = 0$.
\end{proof}

\begin{lemma}
    In a single-step RL, for any stochastic transition function $P$, any $(\textbf{s}_{t-1},a_0,a_1)$, any reward function $R$, any function $f$ and state $s_{t,i}$,
    we have 
    \begin{align*}
        &\beta(R(\textbf{s}_{t-1},a_{t-1},\textbf{s}_t))=\beta(R(\textbf{s}_{t-1},a_{t-1},\textbf{s}_t)+f(s_{t,i}))\\
        &\beta(R)=-\beta(-R)
    \end{align*}
\end{lemma}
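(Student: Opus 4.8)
The plan is to unpack the definitions of $\alpha$ and $\beta$ in the single-step setting and exploit the linearity of the $Q$-value in the reward, together with Corollary~\ref{corollary-integral}. In single-step RL, the optimal $Q$-value is simply $Q^*_P(\textbf{s}_{t-1},a) = \int P(\textbf{s}_t|\textbf{s}_{t-1},a) R(\textbf{s}_{t-1},a,\textbf{s}_t)\,d\textbf{s}_t$ (with the game terminating after one step, so $V^*$ on the next state is $0$), and likewise for $\hat{P}$. Hence
\begin{align*}
\beta(R) = &\int [\hat{P}(\textbf{s}_t|\textbf{s}_{t-1},a_0) - P(\textbf{s}_t|\textbf{s}_{t-1},a_0)] R(\textbf{s}_{t-1},a_0,\textbf{s}_t)\,d\textbf{s}_t\\
&- \int [\hat{P}(\textbf{s}_t|\textbf{s}_{t-1},a_1) - P(\textbf{s}_t|\textbf{s}_{t-1},a_1)] R(\textbf{s}_{t-1},a_1,\textbf{s}_t)\,d\textbf{s}_t,
\end{align*}
which is manifestly linear in $R$. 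The identity $\beta(R) = -\beta(-R)$ then follows immediately from this linearity (it is just homogeneity of degree one under the scalar $-1$).

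For the second identity, I would add $f(s_{t,i})$ to $R$ and track how each of the two integrals above changes. The increment to $\beta$ is
\begin{align*}
\beta(R + f) - \beta(R) = &\int [\hat{P}(\textbf{s}_t|\textbf{s}_{t-1},a_0) - P(\textbf{s}_t|\textbf{s}_{t-1},a_0)] f(s_{t,i})\,d\textbf{s}_t\\
&- \int [\hat{P}(\textbf{s}_t|\textbf{s}_{t-1},a_1) - P(\textbf{s}_t|\textbf{s}_{t-1},a_1)] f(s_{t,i})\,d\textbf{s}_t.
\end{align*}
By part (2) of Corollary~\ref{corollary-integral}, each of these two integrals vanishes, since $f$ depends on only one coordinate $s_{t,i}$ and $P,\hat{P}$ agree on every one-dimensional marginal. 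Hence $\beta(R+f) = \beta(R)$, which is exactly Formula~\ref{formual-beta-f}. A small bookkeeping point: the $f$ added to the reward must be treated as a function of $\textbf{s}_t$ alone (not of the action), so the same $f$ is added regardless of whether we evaluate at $a_0$ or $a_1$; this is what makes the two cancellations align.

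I do not expect any serious obstacle here; the lemma is essentially a direct consequence of linearity of the single-step $Q$-value in $R$ plus the marginal-matching property already established in Corollary~\ref{corollary-integral}(2). The only mild subtlety worth spelling out is why, in single-step RL, the next-state value term drops out of $Q^*$ — i.e., confirming that the horizon-one assumption really does reduce $Q^*_P(\textbf{s}_{t-1},a)$ to the one-step reward integral — so I would state that reduction explicitly at the start and then the rest is a two-line computation for each identity.
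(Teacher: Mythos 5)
Your proof is correct and follows the same route the paper intends: the paper dismisses this lemma as ``trivial based on Corollary~\ref{corollary-integral}'', and your argument is precisely the spelled-out version of that, reducing the single-step $Q^*$ to the one-step reward integral, noting linearity of $\beta$ in $R$ for the sign identity, and invoking part (2) of the corollary to kill the $f(s_{t,i})$ increment. The only detail you add beyond the paper is the (worthwhile) remark that $f$ must be action-independent so the cancellation applies at both $a_0$ and $a_1$.
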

\begin{proof}
    It is trivial based on Corollary~\ref{corollary-integral}.
\end{proof}

\begin{theorem}
\label{single-DR}
    In a single-step RL, for any stochastic transition function $P$ with instantaneous dependence, any $(\textbf{s}_{t-1},a_0,a_1)$, we have
    $\mathcal{D}_R \neq \emptyset$.
\end{theorem}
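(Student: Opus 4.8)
The plan is to exploit the two properties of $\beta$ established in the preceding lemma together with Corollary~\ref{corollary-integral}, part (4). Since $P$ has instantaneous dependence, fix an instantaneous-dependent pair $(s_{t,i},s_{t,j})$ and, by Corollary~\ref{corollary-integral}(4), choose a function $g(s_{t,i},s_{t,j})$ that does not split additively and for which
\begin{align*}
    \delta:=\int [P(\textbf{s}_t|\textbf{s}_{t-1},a_0)-\hat{P}(\textbf{s}_t|\textbf{s}_{t-1},a_0)]g(s_{t,i},s_{t,j})\,d\textbf{s}_t\neq 0.
\end{align*}
In single-step RL, $Q^*_{P}(\textbf{s}_{t-1},a)=\int P(\textbf{s}_t|\textbf{s}_{t-1},a)R(\textbf{s}_{t-1},a,\textbf{s}_t)\,d\textbf{s}_t$ and likewise for $\hat{P}$, so taking the reward $R_g(\textbf{s}_{t-1},a,\textbf{s}_t)=g(s_{t,i},s_{t,j})$ (independent of the action) makes $\alpha(R_g)=0$ but $\beta(R_g)=-\delta$ plus the analogous $a_1$-term; by replacing $g$ with $g$ supported appropriately — or simply using that the $a_1$ integral can be absorbed — we may assume $\beta(R_g)\neq 0$. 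If $\beta(R_g)>0$, replace $R_g$ by $-R_g$ and use $\beta(-R_g)=-\beta(R_g)<0$ from Formula~\ref{formual-beta-r}; so WLOG there is a reward $R_0$ with $\beta(R_0)<0$ and $\alpha(R_0)=0$.

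Next I would perturb $R_0$ to make $\alpha$ strictly positive without disturbing $\beta$. By Formula~\ref{formual-beta-f}, adding any single-variable term $f(s_{t,i})$ leaves $\beta$ unchanged: $\beta(R_0+f(s_{t,i}))=\beta(R_0)<0$. It remains to pick $f$ so that $0<\alpha(R_0+f(s_{t,i}))<-\beta(R_0)$. Now $\alpha$ depends linearly on the reward, and $\alpha(f(s_{t,i}))=\int[P(\textbf{s}_t|\textbf{s}_{t-1},a_0)-P(\textbf{s}_t|\textbf{s}_{t-1},a_1)]f(s_{t,i})\,d\textbf{s}_t=\int[P(s_{t,i}|\textbf{s}_{t-1},a_0)-P(s_{t,i}|\textbf{s}_{t-1},a_1)]f(s_{t,i})\,ds_{t,i}$. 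Provided the two marginal distributions of $s_{t,i}$ under $a_0$ and $a_1$ differ, one can choose $f$ making this quantity any desired real number, in particular a value in the open interval $(-\alpha(R_0),\,-\beta(R_0)-\alpha(R_0))=(0,-\beta(R_0))$; scaling $f$ down handles the case where the natural choice overshoots. Hence $\alpha(R_0+f)\in(0,-\beta(R_0))=(0,-\beta(R_0+f))$, so $R_0+f\in\mathcal{D}_R$ and $\mathcal{D}_R\neq\emptyset$.

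The main obstacle is the degenerate case where the action has \emph{no} effect on the relevant marginals — i.e. $P(s_{t,i}|\textbf{s}_{t-1},a_0)=P(s_{t,i}|\textbf{s}_{t-1},a_1)$ for every $i$ — in which case the single-variable perturbation cannot move $\alpha$ at all. I would handle this by instead perturbing with a second non-separable function $g'(s_{t,k},s_{t,m})$ on another (or the same) dependent pair via Corollary~\ref{corollary-integral}(4): if the full joint $P(\cdot|a_0)$ and $P(\cdot|a_1)$ coincide then $a_0$ and $a_1$ induce identical transitions, $\pi^*_P$ and $\pi^*_{\hat P}$ agree, and the hypothesis $a_0\neq a_1$ in a nontrivial sense fails; so one can assume the joints differ and push the mass difference onto a term that changes $\alpha$ while leaving $\beta$ controlled. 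The cleanest route is probably to note that if $a_0=\pi^*_P(\tilde{\textbf s})\neq\pi^*_{\hat P}(\tilde{\textbf s})=a_1$ is to be meaningful the two transition kernels must differ, and then combine the freedom from Formula~\ref{formual-beta-f} (move $\alpha$) with Formula~\ref{formual-beta-r} (fix the sign of $\beta$) exactly as above. I would keep the bulk of the argument at this level of generality and relegate the explicit construction of $f$ to a short computation.
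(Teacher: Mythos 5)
Your proposal follows essentially the same route as the paper's proof: obtain a reward with $\beta<0$ via the sign-flip identity $\beta(R)=-\beta(-R)$, then add a single-variable term $f(s_{t,i})$, which leaves $\beta$ unchanged but shifts $\alpha$ linearly, and tune it into the interval $(0,-\beta)$. The only slip --- the claim that $\alpha(R_g)=0$ for an action-independent reward, which is false since $Q^*_{P}(\cdot,a_0)$ and $Q^*_{P}(\cdot,a_1)$ still differ through the transition --- is harmless because your target interval $(-\alpha(R_0),\,-\beta(R_0)-\alpha(R_0))$ already accounts for a nonzero $\alpha(R_0)$, and your explicit treatment of the degenerate case where the marginals of $s_{t,i}$ coincide under both actions is in fact more careful than the paper, which simply asserts that a suitable $f$ with nonzero shift exists.
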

\begin{proof}
proof by contradiction:

If there exist $P,\textbf{s}_{t-1},a_0,a_1$ such that $\mathcal{D}_R=\emptyset$,
then there must exists a reward function $R_0$ that $R_0\notin \mathcal{D}_R$, which means $\alpha(R_0)\leq 0$ or $\alpha(R_0)\geq -\beta(R_0)$.
Since there is instantaneous dependence in $P$, we assume $\beta(R_0)\neq 0$ WLOG.
In the following, we will design a new reward function $\tilde{R}$ such that $\tilde{R}\subset \mathcal{D}_R$.

First, if $\beta(R_0)<0$, we let $R'_0=-R_0$ and have $\beta(R'_0)=-\beta(R_0)>0$.
Therefore we assume $\beta(R_0)>0$ WLOG.

Second, since $P(\textbf{s}_t|\textbf{s}_{t-1},a_0)\nequiv P(\textbf{s}_t|\textbf{s}_{t-1},a_1)$, there exists a function $f(s_{t,i})$ such that 
\begin{align*}
    &\int [P(\textbf{s}_t|\textbf{s}_{t-1},a_0)- P(\textbf{s}_t|\textbf{s}_{t-1},a_1)]f(s_{t,i})d\textbf{s}_{t}
    \neq0.
\end{align*}
We denote 
$$K=\int [P(\textbf{s}_t|\textbf{s}_{t-1},a_0)- P(\textbf{s}_t|\textbf{s}_{t-1},a_1)]f(s_{t,i})d\textbf{s}_{t}$$ 
and assume $K>0$ WLOG.
We define 
\begin{align*}
    R_1(\textbf{s}_{t-1},a_{t-1},\textbf{s}_t) = R_0(\textbf{s}_{t-1},a_{t-1},\textbf{s}_t)+xf(s_{t,i}),
\end{align*}
and have that
\begin{align*}
    &\alpha(R_1)=\alpha(R_0)+xK\\
    &\beta(R_1)=\beta(R_0).
\end{align*}
We choose $x\in (-\frac{\alpha(R_0)}{K}, -\frac{\alpha(R_0)+\beta(R_0)}{K})$ and have
\begin{align*}
&\left\{
\begin{array}{cc}
     &\alpha(R_1)>0\\
    &\alpha(R_1)+\beta(R_1)<0\\
\end{array}
\right.\\
    \Rightarrow& R_1\in \mathcal{D}_R.
\end{align*}
Therefore, $\mathcal{D}_R\neq \emptyset$.

Since 
\begin{align*}
    J(\pi^*_{P})-J(\pi^*_{\hat{P}})\geq \alpha,
\end{align*}
we can find that the performance gap of the two models is unbounded.
\end{proof}

\begin{theorem}
\label{multi-DR}
    For any stochastic and $G-variant$ transition function $P$ with instantaneous dependence, any $(\textbf{s}_{t-1},a_0,a_1)$, we have
    $\mathcal{D}_R \neq \emptyset$.
\end{theorem}
\begin{proof}
    Similarly, we can add $f(s_{t,i})$ to $R(\textbf{s}_{t-1},a_t,\textbf{s}_t)$ with which $\beta$ is invariant and $\alpha$ changes.
\end{proof}

\section{Experiment}
\subsection{Experiment details}
Suppose that a car moves in a 1-D space with observations $\textbf{s}_t=(v_t,p_t)$ and actions $a_{t}$ where $p_t$ represents the position, $v_t$ represents the velocity and $a_{t}\subset\{a_0,a_1\}$ represents different acceleration process.
The transition function is that 
\begin{align*}
v_t&=v_{t-1}+(-\frac{p_{t-1}}{|p_{t-1}|})\Delta v(a_{t-1})+g(a_{t-1})\epsilon_v \\
    p_t&=p_{t-1}+v_t\Delta t +\epsilon_p,
\end{align*}
where $\Delta v(a_{t-1})$ represents the velocity change and $g(a_{t-1})$ represents the noise magnitude under acceleration $a_{t-1}$.
$(-\frac{p_{t-1}}{|p_{t-1}|})$ is the coefficient of $\Delta v(a_{t-1})$ to make sure that actions always help the position to approximate the origin.  
We set the reward function as $R(p_{t},v_t,p_{t-1},v_{t-1})=p_tv_t-p_{t-1}v_{t-1}$, which implies that $V(p_t,v_t)=p_tv_t$, and then we can extend the function $0<\alpha<-\beta$.

which can be extended to:
    \begin{align*}
    \frac{p_{t-1}}{\Delta t}+2v_{t-1}<&-(\Delta v(a_1)+\Delta v(a_0))\\
    \frac{p_{t-1}}{\Delta t}+2v_{t-1}>&-(\Delta v(a_1)+\Delta v(a_0))\\
    &-\sigma_v^2\frac{g^2(a_1)-g^2(a_0)}{\Delta v(a_1)-\Delta v(a_0)}.
\end{align*}

\subsection{Another example for 1-D Driving}
Let us give a simple example for above analysis.
Suppose that a car moves in a 1-D space with observations $\textbf{s}_t=(v_t,p_t)$ and actions $a_{t}$ where $p_t$ represents the position, $v_t$ represents the velocity and $a_{t}\subset\{a_0,a_1\}$ represents different acceleration process.
There exists instantaneous effect that 
\begin{align*}
v_t&=v_{t-1}+\Delta v(a_{t-1})+g(a_{t-1})\epsilon_v \\
    p_t&=p_{t-1}+v_t\Delta t +\epsilon_p,
\end{align*}
where $\Delta v(a_{t-1})$ represents the velocity change and $g(a_{t-1})$ represents the noise magnitude under acceleration $a_{t-1}$.

\begin{lemma}
Suppose that $\textbf{x} \sim \mathcal{N}(\mu,\Sigma)$, we have
\begin{align*}
    \int_\textbf{x} p(\textbf{x})\textbf{x}d\textbf{x}&=\mu, \\
    \int_\textbf{x} p(\textbf{x})\textbf{x}^TA\textbf{x}d\textbf{x}&=\mu^TA\mu + Tr(A\Sigma),
\end{align*}
where $Tr(\cdot)$ is the trace of a matrix.
\end{lemma}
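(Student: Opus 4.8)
The plan is to treat the two identities separately, both following from elementary properties of the multivariate Gaussian. The first identity, $\int_\textbf{x} p(\textbf{x})\textbf{x}\,d\textbf{x} = \mu$, is simply the statement that $\mu$ is the mean of $\mathcal{N}(\mu,\Sigma)$, so nothing beyond the definition is needed; alternatively one can substitute $\textbf{z} = \textbf{x} - \mu$ and use the fact that the centered Gaussian density is symmetric about the origin, whence $\int \textbf{z}\, p_{\mathcal{N}(\bm{0},\Sigma)}(\textbf{z})\,d\textbf{z} = \bm{0}$, and then $\int_\textbf{x} p(\textbf{x})\textbf{x}\,d\textbf{x} = \mu + \bm{0} = \mu$.

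For the second identity I would first write $\textbf{x} = \mu + \textbf{z}$ with $\textbf{z}\sim\mathcal{N}(\bm{0},\Sigma)$ and expand the quadratic form:
\begin{align*}
    \textbf{x}^TA\textbf{x} = \mu^TA\mu + \mu^TA\textbf{z} + \textbf{z}^TA\mu + \textbf{z}^TA\textbf{z}.
\end{align*}
Integrating against $p(\textbf{x})\,d\textbf{x}$, equivalently against $p_{\mathcal{N}(\bm{0},\Sigma)}(\textbf{z})\,d\textbf{z}$, and using linearity: the first term contributes the constant $\mu^TA\mu$; the two cross terms are linear in $\textbf{z}$ and hence vanish because $\mathbb{E}[\textbf{z}]=\bm{0}$ by the first identity applied to the centered Gaussian. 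It then remains to evaluate $\int \textbf{z}^TA\textbf{z}\, p_{\mathcal{N}(\bm{0},\Sigma)}(\textbf{z})\,d\textbf{z}$. Here I would use that a scalar equals its own trace together with the cyclic property of the trace, $\textbf{z}^TA\textbf{z} = Tr(\textbf{z}^TA\textbf{z}) = Tr(A\textbf{z}\textbf{z}^T)$, so that by linearity of the trace and of the integral this term equals $Tr\big(A\,\mathbb{E}[\textbf{z}\textbf{z}^T]\big) = Tr(A\Sigma)$, using $\mathbb{E}[\textbf{z}\textbf{z}^T] = \Sigma$ by definition of the covariance. Summing the four contributions yields $\mu^TA\mu + Tr(A\Sigma)$.

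There is no genuine obstacle here: the computation is entirely routine, and the only two points that warrant a word of justification are the vanishing of the first moment of a centered Gaussian (symmetry of the density) and the trace manipulation $\mathbb{E}[\textbf{z}^TA\textbf{z}] = Tr(A\Sigma)$. Neither uses Gaussianity beyond knowledge of the first two moments, so in fact both identities hold for \emph{any} distribution with mean $\mu$ and covariance $\Sigma$; the Gaussian hypothesis is stated only because that is the setting in which the lemma is applied.
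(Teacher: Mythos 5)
Your proof is correct and complete: the paper itself states this lemma without any proof (it is used as a standard fact about first and second moments), so there is nothing to compare against, and your argument --- centering via $\textbf{x}=\mu+\textbf{z}$, killing the cross terms with $\mathbb{E}[\textbf{z}]=\bm{0}$, and evaluating $\mathbb{E}[\textbf{z}^TA\textbf{z}]=Tr(A\Sigma)$ by the cyclic property of the trace --- is exactly the standard derivation one would supply. Your closing observation that Gaussianity is not actually needed (only the first two moments enter) is accurate and, if anything, slightly sharper than the statement as given.
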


Based on the lemma above, we suppose $V(\textbf{s}_t)=\textbf{s}_t^T A\textbf{s}_t$ and have that 
\begin{align*}
&\left\{
\begin{array}{c}
    Q^*_{P}(\tilde{\textbf{s}},a_0)-Q^*_{P}(\tilde{\textbf{s}},a_1)>0\\
       Q^*_{\hat{P}}(\tilde{\textbf{s}},a_0)-Q^*_{\hat{P}}(\tilde{\textbf{s}},a_1)<0\\
\end{array}
\right.\\
   \Leftrightarrow &   
   \left\{
   \begin{array}{c}
    \mu_1^TA\mu_1 + Tr(A\Sigma_1) - [\mu_0^TA\mu_0 + Tr(A\Sigma_0)]>0\\
    \mu_1^TA\mu_1 + Tr(A\hat{\Sigma}_1)- [\mu_0^TA\mu_0 + Tr(A\hat{\Sigma}_0)]<0,
\end{array}
\right.
\end{align*}

Assume that $A$ of the optimal value function $V^*$ satisfies $A_{1,1}=A_{2,2}=0$, $A_{1,2}+A_{2,1}>0$ and $\Delta v(a_1)-\Delta v(a_0)>0$, $g^2(a_1)-g^2(a_0)>0$, the above conditions are equivalent to
\begin{align*}
    &\mu_1^TA\mu_1 + Tr(A\hat{\Sigma}_1)- [\mu_0^TA\mu_0 + Tr(A\hat{\Sigma}_0)]<0\\
\Leftrightarrow&(A_{1,2}+A_{2,1})[(v_{t-1}+\Delta v(a_1))(p_{t-1}+v_{t-1}\Delta t+\Delta v(a_1)\Delta t)\\
& -(v_{t-1}+\Delta v(a_0))(p_{t-1}+v_{t-1}\Delta t+\Delta v(a_0)\Delta t)]<0\\
\Leftrightarrow&(A_{1,2}+A_{2,1})(\Delta v(a_1)-\Delta v(a_0))\Delta t\\
&[\frac{p_{t-1}}{\Delta t}+2v_{t-1}+\Delta v(a_1)+\Delta v(a_0)]<0.
\end{align*}
Similarly, 
\begin{align*}
    &\mu_1^TA\mu_1 + Tr(A\Sigma_1) - [\mu_0^TA\mu_0 + Tr(A\Sigma_0)]>0\\
\Leftrightarrow&(A_{1,2}+A_{2,1})[(v_{t-1}+\Delta v(a_1))(p_{t-1}+v_{t-1}\Delta t+\Delta v(a_1)\Delta t)\\
& -(v_{t-1}+\Delta v(a_0))(p_{t-1}+v_{t-1}\Delta t+\Delta v(a_0)\Delta t)\\
&+g^2(a_1)\Delta t\sigma_v^2-g^2(a_0)\Delta t\sigma_v^2]>0\\
\Leftrightarrow&(A_{1,2}+A_{2,1})(\Delta v(a_1)-\Delta v(a_0))\Delta t\\
&[\frac{p_{t-1}}{\Delta t}+2v_{t-1}+\Delta v(a_1)+\Delta v(a_0)\\
&+\sigma_v^2\frac{g^2(a_1)-g^2(a_0)}{\Delta v(a_1)-\Delta v(a_0)}]>0 \\
\end{align*}
In summary, for states $(p_{t-1},v_{t-1})$ in the area that
\begin{align*}
    \frac{p_{t-1}}{\Delta t}+2v_{t-1}<&-(\Delta v(a_1)+\Delta v(a_0))\\
    \frac{p_{t-1}}{\Delta t}+2v_{t-1}>&-(\Delta v(a_1)+\Delta v(a_0))\\
    &-\sigma_v^2\frac{g^2(a_1)-g^2(a_0)}{\Delta v(a_1)-\Delta v(a_0)},
\end{align*}
the optimal policy learned from a lagged model is different from the ground truth optimal policy.
We learn the optimal policy $\pi^*_{\hat{P}}$ from a lagged model and $\pi^*_{P}$ from the true environment, and plot the state-action region where the two policies choose different actions in Figure~\ref{Fig-driving-action-all}.

\begin{figure}[ht!]
    \centering
    \includegraphics[width=\linewidth]{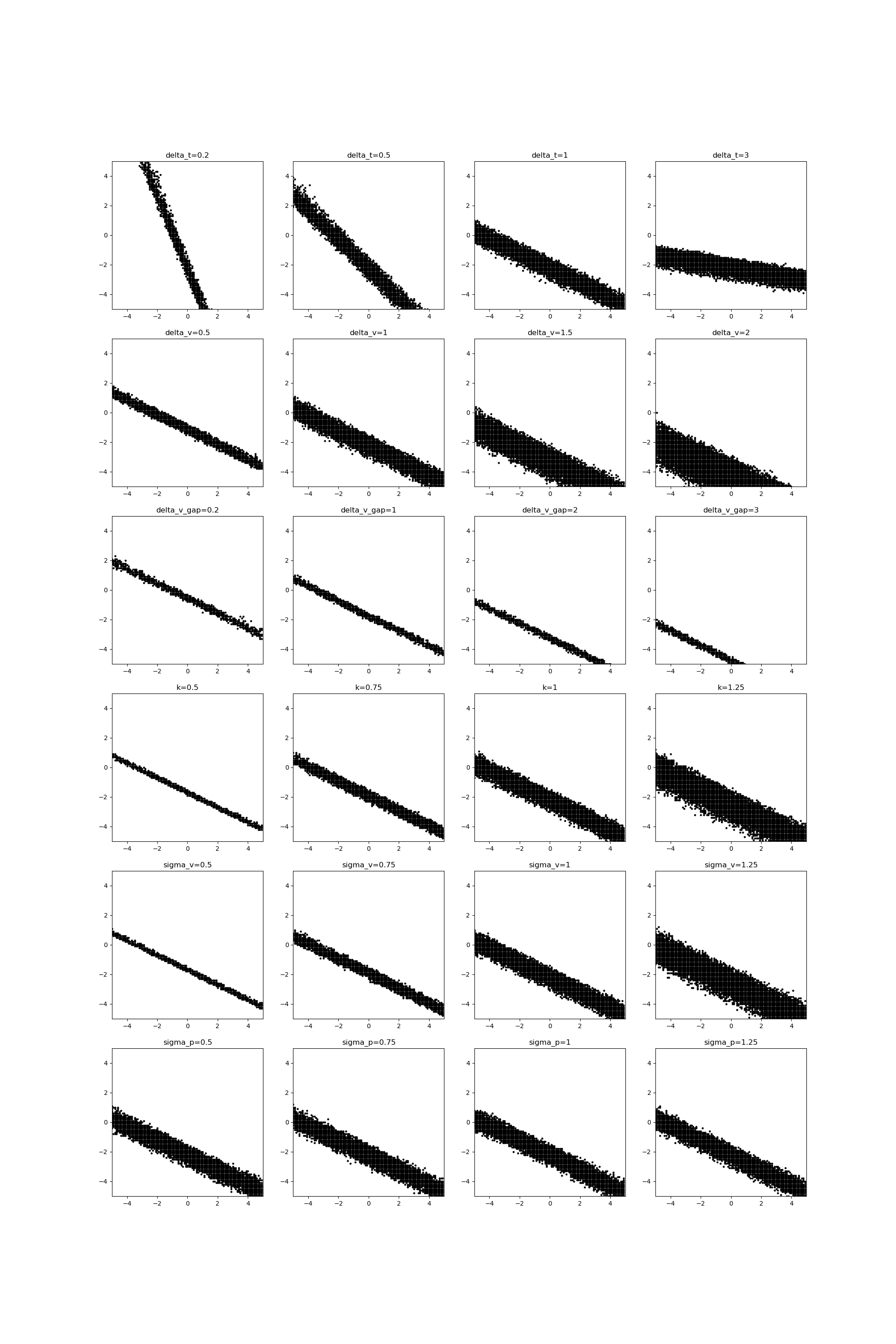}
    \caption{The collection of regions with different parameters.
    Each row represents one parameter and each column represents one value of the parameter.}
    \label{Fig-driving-action-all}
\end{figure}

From the definitions, we can easily find the task is to approximate the coordinate origin with small velocity and $a_1$ represents coarse tune while $a_0$ represents fine tune for the velocity of the car.

\end{document}